\documentclass{article}
\usepackage{comment}
\usepackage{fullpage}
\usepackage{subcaption}
\usepackage{graphicx}
\usepackage{enumitem}
\usepackage{xcolor}
\usepackage[sort&compress,numbers]{natbib}
\usepackage[colorlinks=true, linkcolor=blue, citecolor=blue, urlcolor=blue]{hyperref}
\usepackage{xfrac}
\usepackage{rotating}

\usepackage{amsthm}
\usepackage{amsmath}
\usepackage{amsfonts}
\usepackage{amssymb}
\usepackage{euscript}
\usepackage{bm}
\usepackage{pdflscape} 
\usepackage{framed}
\usepackage{graphicx}
\usepackage{tikz}
\usetikzlibrary{arrows.meta, positioning}
\usepackage{mathrsfs}  
\usepackage[affil-it]{authblk}
\usepackage{dsfont}

\makeatletter
\def\namedlabel#1#2{\begingroup
  #2%
  \def\@currentlabel{#2}%
  \phantomsection\label{#1}\endgroup
}
\makeatother

\renewcommand{\S}{\mathbf{S}}
\newcommand{\C}{\mathbf{C}}

\newcommand{\bA}{\mathbf{A}}

\newcommand{\bB}{\mathbf{B}}

\newcommand{\bU}{\mathbf{U}}

\newcommand{\bSigma}{\mathbf{\Sigma}}

\newcommand{\bx}{\mathbf{x}}
\newcommand{\bu}{\mathbf{u}}
\newcommand{\bv}{\mathbf{v}}
\renewcommand{\bf}{\mathbf{f}}
\renewcommand{\bm}{\mathbf{m}}

\newcommand{\bbP}{\mathbb{P}}
\newcommand{\bbQ}{\mathbb{Q}}
\newcommand{\bbN}{\mathbb{N}}
\newcommand{\bbR}{\mathbb{R}}

\newcommand{\bzero}{\mathbf{0}}

\newcommand{\bH}{\mathbf{H}}
\newcommand{\bC}{\mathbf{C}}

\newcommand{\cF}{\mathcal{F}}
\newcommand{\cP}{\mathcal{P}}

\newcommand{\cH}{\mathcal{H}}

\newcommand{\cX}{\mathcal{X}}
\newcommand{\cA}{\mathcal{A}}

\newcommand{\cN}{\mathcal{N}}

\newcommand{\cC}{\mathcal{C}}
\newcommand{\cK}{\mathcal{K}}

\newcommand{\bI}{\mathbf{I}}
\newcommand{\m}{\mathbf{m}}

\newcommand{\bbE}{\mathbb{E}}
\renewcommand{\bbP}{\mathbb{P}}




\theoremstyle{plain}
\newtheorem{definition}{Definition}[section]
\newtheorem{lemma}[definition]{Lemma}
\newtheorem{theorem}[definition]{Theorem}
\newtheorem{proposition}[definition]{Proposition}
\newtheorem{corollary}[definition]{Corollary}
\newtheorem{remark}[definition]{Remark}
\newtheorem{example}[definition]{Example}



\usepackage{multirow}
\usepackage{multicol}

\usepackage{caption} 
\captionsetup[table]{skip=20pt}

\definecolor{kartik}{rgb}{0.8,0, 0.2}

\newcommand{\trace}{\operatorname{trace}}
\newcommand{\range}{\operatorname{range}}

\usepackage{xfrac}  

\newcommand{\test}{\mathfrak{T}}
\newcommand{\Id}{\mathbf{I}}

\newcommand{\dettwo}{\operatorname{det}_2}

\setlength{\arrayrulewidth}{0.5mm}
\setlength{\tabcolsep}{7pt}

\begin{document}

\title{\bfseries  
\Large Likelihood Ratio Tests by Kernel Gaussian Embedding\footnote{Research supported by a Swiss National Science Foundation (SNF) grant.}}

\author{
  Leonardo V.\ Santoro
  \qquad 
  Victor M.\ Panaretos   \\ {\footnotesize{\texttt{leonardo.santoro@epfl.ch}  \qquad\,\,\texttt{victor.panaretos@epfl.ch}}}}
  
  \affil{Institut de Math\'ematiques, École Polytechnique Fédérale de Lausanne}

\maketitle

\begin{abstract}
 We propose a novel kernel-based nonparametric two-sample test, employing the combined use of kernel mean and kernel covariance embedding.  Our test builds on recent results showing how such combined embeddings map distinct probability measures to mutually singular Gaussian measures on the kernel's RKHS. Leveraging this ``separation of measure phenomenon", we construct a test statistic based on the relative entropy between the Gaussian embeddings, in effect the likelihood ratio. The likelihood ratio is specifically tailored to detect equality versus singularity of two Gaussians, and satisfies a ``$0/\infty$" law, in that it vanishes under the null and diverges under the alternative. To implement the test in finite samples, we introduce a regularised version, calibrated by way of permutation. We prove consistency, establish uniform power guarantees under mild conditions, and discuss how our framework unifies and extends prior approaches based on spectrally regularized MMD. Empirical results on synthetic and real data demonstrate remarkable gains in power compared to state-of-the-art methods, particularly in high-dimensional and weak-signal regimes.
 
\medskip

\noindent \textbf{MSC2020 classes:} 62G10, 62G20, 62H15, 62H20, 60G15, 46E22

\smallskip

\noindent \textbf{Key words:} nonparametric two-sample testing; kernel methods; reproducing kernel Hilbert space; relative entropy; Gaussian measures.
\end{abstract}


\section{Introduction}

Given two unknown probability distributions $\bbP$ and $\bbQ$ on a measurable space \( \cX \), the nonparametric two-sample problem consists in testing
\[
H_0: \bbP = \bbQ \quad \textnormal{versus} \quad H_1: \bbP \neq \bbQ,
\]
based on i.i.d.\ random samples $\{X_1, \dots, X_n\} \sim \bbP$ and $\{Y_1, \dots, Y_m\} \sim \bbQ$, for $n, m \in \bbN$, without making parametric assumptions on either $\bbP$ or $\bbQ$. Nonparametric two-sample testing is a fundamental problem in statistical inference, dating back to the foundational work of Pearson \cite{pearson1900} on the $\chi^2$-test. Early work on one-dimensional distributions made use of the order structure of the real line, using the empirical distribution function (as in the Kolmogorov–Smirnov test \cite{smirnov1939estimation}), or rank information (Wilcoxon \cite{wilcoxon1945} and Mann–Whitney \cite{mannwhitney1947, lehmann1954testing}). In higher dimensions, the problem is considerably more challenging: the vague specification of the alternative allows for manifold and subtle differences --possibly exacerbated by complexity/dimensionality-- and does not inform on how to design efficient criteria. And yet, nonparametric tests are especially well-suited to high-dimensional and complex settings, where the researcher is either unable or unwilling to commit to a strong specification of the hypothesis pair. 

Modern approaches to higher-dimensional nonparametric testing include permutation-based and matching-based methods \cite{friedman1979multivariate, rosenbaum2005exact}, graph-based methods \cite{Henze88, chen2017new}, rank-based approaches \cite{jureckova2012nonparametric, clemenccon2025bipartite}, and depth-based techniques \cite{zuo2000general, Wilcox01042003, Shojaeddin12}. Other popular frameworks include the energy distance \cite{szekely2004testing}, and, more recently, methods based on optimal transport \cite{ramdas2017wasserstein, ghosal2022multivariate, masarotto2024transportation}. In the last few years, kernel-based methods have emerged as a powerful and flexible approach in such settings. Among these, the Maximum Mean Discrepancy (MMD) \cite{gretton2012kernel} has become a standard tool. It operates by embedding probability measures into a reproducing kernel Hilbert space (RKHS) \cite{aronszajn1950theory, berlinet2011reproducing} and computing the norm of the difference between their kernel mean embeddings. MMD enjoys appealing theoretical guarantees and is easily computable, making it widely used in practice. Consequently, a variety of refinements of MMD have been developed to boost the power and robustness of kernel-based two-sample tests. These include procedures for selecting kernels in a data-dependent manner \cite{gretton2012optimal, liu2020learning}, spectral-regularization techniques  \cite{hagrass2024spectral, eric2007testing}, calibration schemes \cite{shekhar2022permutation}, and ensemble or boosting-based strategies \cite{chatterjee2024boosting}. Yet, despite this empirical progress, a comprehensive understanding of the geometric and information-theoretic principles underlying the success of these methods had arguably remained elusive.

Recent work by \citet{santoro2025from} has provided a new theoretical perspective by eliciting a ``separation of measure phenomenon". They show that when distinct probability measures are embedded into an RKHS via their \emph{combined} kernel mean \textit{and} covariance embeddings, they can be identified with Gaussian measures $\cN_{\bbP}$ and $\cN_{\bbQ}$ that are \emph{mutually singular}.  That is, the combined embedding translates $\{H_0:\bbP=\bbQ\mbox{ vs }H_1:\bbP\neq\bbQ\}$ to the considerably sharper pair $\{H_0:\cN_\bbP=\cN_\bbQ\mbox{ vs }H_1:\cN_\bbP\perp\cN_\bbQ\}$. From an information-theoretic perspective, singular measures are infinitely different, showing that even subtle differences between $\bbP$ and $\bbQ$ are greatly amplified in the RKHS: while the original hypothesis pair may be arbitrarily close, in the kernel-embedded space, the the hypotheses are maximally separated. 
 This embedding perspective not only sheds light on the reasons for the remarkable empirical performance of kernel methods, but also suggests that the performance can be enhanced considerably further, as existing strategies do not explicitly target the separation of measure phenomenon (indeed,  \citet{santoro2025from} show that existing methods may well fail to harness it depending on the scenario). The purpose of this paper is to precisely make full use of the potential offered by this phenomenon, by specifically employing information-theoretic quantities targeting it. In particular, the Kullback–Leibler (KL) divergence between the embedded Gaussian measures is either zero (under $H_0$) or infinite (under $H_1$). This provides a natural basis for constructing a powerful test statistic tailored to the very sharp theoretical separation between the null and alternative. 

\paragraph{Contributions.}
We propose a new kernel-based two-sample test built on the embedded Gaussians' likelihood ratio (relative entropy, to be precise). This  idea is implemented by comparing the kernel embeddings of the empirical distributions $\bbP_n$ and $\bbQ_n$ through the suitably regularised relative entropy of their induced empirical Gaussian embeddings $\cN_{\bbP_n}$ and $\cN_{\bbQ_n}$. Regularisation is called for to induce stability at the finite sample level. The test statistic is simple to compute, yet grounded in concrete geometric and probabilistic insights. Unlike previous approaches where the criterion depends smoothly on the magnitude of discrepancies between $\bbP$ and $\bbQ$, our criterion enjoys a much sharper dichotomy: at the population level, any departure from the null leads to asymptotically maximal separation. Concretely, we show that the regularised test statistic is finite for all values of the regularisation parameter, vanishes under the null, and diverges under the alternative as the regularisation parameter vanishes (Theorem~\ref{thm:main}), a dichotomy crystallising in the 0--1 law given by Corollary~\ref{cor:01law}. We establish theoretical guarantees for the proposed test, including consistency against fixed alternatives (Theorem~\ref{thm:consistent}) and uniform consistency under a suitable notion of separation between $\bbP$ and $\bbQ$ (Theorem~\ref{thm:sepa_bound}), for translation-invariant kernels. These results hold under mild conditions and provide a precise characterization of the regularization rate required for consistency. In addition, our framework offers a unifying perspective on previously proposed kernel-based tests. In particular, it shows that spectral regularization methods \cite{eric2007testing, hagrass2024spectral} are a restricted version of a broader information-theoretic criterion, shedding light on the mechanisms that govern their effectiveness. 
 We illustrate the finite-sample performance of our test on a range of synthetic and real datasets. We empirically find that the method offers substantial power improvements compared to existing approaches, excelling in challenging scenarios involving subtle and/or high-dimensional differences between distributions.

\section{Background}

We begin by summarising basic notions in functional analysis and measure theory that will be key to develop our main results. Let $\cH$ be a separable Hilbert space with inner product $\langle\cdot,\cdot\rangle_{\cH} \::\: \cH\times\cH\rightarrow\bbR$ and induced norm $\|\cdot \|_{\cH}\::\: \cH \rightarrow \bbR_{+}$, with $\mathrm{dim}\in \bbN\cup\{\infty\}.$
Given $f,g\in\cH$, their \textit{tensor product} $f\otimes g \::\: \cH\rightarrow\cH$ is the linear operator defined by:
$$
(f\otimes g)u = \langle g,u\rangle_{\cH} f,\qquad u\in\cH.
$$ 
Given Hilbert spaces $\cH_1,\cH_2$ and a linear operator $\bA:\cH_1\rightarrow\cH_2$, we define its adjoint as the unique operator $\bA^*:\cH_2\rightarrow\cH_1$ such that $\langle \bA u, v \rangle_{\cH_2} = \langle u, \bA^* v \rangle_{\cH_1}$ for all $u\in \cH_1, v\in \cH_2$. We say that an operator $\bA:\cH\to\cH$ is \textit{self-adjoint} if $\bA=\bA^*$. We say that $\bA$ is \textit{non-negative definite} (or \textit{positive semidefinite}), and write $\bA\succeq 0$ if it is self-adjoint, and satisfies $\langle\bA h, h\rangle_{\cH} \geq 0$ for all $u\in\cH$. When the inequality is strict for all $x\in\cH\setminus\{0\}$ we call the $\bA$ \textit{positive definite} and write $\bA\succ 0$. 
We say that $\bA$ is compact if for any bounded sequence $\{h_n\}\subset  \cH$, $\{\bA h_n\}\subset  \cH$ contains a convergent sub-sequence.
If $\bA$ is a non-negative, \textit{compact} operator, then there exists a unique non-negative operator denoted by $\bA^{\sfrac{1}{2}}$ that satisfies $( \bA^{\sfrac{1}{2}} )^2 = \bA$.
The \textit{kernel} of $\bA$ is denoted by $\ker(\bA) = \{h\in\cH\::\: \bA h= 0\}$, and its \textit{range} by $\range(\bA)=\{\bA h \::\: h \in \cH\}$. 
We denote the \textit{trace} of an operator $\bA$, when defined, by $\trace(\bA)= \sum_{i\geq 1}\langle \bA e_i,e_i\rangle_{\cH} $, where $\{e_i\}_{i\geq 1}$ is an (arbitrary) Complete Orthonormal System (CONS) of $\cH$. We write
$$
 \|\bA\|_{\text{op}}:=\sup_{\|h\|=1}\|\bA h\|_{\cH}, 
 \qquad\qquad
 \|\bA\|_{\textnormal{HS}}:= \sqrt{\trace(\bA^*\bA)}, 
 \qquad \qquad\|\bA\|_{\textnormal{Tr}}:= \trace(\sqrt{\bA^*\bA})
$$
 for the \emph{operator norm}, \emph{Hilbert-Schmidt norm}, and \emph{trace norm},
 respectively. An operator $\bA$ is said to be \emph{Hilbert-Schmidt} if $\| \bA \|_{\textnormal{HS}} < \infty$, and \emph{trace-class} if $\| \bA \|_{\textnormal{Tr}} < \infty$. 
One always has
 $
  \|\bA\|_{\text{op}} \leq
 \|\bA\|_{\textnormal{HS}} \leq \|\bA\|_{\textnormal{Tr}}
 $. We write $\bI$ for the identity operator on $\cH$. We define the Carleman-Fredholm determinant \cite{simon1977notes,fredholm1903classe} of a self-adjoint operator $\mathbf{H}$ as
$$
\dettwo(\mathbf{I}+\mathbf{H})= \exp\left[ \trace(\log(\mathbf{I} + \mathbf{H}) - \mathbf{H})\right]
$$
It can be shown that the right-hand side converges, and thus that the Carleman-Fredholm determinant is well-defined, for all Hilbert-Schmidt operators with eigenvalues exceeeding $-1$. It is also known that the $\operatorname{map} \mathbf{H} \mapsto \dettwo(\mathbf{I}+\mathbf{H})$ is strictly log-concave, continuous everywhere in $\|\cdot\|_2$ norm and Gateaux differentiable on the subset $\{\mathbf{H}:-1 \notin \sigma(\mathbf{H})\} $ of Hilbert-Schmidt operators. Finally, we say that $\bU: \cH \to \cH$ is a \emph{partial isometry} if $\bU^{\ast}\bU$ (or equivalently, $\bU\bU^{\ast}$) is a projection operator.

\subsection{Reproducing Kernel Hilbert Spaces}
Let $\cX$ be a compact and separable metric space, and consider a Mercer  (positive semidefinite) kernel $k: \cX \times \cX \to \bbR$. The Reproducing Kernel Hilbert Space (RKHS) associated with  $k$, denoted $\cH$, is the Hilbert space of
$f: \cX\to \bbR$ that can be approximated by linear combinations of  kernel functions. That is:
$$
\cH = \overline{
    \left\lbrace \sum_{j=1}^n a_j k\left(\cdot, x_j\right) \::\:  a_j \in \bbR, x_j\in\cX, j=1,\dots,n,\, n \in \mathbb{N} \right\rbrace}
$$
where the closure is taken with respect to the inner-product given by
$$
\langle f, g\rangle_{\cH}:=\sum_{i=1}^n \sum_{j=1}^m a_i b_j k\left(x_i, y_j\right).
$$
$$
f:=\sum_{i=1}^n a_i k\left(\cdot, x_i\right)\quad \textnormal{ and }  \quad g:=\sum_{j=1}^m b_j k\left(\cdot, y_j\right), \quad a_i,b_j \in \bbR, \quad x_i,y_j \in \cX \quad  m,n\in\bbN.
$$ 
In an RKHS the evaluation functional is continuous, and satisfies the reproducing property:
$$
f(x) = \langle f, k_x \rangle \quad \textnormal{for all} \quad x \in \cX \quad \textnormal{and} \quad f\in \cH.
$$
where $k_x = k(x,\cdot)$.   In other words, the value of the function $f$ at any point $x \in \cX$ can be recovered by taking the inner product of $f$ with the kernel function $k_x$.

\medskip

An important property of kernel is $L^2-$\textit{universality}, which refers to the ability to approximate arbitrary continuous functions. Specifically, a kernel $k$ is universal if the RKHS $\cH$ is dense in the space of continuous functions on $\cX$.

\paragraph{Kernel Embeddings of Distributions.}
For a compact and separable metric space $\cX$  let $\cP(\cX)$ denote the set of probability measures (or distributions) on $\cX$.
We write $\m_\bbP$ and $\S_{\bbP}$ for the first and second order embeddings of the measure $\bbP$ in the RKHS $\cH$, respectively defined as:
\begin{equation}\label{eq:mean&covemb}
\m_{\bbP} := \int k_\bx \, d\bbP(\bx),
\quad\textnormal{and} \quad
\S_{\bbP} := \int k_\bx\otimes k_\bx \, d\bbP(\bx).
\end{equation}
These are often referred to as \textit{kernel mean} and \textit{kernel (non-central) covariance} embeddings, respectively. 
The kernel mean embedding $\m_\bbP$ is an element in the RKHS $\cH$, while the kernel covariance embedding $\S_\bbP$ is a linear operator acting on elements of $\cH$. The kernel covariance operator $\S_\bbP$ is positive semi-definite and trace class \cite{bach2022information}. The kernel mean and covariance are respectively characterised by
\begin{equation*}
    \langle \m_\bbP, f\rangle_{\cH} = \bbE_{X\sim\bbP} f(X) \quad \&\quad 
        \langle f, \S_\bbP g\rangle_{\cH}
     = \bbE_{X\sim \bbP}[f(X)g(X)], \qquad\forall f,g\in\cH.
\end{equation*}

\paragraph{Kernel Two Sample Tests.}
Maximum Mean Discrepancy (MMD), first proposed in \cite{gretton2012kernel}, is a popular two-sample test statistic that provides a simple and effective way to compare two distributions, by mapping them into an RKHS and measuring the distance between their mean embeddings. Specifically, MMD
compares the {kernel mean embeddings} of two distributions $\bbP$ and $\bbQ$ by way of the RKHS norm of their difference:
$$
\mathrm{MMD}(\bbP, \bbQ)  := \|\m_\bbP - \m_\bbQ\|_{\cH}^2 = \sup_{f\in\cH}\frac{\langle f, \m_\bbP - \m_\bbQ\rangle_{\cH}}{\langle f,f\rangle_{\cH}}
$$  
This quantity can be estimated directly from samples, uniformly in the dimension $d$, and has been shown to characterize distributional equality when $k$ is characteristic \cite{gretton2012kernel}. It yields a consistent, powerful and cheap test statistic for the two sample problem, but is suboptimal in terms of the  Hellinger distance induced  separation boundary \cite{hagrass2024spectral}. The Kernel Fisher Discrepancy (KFD)  \cite{eric2007testing} builds on such an approach by additionally incorporating the second order structure of the probability distributions into the test statistics.
Specifically, rather than comparing the raw mean embeddings, their approach measures the discrepancy between \( \m_{\bbP} \) and \( \m_{\bbQ} \) through a covariance-dependent norm, defined via spectral regularization of pooled covariance:
$$
\operatorname{KFD}_{\gamma}(\bbP,\bbQ) := \sup_{f \in \cH} \frac{\langle f, \m_\bbP - \m_\bbQ \rangle}{\langle f, (\bSigma_\bbP + \bSigma_\bbQ + \gamma \Id) f \rangle}
$$
where $ \gamma > 0 $ is a regularization parameter ensuring well-posedness of the inverse, and $\overline{\bSigma}$ is the pooled, centred  covariance: 
\begin{equation}
    \overline{\bSigma} := \frac{1}{2}\int (k_\bx - \m_\bbP)\otimes (k_\bx - \m_\bbP) \, d\bbP(\bx) + \frac{1}{2}\int (k_\bx - \m_\bbQ)\otimes (k_\bx - \m_\bbQ) \, d\bbQ(\bx)
\end{equation}
The KFD thus  seeks the direction that maximizes the mean difference \( \mu_{\bbP} - \mu_{\bbQ} \)  taking into account the average variability under \( \bbP \) and \( \bbQ \), and can be seen as a refinement of MMD that adapts the testing direction according to the geometry of the two distributions. In a similar spirit, \cite{hagrass2024spectral} generalise this approach and propose a family of two-sample tests based on \emph{spectrally regularized differences} between kernel mean embeddings, including but not limited to 
$$
 \| (\overline{\bSigma} + \gamma\Id)^{-\sfrac{1}{2}}(\m_{\bbP} - \m_{\bbQ})\|_{\cH}^2 
$$ which incorporates the spectral decay of the covariance into the mean difference in a variety of possible ways. \cite{hagrass2024spectral} further show that with appropriate choice of the regularization, the spectral regularised statistic is consistent and can achieve minimax optimality over suitably separated alternatives.

\section{Regularised Kernel Likelihood Testing}
In this section we propose and study a test statistic based on a notion of regularised likelihood ratio between kernel Gaussian embeddings of probability distributions. We will assume that:
\begin{itemize}
    \item[\textbf{(A0)}] $\cX$ is a compact, separable and locally convex metric space.

    \item[\textbf{(A1)}] $k\::\: \cX\times\cX\to \bbR$ is a continuous positive definite kernel on $\cX$. 
\end{itemize}
To obtain suitable rates of convergence, we will further assume that the kernel is bounded:
\begin{itemize}
    \item[\textbf{(A2)}] $\sup_{s,t}|k(s,t)| \leq K$ for some positive $K>0$.
\end{itemize}

We begin by introducing the concept of kernel Gaussian embedding of probability distribution, 
which associates a distribution $\bbP$ on $\cX$ with \emph{Gaussian} measure $\cN_\bbP$ on the RKHS $\cH$,
with mean and covariance given by the corresponding kernel mean and covariance embeddings \eqref{eq:gaussianemb}.

\begin{definition}\label{def:kge}
    Let $\bbP\in \cP(\cX)$ be a probability measure on a compact and separable metric space $\cX$. Its central and non-central \textit{kernel Gaussian embedding} are respectively given by the Gaussian measures on the RKHS $\cH$:
\begin{equation}
    \label{eq:gaussianemb}
    \bbP\mapsto \cN(\mathbf{0}, \S_{\bbP})
    \qquad\textnormal{and}\qquad
    \bbP\mapsto \cN(\m_{\bbP}, \S_{\bbP})
\end{equation}
where $\m_\bbP$ and $\S_{\bbP}$ are the mean and covariance embeddings of $\bbP$ in the RKHS $\cH$, defined in \eqref{eq:mean&covemb}.
\end{definition}
The kernel Gaussian embedding
 provides a tractable way to express the distribution $ \bbP $ using  (the first \emph{and}) second moment in the kernel space. Given probability distributions $\mathbb{P}$ and $\mathbb{Q}$ on $\cX$,  \citet{santoro2025from} showed that Gaussian embeddings into RKHS yield \textit{mutually singular} measures.
 \begin{theorem}[Santoro, Waghmare and Panaretos \cite{santoro2025from}]\label{thm:singular}
    Let $\bbP,\bbQ$ be probability measures on a compact and separable metric space $\cX$, and $k: \cX \times \cX \to \bbR$ be a universal reproducing kernel.
    Then:
    \begin{equation}
        \label{eq:equivalence}
    \bbP \neq \bbQ 
    \quad\Longleftrightarrow \quad
    \cN(0, \S_\bbP) \perp \cN(0,\S_\bbQ)
    \quad\Longleftrightarrow \quad
    \cN(\m_\bbP, \S_\bbP) \perp \cN(\m_\bbQ,\S_\bbQ)
    \end{equation}
    where $\perp$ denotes the \textit{mutual singularity} of measures.
\end{theorem}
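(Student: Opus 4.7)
The plan is to exploit the Feldman-Hajek dichotomy for Gaussian measures on a separable Hilbert space, which asserts that any two such Gaussians are either mutually equivalent or mutually singular. Under this dichotomy, the claim reduces to: \emph{if the Gaussian embeddings are equivalent, then $\bbP=\bbQ$}. The converse direction ($\bbP=\bbQ$ yields identical Gaussian embeddings, hence not singular) is immediate from the definitions of $\m_\bbP$ and $\S_\bbP$ in \eqref{eq:mean&covemb}.

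I would first observe that equivalence of the non-central Gaussians $\cN(\m_\bbP,\S_\bbP)$ and $\cN(\m_\bbQ,\S_\bbQ)$ imposes, by Feldman-Hajek, the same constraints on the covariance operators as does equivalence of the central Gaussians $\cN(0,\S_\bbP)$ and $\cN(0,\S_\bbQ)$: namely (i) the Cameron-Martin spaces $\S_\bbP^{\sfrac{1}{2}}(\cH)$ and $\S_\bbQ^{\sfrac{1}{2}}(\cH)$ coincide, and (ii) the operator $\S_\bbP^{-\sfrac{1}{2}}\S_\bbQ\S_\bbP^{-\sfrac{1}{2}}-\bI$ is Hilbert-Schmidt on the common range. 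The non-central case additionally requires $\m_\bbP-\m_\bbQ$ to lie in the common Cameron-Martin space, but this extra condition is immaterial provided the covariance-level constraints already force $\bbP=\bbQ$. Both chains of equivalences in \eqref{eq:equivalence} therefore collapse to a single implication: (i)+(ii) implies $\bbP=\bbQ$.

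The technical heart, and main obstacle, is to convert the qualitative alternative $\bbP\neq\bbQ$ into a concrete violation of one of (i) or (ii). My strategy would be: via universality of $k$, exploit that there exists $h\in C(\cX)$ with $\int h\,d(\bbP-\bbQ)\neq 0$, and by density of $\cH$ in $C(\cX)$ transfer this to some $f,g\in\cH$ with $\bbE_\bbP[fg]\neq\bbE_\bbQ[fg]$. Using the identity $\langle f,\S_\bbP g\rangle_\cH=\bbE_\bbP[f(X)g(X)]$, I would then build, from a countable dense family of kernel evaluations $\{k_{x_n}\}$, a sequence $\{f_n\}\subset\cH$ along which the ratios $\bbE_\bbQ[f_n^2]/\bbE_\bbP[f_n^2]$ deviate from $1$ in a non-square-summable manner---directly contradicting the Hilbert-Schmidt requirement (ii). This step genuinely relies on infinite-dimensionality (in finite dimensions all non-degenerate Gaussians are mutually equivalent, making any such argument impossible) and on the rigidity with which Hilbert-Schmidt spectral decay constrains the relative spectrum of $\S_\bbP$ and $\S_\bbQ$. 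Once (ii) fails, Feldman-Hajek delivers mutual singularity, closing both equivalences of \eqref{eq:equivalence} at once.
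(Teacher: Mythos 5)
You should note at the outset that this paper does not actually prove Theorem~\ref{thm:singular}; it is quoted from \cite{santoro2025from}, so your attempt has to be judged against the argument that result requires rather than an in-paper proof. Your reduction is the standard and correct one: by the Feldman--H\'ajek dichotomy, mutual singularity follows once equivalence is excluded; the mean condition in the non-central case is redundant for the direction you need; and the easy direction ($\bbP=\bbQ$ gives identical, hence non-singular, embeddings) is immediate. Everything therefore hinges on the single implication: if $\range(\S_\bbP^{\sfrac{1}{2}})=\range(\S_\bbQ^{\sfrac{1}{2}})$ and $\S_\bbQ=\S_\bbP^{\sfrac{1}{2}}(\Id+\bH)\S_\bbP^{\sfrac{1}{2}}$ for some Hilbert--Schmidt $\bH$, then $\bbP=\bbQ$. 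That implication is the entire content of the theorem, and it is precisely the step your proposal does not establish.

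Moreover, the construction you gesture at is not correctly targeted. Writing $u=\S_\bbP^{\sfrac{1}{2}}f$, one has $\bbE_\bbQ[f^2]/\bbE_\bbP[f^2]-1=\langle u,\bH u\rangle/\|u\|^2$, which is bounded by $\|\bH\|_{\textnormal{op}}$ for every $f\in\cH$; a sequence $\{f_n\}$ along which these ratios ``deviate from $1$ in a non-square-summable manner'' contradicts Hilbert--Schmidtness of $\bH$ only if the normalized vectors $u_n/\|u_n\|$ form (essentially) an orthonormal system, i.e.\ only if the $f_n$ are orthogonal in $L^2(\bbP)$. Without that, there is no contradiction: take every $f_n$ equal to a single witness $f$ with $\bbE_\bbP[f^2]\neq\bbE_\bbQ[f^2]$; the deviations are then a fixed nonzero constant, hence non-square-summable, yet perfectly compatible with a rank-one (certainly Hilbert--Schmidt) $\bH$. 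Kernel evaluations $k_{x_n}$ at a dense set of points are typically strongly correlated under $\bbP$, so they are not such a system, and after orthogonalizing them in $L^2(\bbP)$ you would still have to show that infinitely many of the resulting quadratic-form deviations remain bounded away from zero (in a non-$\ell^2$ fashion), using only the qualitative fact $\bbP\neq\bbQ$ together with universality. Passing from one continuous witness $h$ with $\int h\,d(\bbP-\bbQ)\neq 0$ to infinitely many asymptotically $\bbP$-orthogonal witnesses with non-decaying deviations is exactly where the real work lies (in the cited reference this requires localizing at many points of the supports and controlling the cross-terms), and your sketch offers no mechanism for it. As written, the proposal correctly reduces the theorem via Feldman--H\'ajek but leaves its essential step unproven.
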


 This result establishes that the alternative $\{H_1:\bbP\neq\bbQ\}$, is equivalent to $\{H_1':\cN_{\bbP}\perp\cN_{\bbQ}\}$. That is, two arbitrary distributions coincide or differ according to whether their Gaussian embeddings have same or (almost everywhere) disjoint supports (see Figure \ref{fig:gaussian_embedding}). 

\begin{figure}[htbp]\label{fig:cartoon-embedding}
    \centering
    \begin{tikzpicture}[
        box/.style={rectangle, draw=gray, fill=gray!10, minimum width=2.5cm, minimum height=1.2cm, font=\small, align=center},
        arrow/.style={-Stealth, thick},
    ]

        \node[draw, thick, rectangle, inner sep=0pt] (img1)
        {\includegraphics[width=0.25\textwidth]{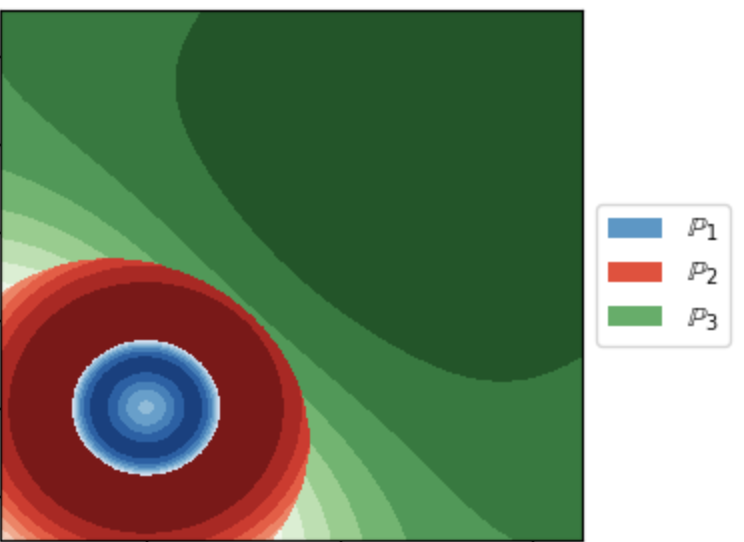}};
        \node[above=2mm of img1.north] {\textbf{Measures on $\cX$}};

        \node[box, right=1.5cm of img1] (kernel)
            {\(\displaystyle \mathbb{P} \mapsto \cN\big(\m_{\mathbb{P}}, \S_\bbP\big)\)};
        \node[font=\bfseries, above=6mm of kernel.north] (title) {Gaussian RKHS Embedding};

        \node[draw, thick, rectangle, inner sep=0pt, right=1.5cm of kernel] (img2)
            {\includegraphics[width=0.25\textwidth]{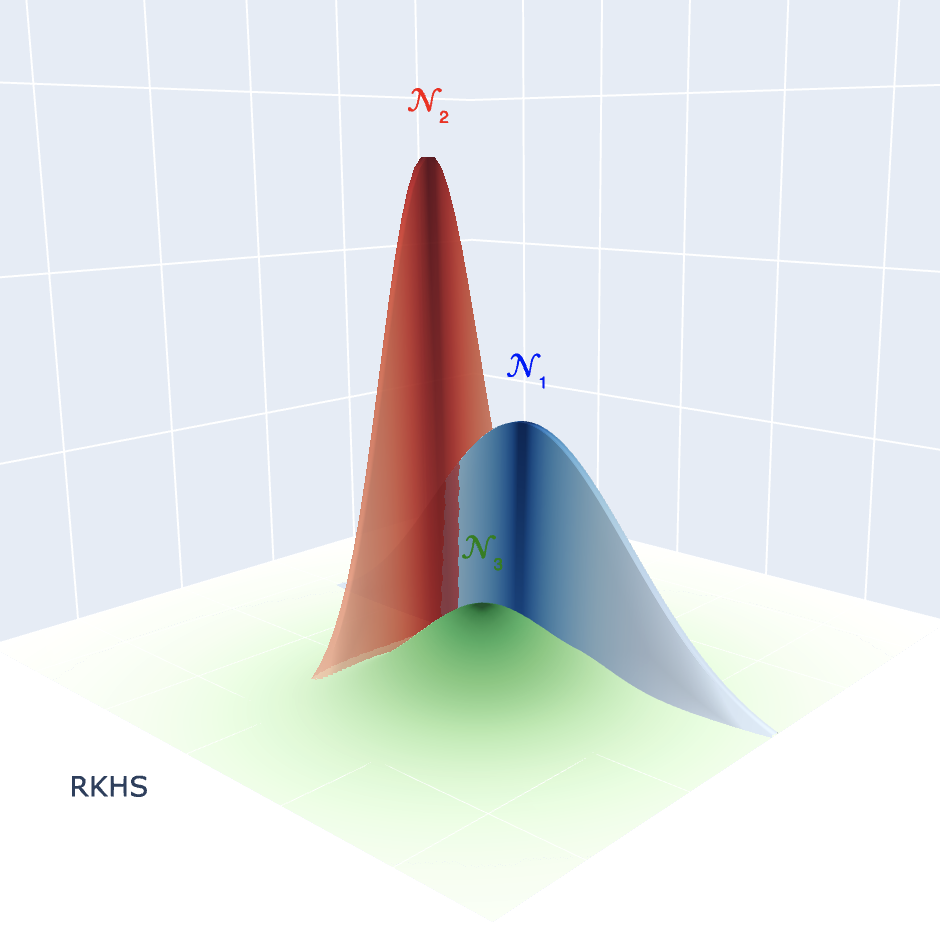}};
        \node[above=2mm of img2.north, text=green!70!black] {\textbf{Gaussian measures on $\cH$}};

        \draw[arrow] ([xshift=5pt]img1.east) -- ([xshift=-5pt]kernel.west);
        \draw[arrow] ([xshift=5pt]kernel.east) -- ([xshift=-5pt]img2.west);

    \end{tikzpicture}
    \caption{Cartoon visualising  embedding of arbitrary distributions on $\cX$ to Gaussian measures on the RKHS $\cH$  via \eqref{eq:gaussianemb}: \textit{distinct} measures on $\cX$ are mapped to \textit{mutually singular} Gaussian measures on $\cH$.}
    \label{fig:gaussian_embedding}
\end{figure}

\begin{center}
\begin{minipage}{.95\textwidth}    
\begin{framed}
        \begin{center}
        \textsc{Equivalence and singularity of Gaussian measures}
    \end{center}
    \textit{
    Recall that two measures $\mu,\nu$ are said to be mutually singular -- written $\mu \perp \nu$ -- if there exists a measurable set $\cA\subset \cX$ such that $\mu(\cA) = 1$ while $\nu(\cA^c) =1$.
    Similarly, $\mu, \nu$ are said to be equivalent -- written $\mu\sim\nu$ -- if for all measurable set $\cA\subset \cX, \mu(\cA) >0 \Leftrightarrow \nu(\cA) > 0$.
    Specifically in the Gaussian case,
 the Feldman–Hájek theorem \cite{feldman1958equivalence,hajek1958property} is a fundamental result stating that that two Gaussian measures 
on a (locally convex) space $\cX$
are either equivalent measures or else mutually singular, with no intermediate situation possible. Furthermore, the Feldman–Hájek theorem gives an explicit, verifiable description of the circumstances under which Gaussian measures $\cN(\bm_1,\bC_1)$ and $ \cN(\bm_2,\bC_2)$ are equivalent, or mutually singular:}
\begin{equation*}
    \cN(\bm_1, \bC_1) \sim \cN(\bm_2, \bC_2)
    \quad \Longleftrightarrow \quad
    \begin{cases}
         \bm_2 - \bm_1 \in \operatorname{Range}(\bar{\bC}^{\sfrac{1}{2}}), \quad \text{where } \bar{\bC} = \tfrac{1}{2}(\bC_1 + \bC_2), \quad \text{and} \\
         \bC_1 = \bC_2^{\sfrac{1}{2}} (\bH-\bI) \bC_2^{\sfrac{1}{2}}, \quad \text{for some  Hilbert-Schmidt } \bH \succ -\bI.
    \end{cases}
\end{equation*}
    \end{framed}
\end{minipage}
\end{center}

\begin{center}
\begin{minipage}{.95\textwidth}
\begin{framed}
    \begin{center}
        \textsc{Absolute continuity, relative entropy, Gaussian measures}
    \end{center}
\textit{ 
Consider probability distributions $\mu,\nu$ on $\cH$, and suppose \ $\mu\ll\nu$, so that the Radon-Nikodym derivative $d\mu/d\nu$ exists $\nu$-almost everywhere, in that
$
\mu(\cA) = \int_{\cA}\left({d\mu}/{d\nu}\right)(x) d\nu(x)
$
for any measurable set $\cA\subset\cX$.
With this notation, when $\mu\ll\nu$, the Kullback-Leibler divergence of $\mu$ wrt to $\nu$, or relative entropy, is given by
$
D_{\mathrm{KL}}(\mu,\nu) = \int_{\cX}\log\left({d\mu}/{d\nu}\right) d\mu.
$
Focusing on the case of (equivalent) \emph{Gaussian} measures $\mu=\cN(\m_1,\S_1)$ and $\nu=\cN(\m_2,\S_2)$, the Kullback-Leibler divergence admits a particularly tractable form:
$$
D_{\mathrm{KL}} \big( \cN(\m_1,\S_1)\:||\:  \cN(\m_2, \S_2) \big) =
 \frac{1}{2}\|{\S_2^{-\sfrac{1}{2}}}(\m_1 - \m_2) \|^2 -  \frac{1}{2}\log\dettwo\left( \Id - \S_2^{-\sfrac{1}{2}}(\S_1 - \S_2)\S_2^{-\sfrac{1}{2}} \right)
$$
where $\dettwo$ denotes the Fredholm-Carleman determinant \cite{simon1977notes,fredholm1903classe}, generalising the determinant to Hilbert-Schmidt operators. 
Note that the Gaussian relative entropy captures both the mean shift (through the Mahalanobis distance between \(\m_1\) and \(\m_2\)) and the covariance discrepancy (via the Fredholm-Carleman determinant involving \(\S_1\) and \(\S_2\)). 
}
\end{framed}
\end{minipage}
\end{center}

To \emph{operationalise} the result in Theorem~\ref{thm:singular}, we characterise the discrepancy between Gaussian embeddings in information-theoretic terms, specifically through the notion of regularized relative-entropy in the next section.

\paragraph{Kernel Regularised Relative Entropy.} Given two probability distributions \( \bbP \) and \( \bbQ \) on $\cX$, consider their corresponding Gaussian embeddings \( \cN(\m_{\bbP}, \S_{\bbP}) \) and \( \cN(\m_{\bbQ}, \S_{\bbQ}) \), which are \textit{Gaussian} measures on the RKHS $\cH$. 
Given  $\gamma>0$, we consider the following discrepancy:
\begin{equation}\label{eq:KRKL}
\begin{split}
D_{\gamma, \mathrm{KL}} \big( \cN(\m_{\bbP}, \S_{\bbP}) \:||\:  \cN(\m_{\bbQ}, \S_{\bbQ}) \big) := &\frac{1}{2}\| (\S_{\bbQ} + \gamma \Id)^{-\sfrac{1}{2}} (\m_{\bbP} - \m_{\bbQ}) \|_{\cH}^2 
   \\&\:  -  \frac{1}{2}
      \log\dettwo \left(\Id + (\S_{\bbQ} + \gamma \Id)^{-\sfrac{1}{2}}(\S_{\bbP}  - \S_{\bbQ})(\S_{\bbQ} + \gamma \Id)^{-\sfrac{1}{2}} \right).
\end{split}
\end{equation}
This can be interpreted as a \textit{regularised Kullback-Leibler} divergence between the two Gaussian embeddings associated with $\bbP,\bbQ$.
In fact, algebraically, the regularized KL divergence  in \eqref{eq:ker_regularized_KL} can be formally be identified with the relative entropy between the  measures \(\cN(\m_\bbP, \S_\bbP + \gamma \Id)\) and \(\cN(\m_\bbQ, \S_\bbQ + \gamma \Id)\) -- albeit these are not well defined, as the identity is not trace-class in infinite dimensions; nevertheless, one can make sense of this employing a sequence of projections converging to the identity. See also Figure~\ref{fig:regkl}.

 We now have the ingredients to introduce the criterion that will form the basis for our test statistic.
    Given two probability distributions \( \bbP \) and \( \bbQ \) on $\cX$ and $\gamma>0$, we define
\begin{equation}\label{eq:teststat}
    \test_{\gamma}(\bbP,\bbQ) :=  D_{\gamma, \mathrm{KL}} \big( \cN(\m_{\bbP}, \S_{\bbP}) \:||\:  \cN(\m_{\bbQ}, \S_{\bbQ}) \big) 
\end{equation}
to be the $\gamma$-\emph{regularlised kernel relative entropy} between the corresponding Gaussian embeddings \( \cN(\m_{\bbP}, \S_{\bbP}) \) and \( \cN(\m_{\bbQ}, \S_{\bbQ}) \).
It is easy to see that, for all $\gamma>0$, the mapping $(\bbP, \bbQ)\to \test_{\gamma}(\bbP,\bbQ) $ is a probability divergence over measures $\bbP, \bbQ\in \cP(\cX)$, since $\test_{\gamma}(\bbP,\bbQ) \geq 0$, with equality whenever (in fact, if and only if) $\bbP,\bbQ$ are equal.
Furthermore, it was shown in \cite[][Theorem 2]{minh2021regularized} seen that if the measures  \( \cN(\m_{\bbP}, \S_{\bbP}) \) and \( \cN(\m_{\bbQ}, \S_{\bbQ}) \) are equivalent, then the regularized divergence converges, for vanishing regularisation, to the relative entropy. Contrarily, in the absence of equivalence (i.e. in the singular case), we show that that the regularized divergence \textit{diverges to infinity} as regularization vanishes. In particular, since the (measure theoretic) equivalence of the Gaussian embeddings $ \cN(\m_{\bbP}, \S_{\bbP})$ and $\cN(\m_{\bbQ}, \S_{\bbQ})$  is (logically) equivalent to the equality of $\bbP$ and $\bbQ$ by the main result in \cite{santoro2025from}, we have the following result:

\begin{theorem}\label{thm:main}
    Let $\bbP,\bbQ$ be probability measures on $\cX$.
    Let $k: \cX \times \cX \to \bbR$ be a universal reproducing kernel. 
    Then $\test_{\gamma}(\bbP,\bbQ)$ is finite for any positive $\gamma>0$, while:
    \begin{equation}
        \label{eq:ker_regularized_KL}
        \lim_{\gamma \to 0}  
        \test_{\gamma}(\bbP,\bbQ) 
        = \begin{cases}
            0, \: &\textnormal{if} \quad \bbP=\bbQ,\\
            \infty, \: &\textnormal{if} \quad \bbP\neq \bbQ.
        \end{cases}
    \end{equation}

\end{theorem}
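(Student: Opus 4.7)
The proof has three parts: (i) finiteness for $\gamma>0$, (ii) vanishing under $H_0$, (iii) divergence under $H_1$ as $\gamma \to 0^+$.

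\emph{Finiteness and null.} For any $\gamma > 0$, the first term of \eqref{eq:KRKL} is bounded by $\tfrac{1}{2\gamma}\|\m_\bbP - \m_\bbQ\|_\cH^2$, which is finite by kernel boundedness (\textbf{A2}). For the second term, $\bB_\gamma := (\S_\bbQ+\gamma\Id)^{-\sfrac{1}{2}}(\S_\bbP-\S_\bbQ)(\S_\bbQ+\gamma\Id)^{-\sfrac{1}{2}}$ is Hilbert--Schmidt (in fact trace-class), as $\S_\bbP - \S_\bbQ$ is trace-class and $(\S_\bbQ+\gamma\Id)^{-\sfrac{1}{2}}$ is bounded by $\gamma^{-\sfrac{1}{2}}$. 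Moreover $\Id + \bB_\gamma = (\S_\bbQ+\gamma\Id)^{-\sfrac{1}{2}}(\S_\bbP+\gamma\Id)(\S_\bbQ+\gamma\Id)^{-\sfrac{1}{2}} \succ 0$, so $\bB_\gamma \succ -\Id$, $\dettwo(\Id+\bB_\gamma)$ is strictly positive, and its logarithm is finite. Under $H_0$, $\m_\bbP = \m_\bbQ$ and $\S_\bbP = \S_\bbQ$ give $\test_\gamma \equiv 0$.

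\emph{Divergence under $H_1$.} By Theorem~\ref{thm:singular}, $\bbP \neq \bbQ$ implies that the two Gaussian embeddings are mutually singular on $\cH$. The plan is finite-dimensional reduction: pick an orthonormal basis $\{e_i\}$ of $\cH$ with finite spans $V_N := \Span\{e_1,\dots,e_N\}$ exhausting $\cH$ (e.g.\ an eigenbasis of $\S_\bbP + \S_\bbQ$, completed to a full ONB), let $\bP_N$ project onto $V_N$, and define the truncation $\test_\gamma^N$ by replacing every vector $v$ in \eqref{eq:KRKL} by $\bP_N v$, every operator $\bA$ by $\bP_N\bA\bP_N$, and $\Id$ by $\bP_N$. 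On $V_N$, $\test_\gamma^N$ coincides with the \emph{genuine} Gaussian KL divergence
\[
\test_\gamma^N = D_{\mathrm{KL}}\bigl(\cN(\bP_N\m_\bbP, \bP_N\S_\bbP\bP_N + \gamma\bP_N) \,\|\, \cN(\bP_N\m_\bbQ, \bP_N\S_\bbQ\bP_N + \gamma\bP_N)\bigr),
\]
well-defined and continuous in $\gamma \geq 0$. The completion rests on three sub-claims: (a) $\test_\gamma^N \leq \test_\gamma$ for every $N$ (in fact $\test_\gamma^N \nearrow \test_\gamma$); (b) $\lim_{\gamma\to 0^+}\test_\gamma^N = D_N := D_{\mathrm{KL}}(\cN(\bP_N\m_\bbP, \bP_N\S_\bbP\bP_N) \,\|\, \cN(\bP_N\m_\bbQ, \bP_N\S_\bbQ\bP_N))$, by finite-dimensional continuity of the Gaussian KL; (c) $D_N \uparrow +\infty$ as $N \to \infty$, since data-processing renders $(D_N)$ non-decreasing and the classical filtration/martingale identity gives $\sup_N D_N = D_{\mathrm{KL}}(\cN(\m_\bbP,\S_\bbP) \,\|\, \cN(\m_\bbQ,\S_\bbQ)) = +\infty$, the latter by mutual singularity. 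Together, (a)--(c) yield $\liminf_{\gamma \to 0^+}\test_\gamma \geq D_N$ for every $N$, hence the limit equals $+\infty$.

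\emph{Main obstacle.} The hardest step is sub-claim (a). Whereas $\test_\gamma^N$ is a bona fide finite-dimensional Gaussian KL, $\test_\gamma$ itself is only a \emph{formal} expression, because $\S_\bbQ + \gamma\Id$ fails to be trace-class in infinite dimensions, so $\cN(\m_\bbQ, \S_\bbQ + \gamma\Id)$ is not a genuine measure on $\cH$ and standard data-processing does not apply off the shelf. For the Mahalanobis term, the truncation is smaller term-by-term in the $\S_\bbQ$-eigenbasis, giving the inequality immediately. For the log-Carleman--Fredholm term the difficulty is real, because the eigenbasis of $\bB_\gamma$ need not align with $\{e_i\}$, so truncation cannot be read off spectrally. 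A workable route is to verify Hilbert--Schmidt convergence $\bP_N\bB_\gamma\bP_N \to \bB_\gamma$ together with continuity of $\log\dettwo$ in $\|\cdot\|_{\mathrm{HS}}$, and to use the convexity of $\nu \mapsto \nu - \log(1+\nu)$ to control each eigenvalue contribution under projection. A more robust bypass is to lift the formal regularized Gaussians to genuine Gaussians on an enlarged space $\cH \oplus V$ (with $V$ an auxiliary Hilbert space on which $\gamma\Id$ is trace-class), recovering honest data-processing and lower semicontinuity of KL, and then letting $\dim V \to \infty$.
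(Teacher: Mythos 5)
Your finiteness and null-case arguments are fine, and your overall strategy --- compress to $V_N$, use lower semicontinuity of the finite-dimensional Gaussian KL as $\gamma\to 0$, then monotone convergence of relative entropy along the filtration generated by $\bP_N$ together with mutual singularity from Theorem~\ref{thm:singular} --- is a legitimate route, and genuinely different from the paper's. But the proposal has a real gap exactly where you flag it: sub-claim (a). Since $\test_\gamma$ is \emph{defined} by the formula \eqref{eq:KRKL} and not as the KL divergence of honest measures on $\cH$, you cannot invoke data processing to get $\test_\gamma^N\le\test_\gamma$; the comparison must be proved directly, and neither of your suggested repairs does so. The enlargement $\cH\oplus V$ cannot work as described: $\gamma\Id$ fails to be trace-class on $\cH$ itself, and appending an auxiliary factor does not change that, so $\cN(\m_\bbQ,\S_\bbQ+\gamma\Id)$ is still not a measure on the $\cH$-component. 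The other route would at best give $\log\dettwo(\Id+\bP_N\bB_\gamma\bP_N)\to\log\dettwo(\Id+\bB_\gamma)$, but the operator appearing in $\test_\gamma^N$ is \emph{not} $\bP_N\bB_\gamma\bP_N$: it is $(\bP_N\S_\bbQ\bP_N+\gamma\bP_N)^{-1/2}\,\bP_N(\S_\bbP-\S_\bbQ)\bP_N\,(\bP_N\S_\bbQ\bP_N+\gamma\bP_N)^{-1/2}$, and the inverse of a compression is not the compression of the inverse; moreover convergence in $N$ alone would not yield the inequality $\test_\gamma^N\le\test_\gamma$ at fixed $N$. Your term-by-term claim for the Mahalanobis piece likewise presumes the $\S_\bbQ$-eigenbasis, whereas you chose an eigenbasis of $\S_\bbP+\S_\bbQ$, and the operator inequality $\bP_N\bigl(\bP_N(\S_\bbQ+\gamma\Id)\bP_N\bigr)^{-1}\bP_N\preceq(\S_\bbQ+\gamma\Id)^{-1}$ it implicitly relies on is not something to wave at.

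The gap is fixable, but needs two missing ingredients made explicit: (i) monotonicity of $\test_\gamma^N$ in $N$, which \emph{does} follow from genuine data processing because compressing $V_{N+1}$ to $V_N$ maps the level-$(N+1)$ Gaussians to the level-$N$ ones; and (ii) the identification $\test_\gamma^N\to\test_\gamma$ as $N\to\infty$ at fixed $\gamma>0$ (this is precisely the ``sequence of projections'' reading of \eqref{eq:KRKL} the paper alludes to), which requires convergence of both the compressed Mahalanobis term and the compressed $\log\dettwo$ term, e.g.\ via convergence of the compressed resolvents plus continuity of $\dettwo$ in Hilbert--Schmidt norm. With (i) and (ii) you get $\test_\gamma^N\le\test_\gamma$, and your steps (b) and (c) then close the argument. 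For comparison, the paper's proof avoids this machinery entirely: it argues by contradiction that boundedness of $-\log\dettwo(\Id+\bH_\gamma)$ as $\gamma\to0$ forces $\|\bH_\gamma\|_{\textnormal{HS}}$ to remain bounded (coercivity of the Carleman--Fredholm determinant), which by a range/factorization argument forces $\S_\bbP=\S_\bbQ^{1/2}(\Id+\bH_0)\S_\bbQ^{1/2}$ with $\bH_0$ Hilbert--Schmidt, hence equivalence of the centred embeddings by Feldman--H\'ajek, contradicting Theorem~\ref{thm:singular}. As written, your proposal is an outline whose central step remains unproven.
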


Observe that, since relative entropy can be interpreted as the expected log-likelihood ratio, the left hand side of \eqref{eq:ker_regularized_KL} can be understood as a regularized likelihood ratio between the two Gaussian embeddings \(\cN(\m_\bbP, \S_\bbP)\) and \(\cN(\m_\bbQ, \S_\bbQ)\), which explains the title of the paper.
Plugging-in the empirical versions of $\bbP$ and $\bbQ$ yields a test statistic that is well-defined, interpretable, and computable, and which promises nearly perfect discrimination (as sample size grows and regularization strength decays).

\begin{remark}[Connection to kernel Fisher discriminant and spectral MMD]
The first term in the regularized relative entropy $D_{\gamma, \mathrm{KL}}$, i.e.\ the Mahalanobis term $\frac{1}{2}\| (\S_{\bbQ} + \gamma \Id)^{-\sfrac{1}{2}} (\m_{\bbP} - \m_{\bbQ}) \|_{\cH}^2,$
reconnects to well-known statistics which appeared in the literature. In particular, replacing the non-central covariance operators $\S_{\bbP}, \S_{\bbQ}$ with their central counterparts $\bSigma_{\bbP}, \bSigma_{\bbQ}$ yields expressions that are closely related to the kernel Fisher discriminant (KFD), and recovers the spectral-regularized Maximum Mean Discrepancy (SR-MMD) \cite{eric2007testing, hagrass2024spectral}. In this sense, our formulation extends these existing methods but also provides a new conceptual justification for their empirical effectiveness: these statistics can be viewed as approximations of the KL divergence between embedded Gaussian surrogates of $\bbP$ and $\bbQ$. This connection offers a unifying information-theoretic perspective on kernel-based two-sample testing. In fact, as remarked in \cite{santoro2025from}, criteria based on mean embeddings alone provide a weaker measure of discrimination, and are not guaranteed to harness the singularity effect achieved by covariance embeddings: see \cite[][Proposition 4.1]{santoro2025from}.
\end{remark}

\begin{remark}
    
    The proof of the Theorem shows in fact that \textit{centered} Gaussian embeddings $\cN(0,\S_\bbP), \cN(0,\S_\bbQ)$ obey the same dichotomic behavior. That is because the driving factor in the divergence is a regularised Hilbert-Schmidt discrepancy, coming from the Feldman-Hajek criterion. That is, we in fact prove that:
        \begin{equation}
     \label{eq:ker_regularized_HS}
        \lim_{\gamma \to 0}\|[\gamma \Id + \S_{\bbP}]^{-\sfrac{1}{2}}[\S_\bbQ - \S_\bbP]  [\gamma \Id + \S_{\bbP}]^{-\sfrac{1}{2}}\|_{\textnormal{HS}} =
            \begin{cases}
                0, \: &\textnormal{if} \quad \bbP=\bbQ,\\
                \infty, \: &\textnormal{if} \quad \bbP\neq \bbQ
            \end{cases}
        \end{equation}
       and the limit lower bounds the kernel relative entropy of the \textit{centered} (mean zero) Gaussian embeddings
       , which in turn lower bounds the relative entropy of the uncentred Gaussian embeddings. 
    Thus, central or non-central Gaussian embeddings obey a similar paradigm, which sets basis for two-sample testing.
\end{remark}

\begin{remark} 
The divergence in \eqref{eq:KRKL} was considered in \citet[][Definition~6] {minh2021regularized} and \cite{minh2020infinite} in the context of regularised divergences between Gaussian measures on Hilbert spaces. Indeed, it can be seen  \cite[][Proof of Theorem 3, p.371]{minh2020infinite} that the ridge-regularised Carleman-Fredholm determinant can be expressed as:
$$
 \log\dettwo \left(\Id + (\S_{\bbQ} + \gamma \Id)^{-\sfrac{1}{2}}(\S_{\bbP}  - \S_{\bbQ})(\S_{\bbQ} + \gamma \Id)^{-\sfrac{1}{2}} \right) =
 d_{\textrm{logdet}}^1(\S_{\bbP} + \gamma \Id,\S_{\bbQ} + \gamma \Id) 
$$
where $d_{\textrm{logdet}}^1$ for the $\alpha$-log-determinant divergence with $\alpha=1$,
\end{remark}

\begin{figure}[]
  \centering
  \begin{subfigure}[b]{0.5\textwidth}
    \includegraphics[width=.8\linewidth]{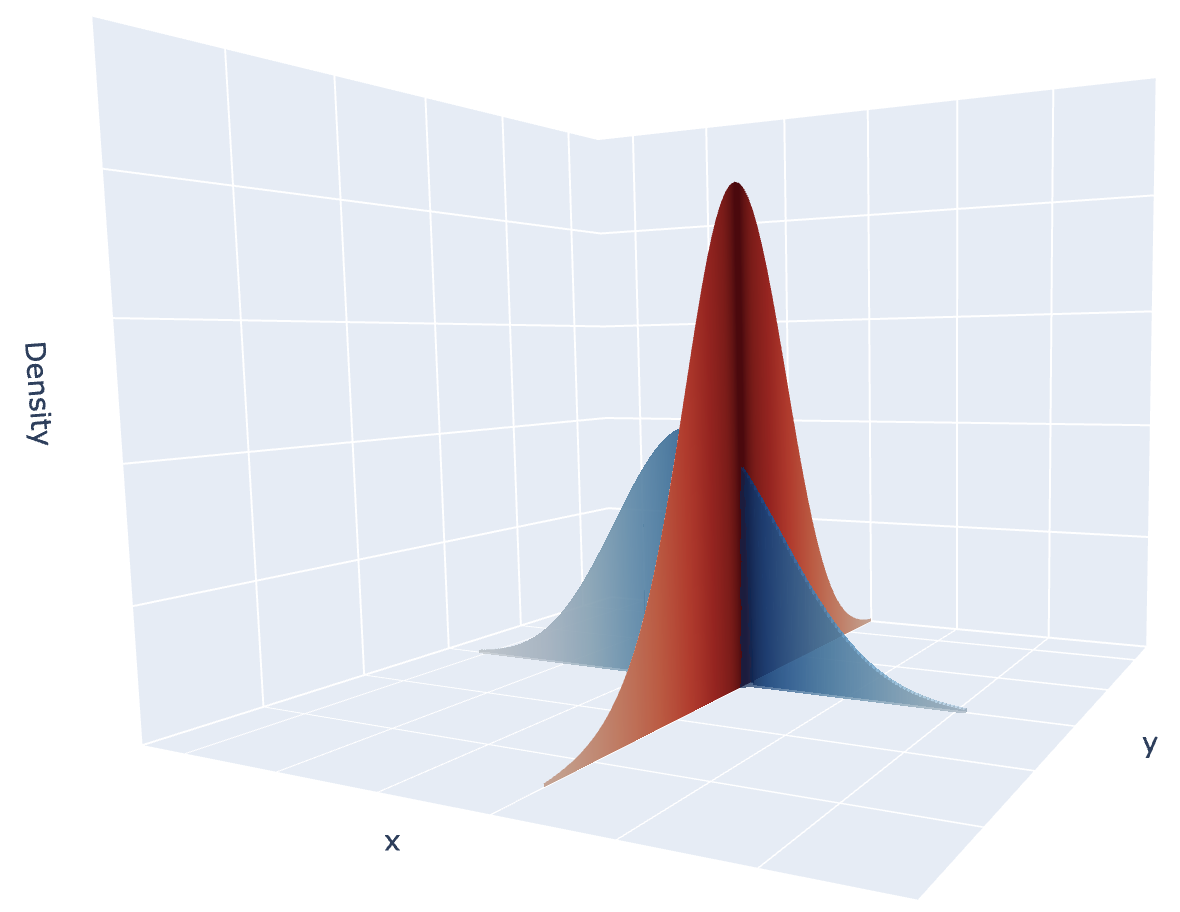}
\caption{Two mutually singular Gaussian measures in $\bbR^2$, $\cN(\mu_1,\Sigma_1)$ and $\cN(\mu_2,\Sigma_2)$.
}

  \end{subfigure}
  \hfill
  \begin{subfigure}[b]{0.4\textwidth}
    \includegraphics[width=\linewidth]{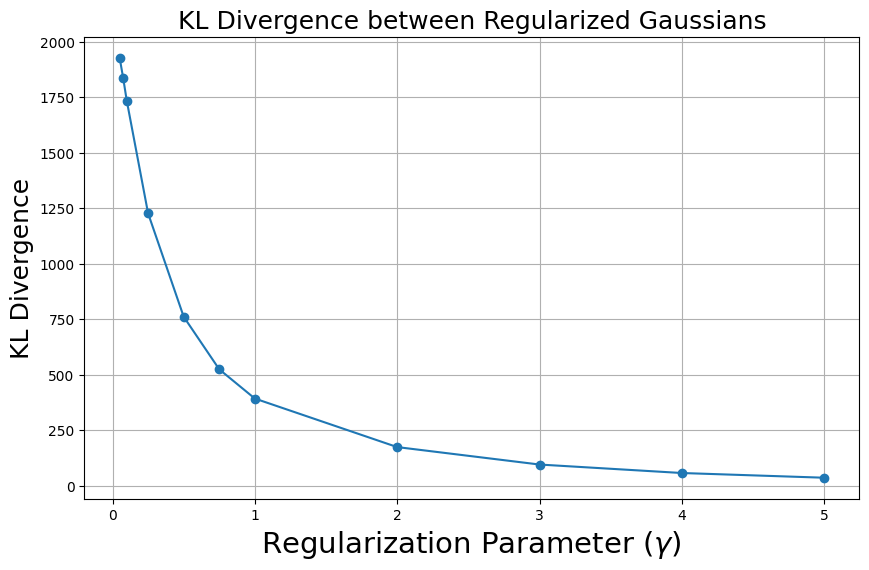}
    \caption{The Kullback-Leibler divergence obtained adding a $\gamma$-ridge to the respective covariances: $\operatorname{KL}\left( \cN(\mu_1,\Sigma_1 + \gamma I) || \cN(\mu_2,\Sigma_2 + \gamma I)\right)$, for decreasing values of $\gamma$.}
  \end{subfigure}

  \vspace{1em}

  \begin{subfigure}[b]{1\textwidth}
    \centering
    \includegraphics[width=\linewidth]{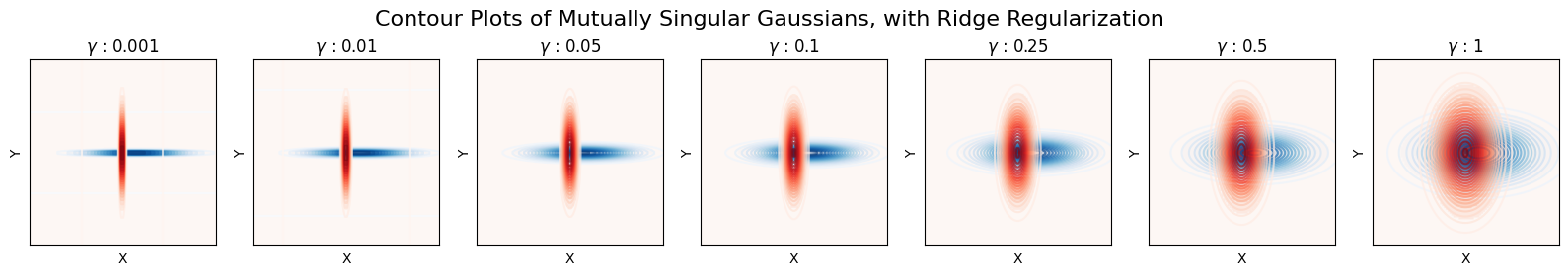}
    \caption{Contour lines of the regularised densities of the two Gaussians, for increasing regularisation.}
  \end{subfigure}
  \caption{Regularised KL divergence between mutually singular Gaussian measures on $\bbR^2$.}
\label{fig:regkl}
\end{figure}

\begin{example}[Regularised likelihood ratio of scaled Brownian Motion]
Let \( k_v(s,t) = v \cdot \min(s,t) \) denote the covariance kernel of standard Brownian motion on the interval \([0,1]\) with variance parameter \( v > 0 \). Let \( C_v : L^2([0,1]) \to L^2([0,1]) \) be the corresponding integral operator:
\[
(\bC_v f)(t) = \int_0^1 k_v(s,t) f(s) \, ds = v \int_0^1 \min(s,t) f(s) \, ds.
\]
 The kernel \( k(s,t) = \min(s,t) \) has the orthonormal eigenfunctions and eigenvalues respectively given by
\[
\phi_n(t) = \sqrt{2} \sin\left( \left(n - \tfrac{1}{2} \right)\pi t \right),
\quad \lambda_n = \frac{1}{((n - \tfrac{1}{2})\pi)^2},
\qquad n = 1, 2, 3, \dots.
\]
It is trivial to see that, for \( \bC_1 = \bC_{v_1} \) and \( \bC_2 = \bC_{v_2} \) for two variances \( v_1, v_2 > 0 \), the two operators do not satisfy the Feldman-Hajek conditions, and therefore that the Gaussian measures $\cN(\mathbf{0},\C_1)$ and $\cN(\mathbf{0}, \C_2) \big)$ are mutually singular.
We wish to study the regularised kernel likelihood ratio $D_{\gamma, \mathrm{KL}} \big( \cN(\mathbf{0},\C_1)\:||\:  \cN(\mathbf{0}, \C_2) \big)$
for regularisation $\gamma>0$. In the Appendix (Lemma~\ref{lem:lowerbound}) we see that this is lower bounded by:

\[
 \|\bH_\gamma \|_{\mathrm{HS}}^2 = \sum_{n=1}^\infty \left\| \bH_\gamma  \phi_n \right\|_{L^2}^2
= \sum_{n=1}^\infty \left( \frac{(v_2 - v_1) \lambda_n}{v_1 \lambda_n + \gamma} \right)^2
= (v_2 - v_1)^2 \sum_{n=1}^\infty \left( \frac{1}{((n - \tfrac{1}{2})\pi)^2 v_1 + \gamma} \right)^2,
\]
where $\bH_\gamma = (\bC_1 + \gamma \Id)^{-1/2} (\bC_2 - \bC_1)(\bC_1 + \gamma \Id)^{-1/2}$. Note that the sum converges, since the denominator grows quadratically.
As $\gamma\to 0$, however, the sum  diverges. To see this, we approximate it by an integral:
\[
\sum_{n=1}^\infty \frac{1}{(((n - \tfrac{1}{2})\pi)^2 v_1 + \gamma)^2}
\approx \int_0^\infty \frac{1}{(\pi^2 v_1 x^2 + \gamma)^2} \, dx = \frac{1}{4 \gamma^{3/2} \sqrt{v_1}}.
\]
Therefore, the $\gamma$-regularised likelihood ratio of $\cN(0,\bC_1), \cN(0,\bC_2)$ diverges as
$ \frac{(v_2 - v_1)^2}{4 \sqrt{v_1}} \cdot \gamma^{-3/2}$ as $ \gamma \to 0^+.$
\end{example}

\subsection{The Two-Sample Test}\label{sec:the_test}
Given two samples $\{X_1, \dots, X_n\} \overset{\textsc{iid}}{\sim} \mathbb{P}$ and $\{Y_1, \dots, Y_m\} \overset{\textsc{iid}}{\sim} \mathbb{Q}$ inducing empirical distributions $
\bbP_n =\frac{1}{n}\sum_{i=1}^n \delta_{X_i},
$ and $
\bbQ_m =\frac{1}{n}\sum_{i=1}^m \delta_{Y_i}
$, the discussion in the previous paragraph motivates the use of the test statistic
\begin{equation}
\label{test_emp_KL}
    \test_{\gamma_{nm}}^{\textnormal{KL}} = \test^{\textnormal{KL}}_{\gamma_{nm}}(\bbP_n,\bbQ_n):= D_{\gamma_{nm}, \mathrm{KL}} \big( \cN(\m_{\bbP_n}, \S_{\bbP_n}) \:||\:  \cN(\m_{\bbQ_n}, \S_{\bbQ_n}) \big),
\end{equation}
for $\gamma =\gamma_{nm}$, vanishing at a suitable rate as $n,m\to\infty$, in order to ensure that the level is maintained while targeting maximal power. 
  Specfically, we reject when the statistic's value exceeds a threshold $\tau_{nm}(\alpha)$, chosen to ensure adherence to a prespecified level $\alpha\in(0,1)$: 
 \begin{equation}\label{eq:decision_function}
        \psi_{nm} = \begin{cases}
            1\quad \textnormal{if} \quad \test^{\textnormal{KL}}_{\gamma_{nm}}(\bbP_n,\bbQ_m) \geq  \tau_{nm}(\alpha)\\
            0\quad \textnormal{otherwise.}
        \end{cases}
    \end{equation}
The critical value $ \tau_{nm}$ will be chosen by way of a permutation method. For given values \( n, m \geq 0 \), let \( \{Z_j\}_{j=1}^{n+m} \) denote the pooled sample \( \{X_1, \dots, X_n, Y_1, \dots, Y_m\} \). Let \( \sigma  \) be a uniformly random permutation of \( \{1, \dots, n+m\} \), denoted $\sigma\in \mathfrak{S}_{n+m}$. The \( (1 - \alpha) \)-quantile of the permutation distribution of the regularized test statistic is defined as:
\begin{equation}\label{decision:kl}
    \hat q^{\gamma}_{1-\alpha} := 
    \inf\left\{t \geq 0 \::\: P_\sigma\left( \test_{\gamma}\left( \frac{1}{n} \sum_{j=1}^n \delta_{Z_{\sigma(j)}}, \frac{1}{m} \sum_{j=1}^m \delta_{Z_{\sigma(n+j)}} \right) \leq t \right) \geq 1 - \alpha \right\},
\end{equation}
where \( P_\sigma \) denotes the probability with respect to the random permutation \( \sigma \). For a test at level $\alpha\in(0,1)$, then set the critical value to be the corresponding $1-\alpha$ quantile:
$$\tau_{nm}(\alpha)\equiv  \hat{q}^{\gamma_{nm}}_{1-\alpha}.$$

In the next section, we establish guarantees for this decision rule. We first establish approximation guarantees of population-level quantities by their empirical counterparts, and determine the choice of a suitable regularisation rate $\gamma = \gamma_{n,m}$ to ensure consistency of the permutation-calibrated test under fixed alternatives. Then, we establish consistency under local alternatives and investigate the corresponding separation boundary.

\subsection{Theoretical Guarantees}
Our first result provides a probabilistic bound on the deviation between the empirical and population versions of the regularized test statistic.

\begin{proposition}[Estimation]\label{prop:prob_bound:KL}
    Let $\{X_j\}_{j\geq 1},\{Y_j\}_{j\geq 1}$ be i.i.d.\ sequences drawn from $\bbP,\bbQ \in \cP(\cX)$, respectively.
 For any $\varepsilon>0$:
    \begin{equation*}\label{eq:probtailKL}
    \begin{split}
        P\left( \left\lvert\test^{\textnormal{KL}}_{\gamma}(\bbP_n,\bbQ_m) -  \test^{\textnormal{KL}}_{\gamma}(\bbP,\bbQ) \right\rvert  > \frac{12K^{3/2}}{\gamma^2}\left( \frac{1}{\sqrt{n}} + \frac{1}{\sqrt{m}}\right) + \varepsilon  \right)
       \leq  \exp\left( - \frac{\gamma^4 n m }{18K^4(n + m)}\varepsilon^2\right)
    \end{split}
    \end{equation*}
In particular, if $n=m$ and $\gamma = Cn^{\sfrac{( \beta - 1)}{4}}$ for some $0<\beta<1$, then:
 \begin{equation*}\label{eq:probtailKL:n=m}
    \begin{split}
       P\left( \left\lvert\test^{\textnormal{KL}}_{\gamma}(\bbP_n,\bbQ_n) -  \test^{\textnormal{KL}}_{\gamma}(\bbP,\bbQ) \right\rvert  > C_1n^{-\beta/2} + \varepsilon  \right)
       \leq  \exp\left( - C_2n^{\beta}\varepsilon^2\right)
    \end{split}
    \end{equation*}
for all $\gamma>0$ and measures $\bbP,\bbQ\in\cP(\cX)$, and universal constants $C_1,C_2$.
\end{proposition}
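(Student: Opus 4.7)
The plan is to express $\test_\gamma^{\textnormal{KL}}$ as a Lipschitz functional of the four embeddings $(\m_\bbP,\m_\bbQ,\S_\bbP,\S_\bbQ)$, then combine this with standard vector-valued Hoeffding estimates for the empirical embeddings: the first moment bounds control the bias, and McDiarmid's bounded-differences inequality controls the tail. Decompose
\[\test_\gamma^{\textnormal{KL}}(\bbP,\bbQ) = \tfrac{1}{2}M_\gamma(\bbP,\bbQ) + \tfrac{1}{2}L_\gamma(\bbP,\bbQ),\]
where $M_\gamma := \|(\S_\bbQ+\gamma\Id)^{-\sfrac{1}{2}}(\m_\bbP-\m_\bbQ)\|_\cH^2$ and $L_\gamma := -\log\dettwo(\Id + H_\gamma)$ with $H_\gamma := (\S_\bbQ+\gamma\Id)^{-\sfrac{1}{2}}(\S_\bbP-\S_\bbQ)(\S_\bbQ+\gamma\Id)^{-\sfrac{1}{2}}$, and treat the two summands separately.

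I would bound each term in terms of the four inputs using functional calculus. For $M_\gamma$, the estimate $\|(\S+\gamma\Id)^{-\sfrac{1}{2}}\|_{\text{op}}\leq \gamma^{-\sfrac{1}{2}}$ together with the operator-Lipschitz property $\|(\S+\gamma\Id)^{-\sfrac{1}{2}}-(\S'+\gamma\Id)^{-\sfrac{1}{2}}\|_{\text{op}}\leq \tfrac{1}{2}\gamma^{-\sfrac{3}{2}}\|\S-\S'\|_{\text{op}}$ (consequence of the scalar Lipschitz estimate for $x\mapsto (x+\gamma)^{-\sfrac{1}{2}}$), combined with $\|\m_\bbP\|_\cH,\|\m_\bbQ\|_\cH\leq \sqrt K$, yields Lipschitz constants of order $\sqrt K/\gamma$ in the means and $K/\gamma^2$ in $\S_\bbQ$. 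For $L_\gamma$, the algebraic identity
\[\Id+H_\gamma = (\S_\bbQ+\gamma\Id)^{-\sfrac{1}{2}}(\S_\bbP+\gamma\Id)(\S_\bbQ+\gamma\Id)^{-\sfrac{1}{2}}\]
shows $\Id+H_\gamma\succeq \tfrac{\gamma}{K+\gamma}\Id$, so the Carleman--Fredholm determinant is smooth on any convex combination of the empirical and population operators. Using $\nabla_H\log\dettwo(\Id+H)=-H(\Id+H)^{-1}$ in the Hilbert--Schmidt pairing, together with a chain-rule decomposition of $\delta H_\gamma$ into its three sandwich-perturbation summands, produces Lipschitz constants of the required order in $\|\cdot\|_{\textnormal{HS}}$ of $(\S_\bbP,\S_\bbQ)$.

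Two vector-valued Hoeffding estimates then conclude the argument. First, since $\|k_x\|_\cH^2=k(x,x)\leq K$ and $\|k_x\otimes k_x\|_{\textnormal{HS}}=k(x,x)\leq K$, Jensen's inequality applied to the i.i.d.\ variances yields $\bbE\|\m_{\bbP_n}-\m_\bbP\|_\cH\leq \sqrt{K/n}$ and $\bbE\|\S_{\bbP_n}-\S_\bbP\|_{\textnormal{HS}}\leq K/\sqrt n$; composing with the Lipschitz constants of the previous step and applying a triangle inequality produces the deterministic bias term $12 K^{\sfrac{3}{2}}/\gamma^2\cdot (1/\sqrt n+1/\sqrt m)$. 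Second, replacing a single $X_i$ changes $\m_{\bbP_n}$ by at most $2\sqrt K/n$ in $\cH$ and $\S_{\bbP_n}$ by at most $2K/n$ in $\|\cdot\|_{\textnormal{HS}}$, so by the same Lipschitz bounds $\test_\gamma^{\textnormal{KL}}(\bbP_n,\bbQ_m)$ satisfies bounded differences with per-sample constants $c_X\lesssim K^{2}/(\gamma^2 n)$ and $c_Y\lesssim K^{2}/(\gamma^2 m)$. McDiarmid's inequality gives
\[\bbP\bigl(|\test_\gamma^{\textnormal{KL}}(\bbP_n,\bbQ_m)-\bbE\test_\gamma^{\textnormal{KL}}(\bbP_n,\bbQ_m)|>\varepsilon\bigr) \leq \exp\!\Bigl(-\frac{\gamma^4 nm}{18 K^4 (n+m)}\varepsilon^2\Bigr),\]
and a triangle inequality with the bias bound yields the first display. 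The second display, for $n=m$ and $\gamma=Cn^{(\beta-1)/4}$, is then a direct substitution: $\gamma^2 = C^2 n^{(\beta-1)/2}$ turns the bias into $C_1 n^{-\beta/2}$ and the exponent into $C_2 n^{\beta}\varepsilon^2$.

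The main obstacle I anticipate is establishing the Lipschitz bound on $L_\gamma$ with the correct dependence on $\gamma$ and $K$. A crude use of $\|H(\Id+H)^{-1}\|_{\textnormal{HS}}\leq \|H\|_{\textnormal{HS}}\|(\Id+H)^{-1}\|_{\text{op}}$ introduces spurious powers of $K/\gamma$ from the two pieces; recovering the stated rate relies on the tighter factorisation $H_\gamma(\Id+H_\gamma)^{-1}=\Id-(\S_\bbQ+\gamma\Id)^{\sfrac{1}{2}}(\S_\bbP+\gamma\Id)^{-1}(\S_\bbQ+\gamma\Id)^{\sfrac{1}{2}}$ and the cancellation it affords with the shared regulariser $\gamma\Id$, together with balancing the three terms arising from perturbing each factor of the sandwich $H_\gamma$ between $\|\cdot\|_{\textnormal{HS}}$ and $\|\cdot\|_{\text{op}}$. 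Certifying that the linear interpolation between empirical and population operators stays inside the smooth region $\{-1\notin\sigma(H)\}$ of $\dettwo$ requires the same eigenvalue lower bound $\gamma/(K+\gamma)$.
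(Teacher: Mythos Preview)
Your proposal is correct and follows the same overall route as the paper: bound $|\test_\gamma(\bbP_n,\bbQ_m)-\test_\gamma(\bbP,\bbQ)|$ by a Lipschitz-type inequality in the empirical embeddings, control the bias via expected norm deviations, and obtain the sub-Gaussian tail from McDiarmid's bounded-differences inequality. The technical choices differ in two places. First, the paper applies McDiarmid not to $\test_\gamma$ directly but to the simpler surrogate $\Delta=\|\S_{\bbP_n}-\S_\bbP\|_{\textnormal{Tr}}+\|\S_{\bbQ_m}-\S_\bbQ\|_{\textnormal{Tr}}$, whose bounded differences are trivially $2K/n$ and $2K/m$, and then transfers via the deterministic inequality $|\test_\gamma(\bbP_n,\bbQ_m)-\test_\gamma(\bbP,\bbQ)|\leq 6K\gamma^{-2}\Delta$; this keeps the Lipschitz constant out of the concentration step and yields clean explicit constants. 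Second, for the $\log\dettwo$ perturbation the paper does not use the gradient and a mean-value argument, but rather the elementary scalar bound $|(x-\log(1+x))-(y-\log(1+y))|\leq |x^2/(1+x)-y^2/(1+y)|$, which leads directly to $\|(\Id+\bH_\gamma)^{-1}\bH_\gamma^2-(\Id+\hat\bH_\gamma)^{-1}\hat\bH_\gamma^2\|_{\textnormal{Tr}}$ and then to the same factorisation $(\Id+\bH_\gamma)^{-1}=\S_{\bbP,\gamma}^{1/2}\S_{\bbQ,\gamma}^{-1}\S_{\bbP,\gamma}^{1/2}$ you identify. Because the paper works in trace norm throughout, it needs a symmetrisation--Khintchine argument to bound $\bbE\|\S_{\bbP_n}-\S_\bbP\|_{\textnormal{Tr}}$, whereas your Hilbert--Schmidt route needs only Jensen on a Hilbert-space norm; your version is thus slightly more elementary on the bias side, at the cost of the extra care you flag in pushing the $L_\gamma$ Lipschitz bound through with the right powers of $K$ and $\gamma$.
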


Proposition~\ref{prop:prob_bound:KL} yields a key property of the statistic \eqref{test_emp_KL}, reflecting the same dichotomy as at the population level (Theorem~\ref{thm:main}), and comprising a key step to establishing consistency of the proposed test.

\begin{corollary}[0-1 Law]\label{cor:01law}
    Let $\{X_j\}_{j\geq 1},\{Y_j\}_{j\geq 1}$ be i.i.d.\ sequences drawn from $\bbP,\bbQ \in \cP(\cX)$, respectively. and consider a sequence of regularization parameters $\gamma_{nm}$ indexed by the sample sizes.
        If $\gamma_{nm}^4\min\{n,m\}\to\infty$ as $(n,m)\to\infty$, then, for any $\varepsilon>0$: 
        $$
        P(\test^{\textnormal{KL}}_{\gamma_{nm}}(\bbP_n,\bbQ_m) > \varepsilon) = \begin{cases}
            o(1) &\text{under }H_0,\\
            1 - o(1) &\text{under }H_1,
        \end{cases}
        \quad \text{as } (n,m)\to\infty.
        $$
\end{corollary}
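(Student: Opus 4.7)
The plan is to derive both limits directly from Proposition~\ref{prop:prob_bound:KL}, treating the null and the alternative separately, with Theorem~\ref{thm:main} supplying the population-level dichotomy needed under $H_1$. Throughout, write $a_{nm} := 12K^{3/2}\gamma_{nm}^{-2}(n^{-1/2}+m^{-1/2})$ for the deterministic bias appearing in the proposition, and $b_{nm} := \gamma_{nm}^4\, nm/(n+m)$ for the rate in its exponent. The hypothesis $\gamma_{nm}^4\min\{n,m\}\to\infty$ immediately yields $b_{nm}\to\infty$ (since $nm/(n+m)\ge\min\{n,m\}/2$) and $a_{nm}\to 0$ (since both $\gamma_{nm}^2\sqrt{n}$ and $\gamma_{nm}^2\sqrt{m}$ diverge).

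Under $H_0$, the identity $\bbP=\bbQ$ forces $\test^{\textnormal{KL}}_{\gamma_{nm}}(\bbP,\bbQ)=0$ for every $\gamma_{nm}>0$, since $\test_\gamma$ is a divergence vanishing at equal arguments. For $n,m$ large enough that $a_{nm}<\varepsilon/2$, I would apply Proposition~\ref{prop:prob_bound:KL} with slack $\varepsilon-a_{nm}>\varepsilon/2$, obtaining
$$P\bigl(\test^{\textnormal{KL}}_{\gamma_{nm}}(\bbP_n,\bbQ_m)>\varepsilon\bigr)\le \exp\bigl(-c\, b_{nm}\,(\varepsilon/2)^2\bigr)\longrightarrow 0.$$

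Under $H_1$, $\bbP\neq\bbQ$ and, since $\gamma_{nm}\to 0$ (implicit in the intended design of the statistic), Theorem~\ref{thm:main} gives $\test^{\textnormal{KL}}_{\gamma_{nm}}(\bbP,\bbQ)\to\infty$. For all $n,m$ large enough, the quantity $\delta_{nm}:=\test^{\textnormal{KL}}_{\gamma_{nm}}(\bbP,\bbQ)-\varepsilon-a_{nm}$ is therefore positive, and in fact diverges. Writing $\{\test^{\textnormal{KL}}_{\gamma_{nm}}(\bbP_n,\bbQ_m)\le\varepsilon\}\subseteq\{|\test^{\textnormal{KL}}_{\gamma_{nm}}(\bbP_n,\bbQ_m)-\test^{\textnormal{KL}}_{\gamma_{nm}}(\bbP,\bbQ)|>a_{nm}+\delta_{nm}\}$ and invoking the proposition again yields
$$P\bigl(\test^{\textnormal{KL}}_{\gamma_{nm}}(\bbP_n,\bbQ_m)\le\varepsilon\bigr)\le \exp\bigl(-c\,b_{nm}\,\delta_{nm}^2\bigr)\longrightarrow 0,$$
and hence $P\bigl(\test^{\textnormal{KL}}_{\gamma_{nm}}(\bbP_n,\bbQ_m)>\varepsilon\bigr)\to 1$.

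The main obstacle is to verify that bias correction and concentration rate cooperate simultaneously. The single assumption $\gamma_{nm}^4\min\{n,m\}\to\infty$ is essentially tight in this respect: it balances the $\gamma_{nm}^{-2}$ inflation of the bias $a_{nm}$ against the $\gamma_{nm}^{4}$ deflation of the concentration rate $b_{nm}$. Under $H_1$, the argument additionally leverages $\gamma_{nm}\to 0$ to unlock the divergence in Theorem~\ref{thm:main}, which in turn provides population-level separation far in excess of what the concentration bound alone would require, making the conclusion automatic once the dichotomy has been invoked.
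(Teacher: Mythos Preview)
Your proof is correct and follows essentially the same route as the paper: both arguments invoke Proposition~\ref{prop:prob_bound:KL} for concentration around the population statistic, use $\test_\gamma(\bbP,\bbQ)=0$ under $H_0$, and appeal to Theorem~\ref{thm:main} for divergence of the population statistic under $H_1$. Your presentation is somewhat more explicit about the deterministic bias term $a_{nm}$ and the exponent rate $b_{nm}$, and you rightly flag that the $H_1$ case tacitly requires $\gamma_{nm}\to 0$ (which is not implied by $\gamma_{nm}^4\min\{n,m\}\to\infty$ alone); the paper's own proof relies on the same unstated assumption.
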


\medskip

We can next prove that our testing procedure is consistent against fixed alternatives when calibrated by permutation tests: that is, with probability approaching $1,$ as the sample size grows, the test \eqref{decision:kl} rejects the null hypothesis when the null hypothesis is false, for a fixed alternative $H_1$.

\begin{theorem}[Consistency against fixed alternatives]\label{thm:consistent}
    Let $X = \{X_j\}_{j\geq 1}, Y=\{Y_j\}_{j\geq 1}$ be i.i.d.\ sequences drawn from $\bbP,\bbQ \in \cP(\cX)$, respectively, and let $\gamma_{nm}$ be a sequence of regularisation parameters.
    Set $q^{\gamma_{nm}}_{1-\alpha}$ to be the $1-\alpha$ quantile of the permutation distribution of the test statistic, based on the pooled sample.
    Consider the decision function:
    \begin{equation*}
        \psi_{nm} = \begin{cases}
            1\quad \textnormal{if} \quad \test^{\textnormal{KL}}_{\gamma_{nm}}(\bbP_n,\bbQ_m) \geq  q^{\gamma_{nm}}_{1-\alpha}\\
            0\quad \textnormal{otherwise.}
        \end{cases}
    \end{equation*}
    Then, under the null, the Type I error is controlled at $\alpha$,
    whereas, under the alternative, we correctly reject the null with probability converging to $1$ as sample sizes diverge and $\gamma$ vanishes at a suitable rate. That is :
    \begin{align*}
    E_{H_0}[\psi_{n}(\bbP_n,\bbQ_n)]\leq \alpha,
    \qquad\text{and}\qquad  E_{H_1}[\psi_{n}(\bbP_n,\bbQ_n)]\geq   1 - o(1),
\end{align*}
if $\gamma^4 \cdot\min\{n,m\}\to \infty$, as $(n,m)\to\infty$.
\end{theorem}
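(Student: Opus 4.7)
The statement has two parts: Type I control and power under a fixed alternative. The Type I bound is standard permutation-test theory. Under $H_0$ we have $\bbP=\bbQ$, so the pooled sample $Z=(Z_1,\dots,Z_{n+m})$ is i.i.d., hence exchangeable. Since $\test^{\textnormal{KL}}_{\gamma_{nm}}$ is a deterministic function of the two empirical measures, conditionally on the order statistics of $Z$ the observed statistic has the same distribution as $\test^{\textnormal{KL}}_{\gamma_{nm}}(\tfrac{1}{n}\sum_{j\leq n}\delta_{Z_{\sigma(j)}},\tfrac{1}{m}\sum_{j\leq m}\delta_{Z_{\sigma(n+j)}})$ for a uniformly random permutation $\sigma \in \mathfrak{S}_{n+m}$. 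Since $q^{\gamma_{nm}}_{1-\alpha}$ is by construction the $(1-\alpha)$-quantile of this conditional permutation law, one obtains $E_{H_0}[\psi_{nm}\mid Z] \leq \alpha$ and hence the marginal bound $E_{H_0}[\psi_{nm}]\leq\alpha$.

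For the power statement, the plan is to show that under $H_1$: (i) the observed statistic $\test^{\textnormal{KL}}_{\gamma_{nm}}(\bbP_n,\bbQ_m)$ exceeds any fixed positive threshold with probability tending to $1$, and (ii) the permutation quantile $q^{\gamma_{nm}}_{1-\alpha}$ converges to $0$ in probability. These two facts combined give, for any $\varepsilon>0$,
$$
E_{H_1}[\psi_{nm}] \;\geq\; P\!\left(\test^{\textnormal{KL}}_{\gamma_{nm}}(\bbP_n,\bbQ_m)>2\varepsilon\right) - P\!\left(q^{\gamma_{nm}}_{1-\alpha}>\varepsilon\right) \;\longrightarrow\; 1.
$$
Claim (i) is an immediate application of Corollary~\ref{cor:01law} to the alternative $\bbP\neq\bbQ$ under the rate $\gamma_{nm}^4\min\{n,m\}\to\infty$. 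The substantive work lies in claim (ii). Conditionally on the pool $Z$, the permuted pair $(\tilde{\bbP}_n,\tilde{\bbQ}_m)$ consists of two empirical measures obtained by sampling without replacement from the pooled empirical measure $\bar{\bbP}_{n+m}$. The corresponding population-level statistic $\test^{\textnormal{KL}}_{\gamma_{nm}}(\bar{\bbP}_{n+m},\bar{\bbP}_{n+m})$ is identically zero, so one needs to prove concentration of $\test^{\textnormal{KL}}_{\gamma_{nm}}(\tilde{\bbP}_n,\tilde{\bbQ}_m)$ around $0$ uniformly in the pool. The bounded-differences argument underlying Proposition~\ref{prop:prob_bound:KL}---control of the empirical kernel mean and covariance embeddings in $\cH$ and Hilbert--Schmidt norm, followed by operator-level smoothness of the map $(\m,\S)\mapsto D_{\gamma,\mathrm{KL}}$---applies verbatim in this setting upon replacing the i.i.d.\ concentration inequality for empirical means with its sampling-without-replacement analogue (Hoeffding--Serfling, or McDiarmid applied to permutations). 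This yields a permutation-version of Proposition~\ref{prop:prob_bound:KL} which, under $\gamma_{nm}^4\min\{n,m\}\to\infty$, guarantees $q^{\gamma_{nm}}_{1-\alpha}\to 0$ in probability.

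\paragraph{Main obstacle.} The principal technical hurdle is (ii): transferring the i.i.d.\ concentration bound of Proposition~\ref{prop:prob_bound:KL} to the permutation law, which corresponds to sampling without replacement from a data-dependent pooled distribution. The cleanest route is a Hoeffding--Serfling reduction (sampling without replacement is dominated by sampling with replacement in convex concentration sense), combined with the Hilbert-space perturbation estimates already in place to prove Proposition~\ref{prop:prob_bound:KL}. Once the permutation statistic concentrates at $0$ at rate $K^{3/2}\gamma_{nm}^{-2}/\sqrt{\min\{n,m\}}$ uniformly in the pool, standard quantile-continuity gives $q^{\gamma_{nm}}_{1-\alpha}\to 0$ in probability, and the combination of (i) and (ii) yields $E_{H_1}[\psi_{nm}]\to 1$.
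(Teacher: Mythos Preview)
Your proposal is correct and follows the same high-level decomposition as the paper's proof: establish that (i) under $H_1$ the observed statistic exceeds any fixed $\varepsilon>0$ with probability tending to $1$ (via Corollary~\ref{cor:01law}), and (ii) the permutation quantile $q^{\gamma_{nm}}_{1-\alpha}$ tends to zero in probability. The Type~I control via exchangeability is standard and the same in both.

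The difference lies in how claim~(ii) is argued. The paper proceeds softly: the pooled empirical measure converges weakly a.s.\ to the mixture $\tfrac{1}{2}(\bbP+\bbQ)$, so by continuous mapping the permuted statistic converges, for fixed $\gamma$, to $\test^{\textnormal{KL}}_\gamma\!\big(\tfrac{1}{2}(\bbP+\bbQ),\tfrac{1}{2}(\bbP+\bbQ)\big)=0$; it then invokes the null case of Corollary~\ref{cor:01law} to cover the vanishing-$\gamma_{nm}$ regime, in effect treating the two permuted half-samples as i.i.d.\ draws from the mixture. Your route is more explicit: rerun the bounded-differences argument of Proposition~\ref{prop:prob_bound:KL} under the permutation law (sampling without replacement from the pool), using a Hoeffding--Serfling or permutation McDiarmid inequality, with the pooled empirical measure playing the role of both reference distributions so that the population-level statistic is identically zero. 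This sidesteps the informal identification of a random permutation with i.i.d.\ sampling from the mixture, at the price of re-deriving the concentration bound in the without-replacement setting. Both approaches lead to the same conclusion under the same rate condition $\gamma_{nm}^4\min\{n,m\}\to\infty$; yours is somewhat more self-contained, the paper's slightly more economical in that it recycles Corollary~\ref{cor:01law} wholesale.
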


Next we investigate the separation boundary (or contiguity radius) between the null and alternative, which quantifies the minimal detectable discrepancy between distributions in terms of a given probability metric. Specifically, for a pseudometric $\rho$ on (a class of) distributions, we define the set of $\Delta$-separated alternatives as
$$
\cP_\Delta := \left\{(\bbP, \bbQ) \in \cC^2 : \rho^2(P, Q) \geq \Delta \right\}.
$$
Assume, for simplicity that $n=m$. For a given separation $\Delta>0$, level $\alpha$, and power $1-\delta$, we wish to quantify the sample size $n$ that is required in order to guarantee level at most $\alpha$ and power at least $1-\delta$, \emph{uniformly} over $\cP_\Delta$. This trade-off defines the separation boundary of the test and captures its uniform sensitivity to local alternatives.

The next Theorem delineates our test's separation boundary, when the discrepancy between $\bbP$ and $\bbQ$ is measured via a suitable $L^2$ norm of their characteristic functions. To obtain this result, we will additionally require the following conditions:
\begin{itemize}
    \item[\textbf{(A3)}] $\cX\subset \bbR^p$, for $p\geq 1$ is a compact euclidean subset.
    \item[\textbf{(A4)}] the kernel is translation inveriant, i.e., $k(\cdot,\cdot) = \Psi( \cdot - \cdot)$, for $\Psi:\cX\to \bbR$   bounded, continuous and positive definite.
\end{itemize}

In this restricted setting, let us 
define the following class of $\Delta$-separated alternatives:
$$
\cP_{\Delta}^{\Lambda}:= \left\lbrace (\bbP,\bbQ) \in\cP(\cX) \::\:  \|\phi(\bbP) - \phi(\bbQ)\|_{L^2(\cX,\Lambda)} >  \Delta \right\rbrace
$$
where $\Lambda = \cF(\Psi^2)$ denotes the Fourier transform of the squared kernel generator $\Psi^2$, and $\phi(\bbP),\phi(\bbQ)$  the characteristic functions of probability measures $\bbP,\bbQ\in\cP(\cX)$.

\begin{theorem}\label{thm:sepa_bound}
Let $\{X_j\}_{j\geq 1},\{Y_j\}_{j\geq 1}$ be i.i.d.\ sequences drawn from $\bbP, \bbQ \in \cP(\cX)$, respectively. 
Take $\gamma_n \geq C^2n^{-\sfrac{(1-\beta)}{4}}$ for some $C>0$ and $\beta \in(0,1)$.
Fix $\alpha\in(0,1)$ and $\delta\in(0,1)$.
Consider the decision function:
\begin{equation}\label{eq:stu-test}
        \psi_n = \begin{cases}
            1\quad \textnormal{if} \quad \test^{\textnormal{KL}}_{\gamma_{n}}(\bbP_n,\bbQ_n) \geq 
            u_{\alpha, n}\\
            0\quad \textnormal{otherwise.}
        \end{cases}
\end{equation}
with threshold:
\begin{equation*}
    u_{\alpha, n} := n^{-\sfrac{\beta}{2}} 6K\left(  {4K^{1/2}} +  KC\sqrt{\log \alpha^{-1} } \right).
\end{equation*}
Then
\begin{align*}
    \sup_{(\bbP,\bbQ)\in\cP_{\Delta}^{\Lambda} } E_{H_0}[\psi_{n}(\bbP_n,\bbQ_n)]\leq \alpha,
    \qquad\text{and}\qquad \inf_{(\bbP,\bbQ)\in\cP_{\Delta}^{\Lambda} }    E_{H_1}[\psi_{n}(\bbP_n,\bbQ_n)]\geq   1 - \delta,
\end{align*}
for sample sizes
$$
n \geq  \max \left\lbrace\left( \frac{72K^3\left(4K^{1/2} + KC\sqrt{\log\frac{1}{\alpha}} \right)}{{\Delta^2} } \right)^{2/\beta}, \:
 \frac{144K^4}{\Delta^4}\log (4/\delta) \right\rbrace.
$$
\end{theorem}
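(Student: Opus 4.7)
The plan is to follow a classical two-part structure: I would control the Type I error by combining the identity $\test^{\textnormal{KL}}_{\gamma_n}(\bbP,\bbP) = 0$ with the concentration inequality of Proposition~\ref{prop:prob_bound:KL}, and establish power by proving a uniform population lower bound $\test^{\textnormal{KL}}_{\gamma_n}(\bbP,\bbQ) \gtrsim \Delta^2$ over $\cP_\Delta^\Lambda$, then transferring it to the empirical statistic via a second application of Proposition~\ref{prop:prob_bound:KL}. Under $H_0$, with $\gamma_n \geq C^2 n^{-(1-\beta)/4}$, the proposition yields a deterministic bias of order $n^{-\beta/2}$ and a Gaussian-type stochastic tail $\exp(-C_2 n^\beta \varepsilon^2)$; matching the stochastic part to level $\alpha$ (i.e., $\varepsilon \sim n^{-\beta/2}\sqrt{\log(1/\alpha)}$) reproduces the prescribed threshold $u_{\alpha,n}$ and delivers $\sup_{H_0} E[\psi_n(\bbP_n,\bbQ_n)] \leq \alpha$ uniformly, since the concentration is distribution-free.

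The core step will be the uniform population lower bound. Since the Mahalanobis term in $D_{\gamma,\mathrm{KL}}$ is nonnegative, I would focus on the log-determinant term $-\tfrac12\log\dettwo(\mathbf{I}+A_{\gamma_n})$, where $A_{\gamma_n} = (\S_\bbQ+\gamma_n\mathbf{I})^{-1/2}(\S_\bbP-\S_\bbQ)(\S_\bbQ+\gamma_n\mathbf{I})^{-1/2}$. A spectral inequality in the spirit of Lemma~\ref{lem:lowerbound} would give $-\tfrac12\log\dettwo(\mathbf{I}+A_{\gamma_n}) \geq c_1 \|A_{\gamma_n}\|_{\mathrm{HS}}^2$; combined with the sandwiching bound $\|A_{\gamma_n}\|_{\mathrm{HS}} \geq \|\S_\bbP-\S_\bbQ\|_{\mathrm{HS}}/(K+\gamma_n)$, this reduces matters to a lower bound on $\|\S_\bbP-\S_\bbQ\|_{\mathrm{HS}}$. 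The key identity is
\[
\|\S_\bbP-\S_\bbQ\|_{\mathrm{HS}}^2 = \iint \Psi(x-y)^2\,d(\bbP-\bbQ)(x)\,d(\bbP-\bbQ)(y) = \|\phi(\bbP)-\phi(\bbQ)\|_{L^2(\cX,\Lambda)}^2,
\]
obtained from $\tr[(k_x\otimes k_x)(k_y\otimes k_y)] = k(x,y)^2$ for the first equality and Bochner's theorem applied to the positive-definite function $\Psi^2$ (whose spectral measure is $\Lambda = \cF(\Psi^2)$) for the second. Membership in $\cP_\Delta^\Lambda$ then forces $\|\S_\bbP-\S_\bbQ\|_{\mathrm{HS}}^2 > \Delta^2$, and combining the three bounds yields $\test^{\textnormal{KL}}_{\gamma_n}(\bbP,\bbQ) \geq c_K\Delta^2$ uniformly on $\cP_\Delta^\Lambda$, for $n$ large enough that $\gamma_n \leq K$.

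A second application of Proposition~\ref{prop:prob_bound:KL}, now with $\varepsilon$ chosen proportional to $\Delta^2/K^2$, will then yield $\test^{\textnormal{KL}}_{\gamma_n}(\bbP_n,\bbQ_n) \geq c_K\Delta^2 - O(n^{-\beta/2}) - \varepsilon$ with probability at least $1 - \exp(-C_2 n^\beta \varepsilon^2)$. Requiring the resulting deterministic lower bound to dominate $u_{\alpha,n}$ and the bias produces the first sample-size condition $n \geq (72K^3(4K^{1/2}+KC\sqrt{\log(1/\alpha)})/\Delta^2)^{2/\beta}$, while demanding that the stochastic tail be at most $\delta$ produces the second condition $n \geq 144K^4\Delta^{-4}\log(4/\delta)$. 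Uniformity over $\cP_\Delta^\Lambda$ is inherited from the fact that all constants depend on $(\bbP,\bbQ)$ only through $K$. The main obstacle I anticipate is the spectral inequality $-\log\dettwo(\mathbf{I}+A_{\gamma_n}) \geq c_1\|A_{\gamma_n}\|_{\mathrm{HS}}^2$ with a constant independent of $\gamma_n$: eigenvalues of $A_{\gamma_n}$ may grow unboundedly as $\gamma_n\to 0$, so the elementary estimate $x-\log(1+x) \geq x^2/(2(1+|x|))$ is not enough on its own. The argument will likely need to either invoke spectral decay of $\S_\bbQ$ to tame $\|A_{\gamma_n}\|_{\mathrm{op}}$ or split the spectrum into small- and large-eigenvalue regimes, treating each by a Taylor estimate and a linear-growth estimate, respectively, in a way that preserves a $\gamma_n$-independent prefactor.
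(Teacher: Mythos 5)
Your Type~I control and the identity $\|\S_\bbP-\S_\bbQ\|_{\mathrm{HS}}=\|\phi(\bbP)-\phi(\bbQ)\|_{L^2(\cX,\Lambda)}$ match the paper, but the core of your power argument has a genuine gap, and you have in fact flagged it yourself without resolving it. The intermediate inequality $-\tfrac12\log\dettwo(\mathbf{I}+A_{\gamma_n})\geq c_1\|A_{\gamma_n}\|_{\mathrm{HS}}^2$ with $c_1$ independent of $\gamma_n$ is simply false: a single eigenvalue $\lambda\to\infty$ contributes $\lambda-\log(1+\lambda)\sim\lambda$ to the left side but $\lambda^2$ to the right, so no uniform constant exists, and as $\gamma_n\to 0$ under the alternative $\|A_{\gamma_n}\|_{\mathrm{op}}$ does blow up (that is exactly the singularity phenomenon). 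Your two proposed repairs do not close this: spectral decay of $\S_\bbQ$ is not assumed and would destroy uniformity over $\cP_\Delta^\Lambda$, while splitting the spectrum only gives linear control on large eigenvalues, which cannot recover a quadratic lower bound with a $\gamma$-free prefactor. The paper's resolution is different and is precisely the content of Lemma~\ref{lem:lowerbound}, which you cite but whose mechanism you discard: keep the weight $(\Id+\bH_\gamma)^{-1}$ from the bound $-\log\dettwo(\Id+\bH)\geq\tfrac12\trace\big((\Id+\bH)^{-1}\bH^2\big)$ (Lemma~\ref{lem:logdet_bounds}\ref{lgb_lower}) and conjugate back, obtaining $\tfrac12\trace\big(\S_{\bbQ,\gamma}^{-1}(\S_\bbQ-\S_\bbP)\S_{\bbP,\gamma}^{-1}(\S_\bbQ-\S_\bbP)\big)\geq\tfrac{1}{4K^2}\|\S_\bbQ-\S_\bbP\|_{\mathrm{HS}}^2$, where the $\gamma$-independent constant comes only from $\|\S\|_{\mathrm{op}}\leq\trace(\S)\leq K$. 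The weight you drop is exactly what tames the exploding eigenvalues.

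There is a second, smaller defect in the transfer step. The paper does not pass through a population lower bound plus Proposition~\ref{prop:prob_bound:KL}; it applies Lemma~\ref{lem:lowerbound} directly to the \emph{empirical} statistic, $\test_{\gamma}(\bbP_n,\bbQ_n)\geq\tfrac{1}{4K^2}\|\S_{\bbQ_n}-\S_{\bbP_n}\|_{\mathrm{HS}}^2$, and then concentrates $\|\S_{\bbQ_n}-\S_{\bbP_n}\|_{\mathrm{HS}}$ around its population value $\geq\Delta$ with a $\gamma$-free tail of order $\exp(-cn\varepsilon^2)$. Your second application of Proposition~\ref{prop:prob_bound:KL} instead carries the factor $\gamma_n^4\asymp n^{-(1-\beta)}$ in the exponent, so with $\varepsilon\propto\Delta^2/K^2$ you would need $n^\beta\gtrsim K^4\Delta^{-4}\log(1/\delta)$, i.e.\ a sample-size condition of the form $\big(K^4\Delta^{-4}\log(1/\delta)\big)^{1/\beta}$, not the stated $144K^4\Delta^{-4}\log(4/\delta)$. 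So even granting the population lower bound, your route proves a weaker version of the theorem with inflated sample-size requirements rather than the statement as given.
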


In more interpretable terms, the previous result states that, if regularisation decays sufficiently slowly, then, for sufficiently large sample sizes, the test \eqref{eq:stu-test} achieves power arbitrarily close to $1$, uniformly over separated alternatives, while maintaining the level controlled.
More precisely, if regularisation decays
\textit{not faster then } $n^{-\sfrac{(1-\beta)}{4}}$
for $\beta \in(0,1)$, then, there exist universal constants $C_1, C_2>0$ such that for all  $\alpha\in(0,1)$ and $\delta\in(0,1)$, for all sufficiently large sample sizes
$$
n \geq  C_1\left(\Delta^{-2}(1 + \log\alpha^{-1} + \log\delta^{-1})\right)^{2/\beta},
$$
the regularised kernel likelihood ratio test achieves power ($1 -$ Type II error) greater then $1-\delta$, whilst maintaining the level (Type I error) controlled at $\alpha$, \textit{uniformly} over $\Delta$-separated alternatives $\bbP,\bbQ\in $, with decision threshold decaying like $
C_2 \sqrt{\log\frac{1}{\alpha}}n^{-\sfrac{\beta}{2}}
$.

\medskip

The proof relies on two technical results. The first is an upper bound on the choice of valid threshold for a test at level $\alpha\in(0,1)$, controlling the tail of the distribution of $\test_\gamma(\bbP,\bbQ)$ uniformly over measures $\bbP,\bbQ$ (Lemma~\ref{lem:boundonthresh}, in the appendix). While this bound ensures type I error control uniformly over all distributions, it may be suboptimal in practice. For this reason, we typically calibrate the decision rule via permutation (or bootstrap) procedures in simulations and applications. Nevertheless, the bound is a crucial building block in the theoretical analysis of the separation boundary. The second technical result is a \textit{lower bound} for the test statistic $\test_\gamma(\bbP,\bbQ)$, when $\bbP\neq\bbQ$, independently of regularisation $\gamma$ (Lemma~\ref{lem:lowerbound} in the Appendix).

\begin{remark}
    The rates in Theorem~\ref{thm:sepa_bound} suggest that to maximise power, one should  take $\beta$ as large as possible, $\beta = 1$, in which case the decision threshold decays as $n^{-1/2}$, and power grows proportional to $\Delta^{-4}$, relative to sample size. However, such a choice implies that the regularisation does not vanish asymptotically. This is not a contradiction, but rather a reflection of the fact that our consistency proof does not rely on divergence of the statistic under the alternative. Instead, we employ a uniform lower bound under $H_1$, and a vanishing upper bound under $H_0$.
From the perspective of uniform consistency, this is indeed sufficient, and is an artifact of the proof technique than of the actual behaviour of the test. Although $\test_\gamma(\bbP,\bbQ)$ diverges as $\gamma \to 0$ when $\bbP \ne \bbQ$, capturing this blow-up quantitatively in terms of $\gamma$ and a semimetric between $\bbP$ and $\bbQ$ appears challenging. Ideally, one would show that $\test_{\gamma}(\bbP,\bbQ) \geq L(\gamma) \rho(\bbP,\bbQ)$ with $L(\gamma)\to \infty$ as $\gamma\to 0$, but such a bound remains elusive. Lemma~\ref{lem:lowerbound} establishes a weaker bound, which fortunately suffices to establish consistency and the separation boundary, and manifests in the rates obtained
\end{remark}

In closing this section, we highlight that all the theoretical guarantees remain valid, mutatis mutandis, when the test statistic is constructed using the central (zero-mean) empirical Gaussian embeddings. In fact, as noted by \cite{santoro2025from}, it is the $\log\dettwo$ term in \eqref{eq:KRKL}—rather than the Mahalanobis distance—that is chiefly responsible for eliciting the singularity effect, which ultimately drives the power of our test. From a theoretical standpoint, both the central and uncentered variants lead to the same separation boundary under the asymptotic regime we study, though this does not imply identical power functions at finite sample sizes. In practice, as illustrated in our simulations, the two versions exhibit broadly similar performance, although differences may arise depending on the sample size and the nature of the underlying alternatives.

\subsection{Implementation}
Let $k(\cdot,\cdot)$ be a universal kernel on $\cX\times\cX$.
Given independent samples of $\cX$-valued random variables:
\begin{align*}
\mathsf{X} = \{X_{i}\}_{i=1}^{n} \sim \bbP \quad \textnormal{and}\quad \mathsf{Y} = \{Y_{j}\}_{j=1}^{m} \sim \bbQ,
\end{align*}
inducing empirical distributions $\bbP_n$ and $\bbQ_m$, we define \emph{the empirical kernel matrices}:
$$
K_{xx} = \big\{k(X_i, X_j)\big\}_{i, j = 1}^{n, m}, \quad
K_{xy} = \big\{k(X_i, Y_j)\big\}_{i, j = 1}^{n, m}
$$
$$
K_{yx} = \big\{k(Y_i, X_j)\big\}_{i, j = 1}^{n, m}, \quad
K_{xy} = \big\{k(X_i, Y_j)\big\}_{i, j = 1}^{n, m}
$$
Writing $Z_k = X_k$ for $k = 1, \dots, n$, and $Y_{k-n}$ for $k = n+1, \dots, n+m$, for the elements of the pooled sample, one can straightforwardly verify that:
\begin{align*}
    \langle \m_{\bbP_n}, k_{Z_i} \rangle 
            = \int_{\mathcal{X}} \langle k_{Z_i}, k_u\rangle \, d\bbP_n(u) 
         = \frac{1}{n}\sum_{\ell = 1} ^n  \langle k_{Z_i}, k_{X_\ell}\rangle
         = \frac{1}{n}\sum_{\ell = 1} ^n  k(Z_i, X_\ell).
\end{align*}
Therefore, a coordinate representation of the mean embeddings $\m_{\hat \bbP_n}, \m_{\hat \bbQ_m}$ can be expressed with respect to the system $\mathfrak{B} = \{k_{X_1}, \dots, k_{X_n},k_{Y_1},\dots, k_{Y_m}\}$ in terms of row-wise sums of the kernel matrices
\begin{align*}
    m_{\mathsf{X}} = \frac{1}{n}\begin{bmatrix}
        K_{xx}\mathbf{1}_{n} \\
        K_{xy}\mathbf{1}_{n}
    \end{bmatrix} \qquad \textnormal{and} \qquad 
    m_{\mathsf{Y}} = \frac{1}{m}\begin{bmatrix}
        K_{yx}\mathbf{1}_{m} \\
        K_{yy}\mathbf{1}_{m},
    \end{bmatrix}
\end{align*}
respectively. Here,  $\mathbf{1}_{m} = \left[1, 1,\dots,1\right]^\top\in \bbR^m$, $\mathbf{1}_n = \left[ 1,1,\dots,1\right]^\top\in\bbR^n$ is the vector of ones. 
Similarly, 
\begin{align*}
    \langle k_{Z_i}, \S_{\bbP_n} k_{Z_j}\rangle 
        = \int_{\mathcal{X}} \langle k_{Z_i}, k_u\rangle \langle k_{Z_j}, k_u\rangle \, d \bbP_n(u) 
         = \frac{1}{n}\sum_{\ell = 1} ^n  \langle k_{Z_i}, k_{X_\ell}\rangle \langle k_{Z_j}, k_{X_\ell}\rangle
         = \frac{1}{n}\sum_{\ell = 1} ^n  k(Z_i, X_\ell)k({X_\ell}, {Z_j}),
\end{align*}
with an analogous expression for $\S_{\bbQ_m}$.
Hence, we see that the actions of $\S_{\bbP_n} and \S_{\bbQ_m}$ on the system given by $\mathfrak{B} = \{k_{X_1}, \dots, k_{X_n},k_{Y_1},\dots, k_{Y_m}\}$ can be respectively expressed in terms of the {Gram matrices}
\begin{equation}
\label{eq:gram}
\begin{split}
&S_{\mathsf{X}} = 
\frac{1}{n}\begin{bmatrix}
K_{xx}^{\phantom{\top}}K_{xx}^\top & K_{xx}^{\phantom{\top}}K_{yx}^\top \\
K_{yx}^{\phantom{\top}}K_{xx}^\top & K_{yx}^{\phantom{\top}}K_{yx}^\top
\end{bmatrix} \qquad \textnormal{and} \qquad 
S_{\mathsf{Y}} = \,
\frac{1}{m} \, \begin{bmatrix}
K_{xy}^{\phantom{\top}}K_{xy}^\top & K_{xy}^{\phantom{\top}}K_{yy}^\top \\
K_{yy}^{\phantom{\top}}K_{xy}^\top & K_{yy}^{\phantom{\top}}K_{yy}^\top
\end{bmatrix}.
\end{split}
\end{equation}
For $\gamma>0$ we write
$
S_{\mathsf{X},\gamma}  = S_{\mathsf{X}} + \gamma I
$ and $ 
S_{\mathsf{Y},\gamma}  = S_{\mathsf{Y}} + \gamma I,
$
where $I$ denotes the $(n+m)$ identity matrix. 

In summary, given a Mercer kernel $k(\cdot,\cdot)$ and a ridge parameter $\gamma$, the regularised likelihood-ratio test statistic \eqref{eq:ker_regularized_KL} involving $\bbP_n,\bbQ_n$, can be expressed in closed form as:
\begin{align*}
T_{\gamma, k} = T_{\gamma, k}(\{X_{i}\}_{i=1}^{n}, \{Y_{j}\}_{j=1}^{m}) = 
        \left\lVert S_{\mathsf{X},\gamma} ^{-\sfrac{1}{2}}(m_{\mathsf{Y}} - m_{\mathsf{X}}) \right\rVert^2  + \trace\left(\log\left(S_{\mathsf{X},\gamma} ^{-\sfrac{1}{2}}S_{\mathsf{Y},\gamma} S_{\mathsf{X},\gamma} ^{-\sfrac{1}{2}}\right) - S_{\mathsf{Y},\gamma} S_{\mathsf{X},\gamma} ^{-1} + I\right)
\end{align*}
where we have simplified the expression employing orthogonality and the cyclic property of the trace.

\paragraph{Permutation test.}

As discussed in Section \ref{sec:the_test}, the test statistic will be calibrated by way of permutation: we consider the distribution of the test statistic computed over all label permutations of the pooled sample,
$$
\{T(\{Z_{\sigma(i)}\}_{i=1}^{n}, \{Z_{\sigma(j)}\}_{j=n+1}^{n+m}): \sigma \in \mathfrak{S}_{n+m}\},
$$
where $\mathfrak{S}_{n+m}$ is the symmetric group of permutations of $[n+m] = \{1, \dots, n+m\}$. In practice, this distribution is approximated by sampling $B$ permutations $\sigma_1, \dots, \sigma_B$ uniformly at random from $\mathfrak{S}_{n+m}$, and computing the empirical $(1 - \alpha)$-quantile of the resulting values. This sample yields an approximation $\hat{q}_{1-\alpha}^{B,\gamma}$  to the quantile defined in \eqref{decision:kl}, namely
$$
\hat{q}_{1-\alpha}^{B,\gamma} := \text{empirical } (1-\alpha)\text{-quantile of } \left\{ T(\{Z_{\sigma_b(i)}\}_{i=1}^{n}, \{Z_{\sigma_b(j)}\}_{j=n+1}^{n+m}) \right\}_{b=1}^{B}.
$$
\begin{remark}
    It is known 
    that $\hat{q}_{1-\alpha}^{B,\gamma}$ concentrates around the true permutation quantile $q^{\gamma}_{1-\alpha}$, and as the sample size grows, this converges to the $(1-\alpha)$-quantile of the limiting null distribution—typically a certain mixture distribution reflecting the combined sampling variability under the null. That is, if $\hat{q}^{B,\gamma}_{1-\alpha}$ is obtained drawing $B$ permutations at random, for $\alpha>0, \tilde{\alpha}>0, \delta>0$, if $B \geq \frac{1}{2 \tilde{\alpha}^2} \log 2 \delta^{-1}$: 
    $$
    P_\pi\left(\hat{q}_{1-\alpha}^{B, \gamma} \geq q^{\gamma}_{1-\alpha-\tilde{\alpha}}\right) \geq 1-\delta
    \qquad\textnormal{and}\qquad
    P_\pi\left(\hat{q}_{1-\alpha}^{B, \gamma} \leq q^{\gamma}_{1-\alpha+\tilde{\alpha}}\right) \geq 1-\delta.
    $$
    For a proof, see \cite[][Lemma~15]{hagrass2024spectral}.
\end{remark}
Our decision rule at approximate level $\alpha$ is then defined as:
\begin{equation}\label{eq:decision}
\psi_{k, \gamma, B}(\{X_{i}\}_{i=1}^{n}, \{Y_{j}\}_{j=1}^{m}) = 
\begin{cases} 
1 & \textnormal{if } T(\{X_{i}\}_{i=1}^{n}, \{Y_{j}\}_{j=1}^{m}) > \hat{q}^{B,\gamma}_{1-\alpha}, \\ 
0 & \textnormal{otherwise}.
\end{cases}
\end{equation}
which in effect rejects  the null hypothesis whenever the observed test statistic exceeds at least a proportion of $1-\alpha$ of the permuted statistics, obtained drawing $B$ samples uniformly at random from $\mathfrak{S}_{n+m}$.

\medskip

\paragraph{Computational cost.}
The computational cost of the testing procedure stems from: matrix inversion, the computation of the test statistic, and the calibration through permutations. For a pooled sample of size $N = m + n$, the construction of the Gram matrices $S_{\mathsf{X}}$ and $S_{\mathsf{Y}}$ requires $O(N^2)$ kernel evaluations. Computing the test statistic involves matrix multiplications and the spectral norm, with a computational complexity of approximately $O(N^3)$ due to the inversion of the Gram matrices. Since the test threshold is determined via permutation testing, $B$ permutations require repeating the entire computation $B$ times, resulting in a total cost of $O(BN^3)$.  The computational burden increases significantly for large sample sizes $N$ and/or a high number of permutations $B$, making parallelization critical for large-scale applications.

 \begin{remark}
     The cubic dependence on total sample size is a common feature of kernel based tests involving inversion \cite{hagrass2024spectral,eric2007testing}. However, \cite{hagrass2024spectral} observed that computational load can be reduced by way of sample splitting, and separate estimation of the embedded covariance operator and the embedded mean element estimators. That is, based on  samples $\{X_{i}\}_{i=1}^{n}, \{Y_{j}\}_{j=1}^{m}$, one can split the samples into $\left(X_i\right)_{i=1}^{N-s}$ and $\left(X_i^1\right)_{i=1}^s:=\left(X_i\right)_{i=N-s+1}^N$, and $\left(Y_j\right)_{j=1}^M$ to $\left(Y_j\right)_{j=1}^{M-s}$ and $\left(Y_j^1\right)_{j=1}^s:=\left(Y_j\right)_{j=M-s+1}^M$. Then, the samples $\left(X_i^1\right)_{i=1}^s$ and $\left(Y_j^1\right)_{j=1}^s$ are  used to estimate the covariance embeddings $\S_{\bbP}$ and $\S_{\bbQ}$, respectively,  while $\left(X_i\right)_{i=1}^{N-s}$ and $\left(Y_i\right)_{i=1}^{M-s}$ are used to estimate the mean elements $\m_{\bbP}$ and $\m_{\bbQ}$, respectively. Effectively, this reduces the complexity to from $O(BN^3)$ to $O(BNs^2)$, so that suitably  picking $s =\Omega(\sqrt{N})$
yields to comparable complexity to
that of the MMD test.
 \end{remark}

\paragraph{Adaptive bandwidth and regularisation}
   Implementing the test requires the choice of a kernel function $k(\cdot,\cdot)$ and a regularisation parameter $\gamma>0$. In theory, the kernel can be any universal kernel, whereas the regularisation parameter needs to satisfy $\gamma^4 \min\{n,m\}\to\infty$ as $n,m\to\infty$. In practice, the precise choice of $k$ and $\gamma$ can influence finite sample performance. In particular:

   \medskip
   \begin{enumerate}
       \item The choice of the kernel function defines the features through which differences between $\mathbb{P}$ and $\mathbb{Q}$ are measured. Popular kernels include, polynomial, Gaussian, and Laplacian kernels. We employ the latter two in our simulations. Often, a kernel comes with the choice of an additional \emph{bandwidth} or \emph{concentration} parameter $\sigma$: for instance for kernels of the form $k_{\rho, \sigma}(x, y) = \exp\left(-\sigma^{-2}\rho(x, y)/2\right),$
    for some semi-metric $\rho$. Suitable tuning of the kernel, i.e.\ the choice of $\sigma$, is crucial to optimising power in finite sample sizes. Too large a bandwidth can smooth over meaningful differences, while too small a bandwidth can amplify noise.  A common rule for adaptively choosing $\sigma$, is set it in proportion to the median interpoint distance in the pooled sample. 
     This is the rule we employ in our simulations and analyses, and is seen to perform well, though we remark that this is one of several other possibilities \cite{sutherland2016generative, chatterjee2024boosting, liu2020learning}.

    \item
    The choice of regularization strength $\gamma$ regulates a tradeoff between finite-sample stability and making use of the separation of measure phenomenon. To ensure maximal gains in power, our proposal is to consider testing across a range of values $\gamma$, and adjusting for multiple testing. This is reminiscent of the idea of adapting to a family of kernels, that has been explored in regression and classification settings under the name \textit{multiple-kernel learning} \cite{gonen2011multiple}. This approach is seen to yield very good performance -- the gains in power to be made by adaptive choice seem to clearly outweigh the cost of the multiple testing correction.
   \end{enumerate}

    In summary, we implement an adaptive selection procedure as in \cite{schrab2023mmd, hagrass2024spectral}
    and consider jointly aggregating over bandwidths and ridge regularization parameters, with Bonferroni correction to control Type I error. That is, fix a parametric family of kernels $k_h(\cdot,\cdot)$ indexed by bandwidth parameter $h>0$, and set a finite sequences $\cK = \{h_1,\dots,h_{|\cK|}\}$ and $\Lambda = \{\lambda_1, \dots, \lambda_{|\Lambda|}\}$ of kernel bandwidths and regularisation parameters. Denote by
    ${T}_{\lambda, h}$ the test statistic based on kernel $k_h$ and regularization parameter $\lambda$. We reject $H_0$ if ${T}_{\lambda, h} \geq  \hat{q}^{B,\gamma}_{1-\frac{\alpha}{|\Lambda||K|}}$ for any $(\lambda, h) \in \Lambda \times \cK$. 

\section{Data analyses}

In this section, we investigate the empirical performance of our proposed two-sample test. 
 We explore the performance of the likelihood ratio statistic based both on the \textit{central} (mean-zero)  and the \textit{non-central}  empirical Gaussian embeddings, 
which we denote by $\texttt{KLR0}$ and $\texttt{KLR}$, respectively, in our summaries. For benchmarking purposes, we compare with several existing approaches: the adaptive Maximum Mean Discrepancy test (\texttt{AggMMD})  \cite{schrab2023mmd}, the spectral regularised Maximum Mean Discrepancy test (\texttt{SpecRegMMD}) \cite{hagrass2024spectral}, the K-nearest neighbours (\texttt{KNN}) statistic \cite{schilling1986multivariate},  the Friedman and Rafsky (\texttt{FR}) test \cite{friedman1979multivariate}, and the Hall and Tajvidi (\texttt{HT}) test \cite{HallTajvidi02}.

The adaptive MMD test \cite{schrab2023mmd} requires using a translation invariant kernel and considers multiple bandwidth parameters $h$, correcting for multiple testing: multiple such tests are constructed over $h$, and are subsequently aggregated to achieve adaptivity.  The resulting test is referred to as \texttt{AggMMD}. The spectrally regularised MMD test\cite{hagrass2024spectral}, similarly to our method, involves adaptively selecting both a bandwidth parameter and a regularisation parameter $\gamma$. Multiple tests are constructed over $h,\gamma$, which are aggregated to achieve adaptivity, and the resultant test is referred to as \texttt{SpecRegMMD}. 

\medskip

We will consider different experimental setups using either a Laplacian kernel, $k(x, y)=\exp \left(-\frac{\|x-y\|_2}{ h}\right)$ or a Gaussian kernel $k(x, y)=\exp \left(-\frac{\|x-y\|_2^2}{2 h}\right)$, with $h$ being the bandwidth. The significance level is always fixed to $\alpha=0.05$. For our test, we construct an adaptive test by taking the union of tests jointly over regularisation parameters $\gamma \in \Lambda$ and bandwidth parameters $h \in \cK$, and correct for multiple testing with the Bonferroni correction to preserve the level. We consider:
$$\Lambda:=\left\{10^{-7}, 10^{-6}, \hdots, 10^{-1}\right\}
\quad \text{and}\quad 
\cK:=\left\{ \frac{h_m}{50}, \frac{h_m}{10}, \frac{h_m}{5}, h_m, 5\cdot h_m, 10\cdot h_m\right\}
$$
where $h_m:= \operatorname{median} \left\{\left\|q-q^{\prime}\right\|_2: q, q^{\prime} \in X \cup Y, \,{q\neq q'}\right\}$.  

\medskip

All relevant code can be accessed at:
\begin{center}
    \fbox{\texttt{https://github.com/leonardoVsantoro/Kernel-Likelihood-Ratio-Two-Sample-Test}.}
    \end{center}
As for the test cases, we consider both synthetic and real data sets, in the next two sections. Overall, it is found that the proposed method's empirical performance exhibiting substantial power improvements compared to existing approaches, and excels in challenging scenarios involving subtle and/or high-dimensional differences between distributions.

\subsection{Synthetic Data}

To evaluate the performance of the statistical tests across a range of sample sizes and dimensionalities,  we shall consider eight distinct models and modes of perturbation, each designed to probe different forms of distributional differences between the null and alternative hypotheses:
\begin{enumerate}[label=\textbf{Model \arabic*.}, leftmargin=5em]
    \item Isotropic Gaussian distributions with a sparse shift in the mean vector.
    \item Product Laplace distributions with a sparse shift in the location parameter.
    \item A symmetric mixture of two Gaussians, each shifted in a sparse subset of coordinates, contrasted with a standard isotropic Gaussian.
    \item Isotropic Gaussian distributions with a sparse change in variances along few coordinates.
    \item Central Gaussian distributions with long-range correlations decaying as a power law.
    \item Central Gaussian distributions with equicorrelation structure, differing only by a small change in the correlation parameter.
    \item Uniform distributions on hypercubes, with the alternative having ``smaller support" on a sparse subset of coordinates.
    \item Uniform distributions on concentric hyperspheres with differing radii.
\end{enumerate}

\medskip

In each case, we have a different form of perturbation going from the null to the alternative, in each case under controlled sparsity. For instance, in Models 1--3, differences between $\bbP$ and $\bbQ$ arise from shifts in the mean vector, confined to a small number of coordinates. In Models 4--6, the distinction lies in changes to the covariance structure, either via variance inflation in a subset of coordinates, power-law decay of correlations, or equicorrelation. Model 7 explores changes in the support of a uniform distribution over $[0,1]^d$, with the support being compressed in the first $P$ coordinates under the alternative. Similarly, Model 8 compares distributions supported on hyperspheres of different radii (support change in all the coordinates).

\medskip

In Models 1-5 and 7, the signal distinguishing $\bbP$ from $\bbQ$ is confined to a small and fixed number of coordinates, relative to the ambient dimension $d$. As a result, the distinction between the two distributions becomes increasingly subtle in high dimensions -- a needle in a haystack problem-- posing a challenge for high-dimensional two-sample testing procedures \cite{ramdas2015decreasing}. Thus, we expect power to decay as dimension grows, regardless of which test is employed -- the slower the decay, the better. To the contrary, in Models 5,6 the difference is controlled by a parameter $\varepsilon$, where larger values of $\varepsilon$ magnify the difference between  $\bbP$ from $\bbQ$ , which reflects in increasing power for increasing $\varepsilon$. In these scenarios, we hope to see fast gains in power as $\varepsilon$ grows.

   \medskip
 We consider 200 replications within each scenario, and report the proportion of rejections: 
Figures~\ref{fig:GaussianSparseMeanShift}-\ref{fig:ConcentricSpheres} show the empirically observed percentage of rejections in the 8 models under the alternative, with different sample sizes, dimensions, and hyperparameters. 
All tests are calibrated by permutation, drawing $B=300$ random permutations. 
 The average Type I error for the proposed tests \texttt{KLR} and \texttt{KLR0} is shown in Figure~\ref{fig:levelKLR} and Figure~\ref{fig:levelKLR0}, respectively.
We consider a Gaussian kernel in the first 4 models, and a laplacian kernel in the latter 4.

\medskip

\medskip

Across all eight synthetic models, both \texttt{KLR} and \texttt{KLR-0} perform at least as well as the best competing tests, and in many cases substantially outperform them. The gains are particularly striking in high-dimensional regimes. For instance, in Model 4 with $d = 1500$, our test achieves nearly perfect power ($\approx 1.0$) while the strongest competitor remains at less than half that ($\approx 0.4$). Similar findings appear in Models 3 and 6, with other methods degrading sharply in low-signal regimes, while \texttt{KLR}  and \texttt{KLR-0} attaining near perfect power.  The overall pattern emerging in our simulations is that our proposed test(s) maintains high power even as dimensionality and distributional complexity increase, when alternative methods exhibit rapidly collapsing power performance.

\noindent\begin{figure}[h!]
\begin{minipage}{.39\textwidth}
\begin{center}
\fbox{\begin{minipage}{.95\textwidth}
\textbf{Model 1 $(\Delta, P)$}
\begin{align*}
    &\bbP \equiv \cN(0, I_d), \qquad \textnormal{and}\\
    &\bbQ \equiv \cN(m_{\Delta, P}, I_d)
\end{align*} 
where $m_{\Delta,P}
= (m_1,\dots,m_d)$ is given by:
\begin{equation}
\label{sims_mean}  
\begin{split}
m_j = 
\begin{cases} 
\Delta, & 1\leq j \leq P \\
0, & P < j \leq d.
\end{cases}
\end{split}
\end{equation}
\end{minipage}}
\end{center}
\end{minipage}
\begin{minipage}{.6\textwidth}
    \centering
    \includegraphics[width=0.9\linewidth]{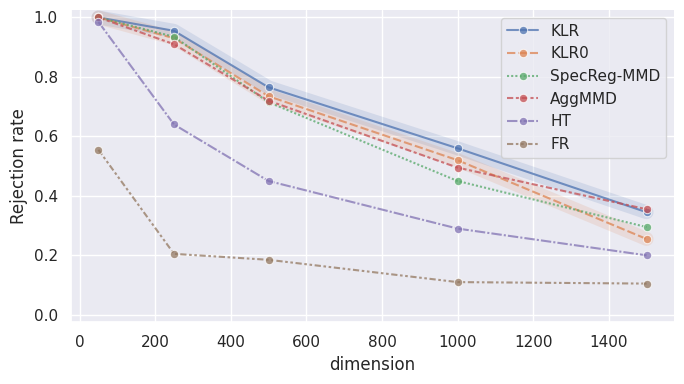}
\end{minipage}
    \noindent
    \includegraphics[width=.97\linewidth]{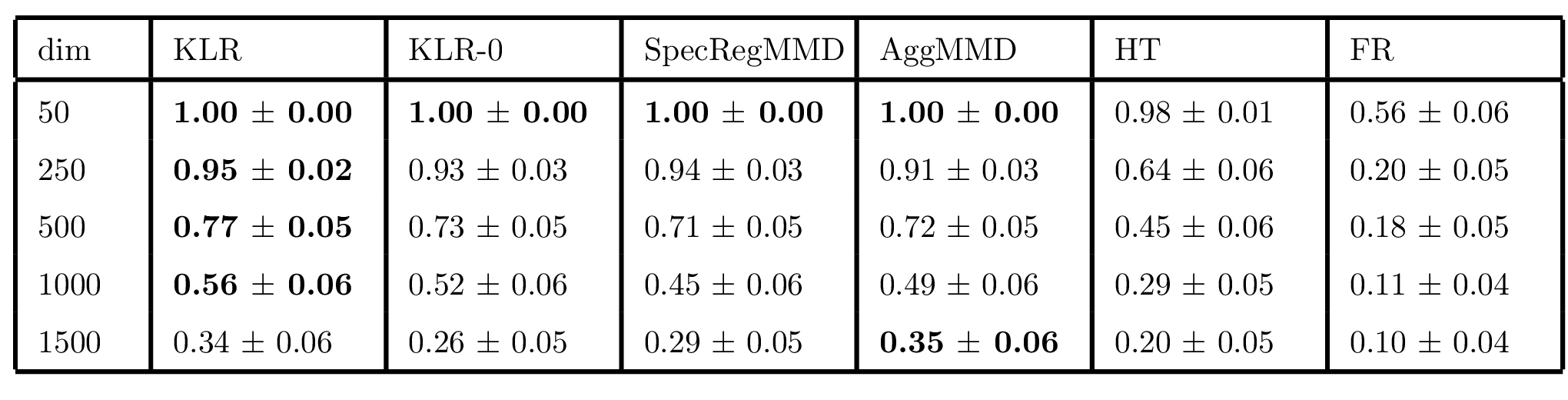}
    \caption{Average rejection rate in Model 1 $(\Delta =1, P =2)$  over 250 simulated experiments for increasing dimension and sample sizes $n,m = 100$. 
    }
    \label{fig:GaussianSparseMeanShift}
\end{figure}
\noindent\begin{figure}[h!]
\begin{minipage}{.39\textwidth}
\begin{center}
\fbox{\begin{minipage}{.95\textwidth}
\textbf{Model 2 $(\Delta,P)$}
\begin{align*}
    &\bbP \equiv \textnormal{Laplace}(0, 1)^{\otimes d}, \qquad \textnormal{and}\\
    &\bbQ \equiv \textnormal{Laplace}(m_{j}, 1)^{\otimes d})
\end{align*} 
where $m_{j}$ is given by:
\begin{align*}
m_{j} = 
\begin{cases} 
\Delta, & 1\leq j \leq P\\
0, & P < j \leq d.
\end{cases}
\end{align*}
\end{minipage}}
\end{center}
\end{minipage}
\begin{minipage}{.6\textwidth}
    \centering
    \includegraphics[width=0.9\linewidth]{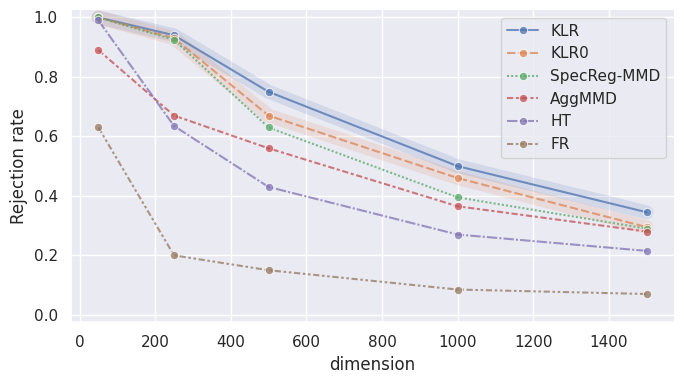}
\end{minipage}
    \noindent
    \includegraphics[width=.97\linewidth]{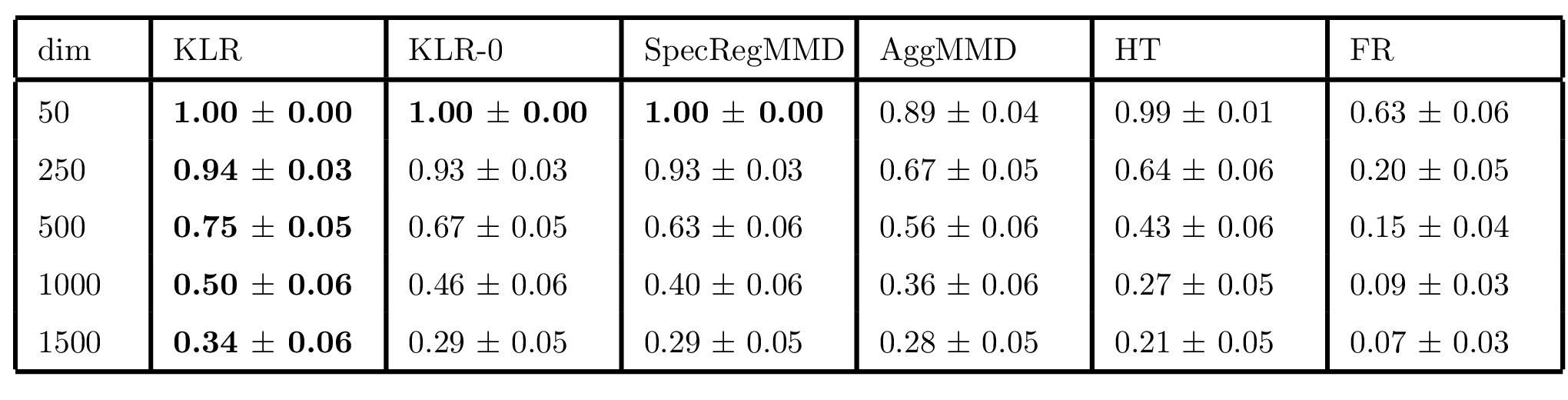}
    \caption{Average rejection rate in Model 2 $(\Delta = 1, P = 4)$  over 250 simulated experiments for increasing dimension and sample sizes $n,m = 100$. }    \label{fig:LaplaceSparseMeanShift}
\end{figure}
\noindent\begin{figure}[h!]
\begin{minipage}{.39\textwidth}
\begin{center}
\fbox{\begin{minipage}{.95\textwidth}
\textbf{Model 3 $(\Delta,P)$}
\begin{align*}
    &\bbP \equiv \cN(0, I_d), \qquad\textnormal{and}
    \\ &\bbQ \equiv \frac{1}{2}\cN(-m_{\Delta,P}, I_d) + \frac{1}{2}\cN(m_{\Delta,P}, I_d)
\end{align*}
where $m_{\Delta,P} = (m_1, \dots, m_d)$ is:
\[
m_j = 
\begin{cases}
\Delta, & 1\leq j \leq P\\
0, & P < j \leq d.
\end{cases}
\]
\end{minipage}}
\end{center}
\end{minipage}
\begin{minipage}{.6\textwidth}
    \centering
    \includegraphics[width=0.9\linewidth]{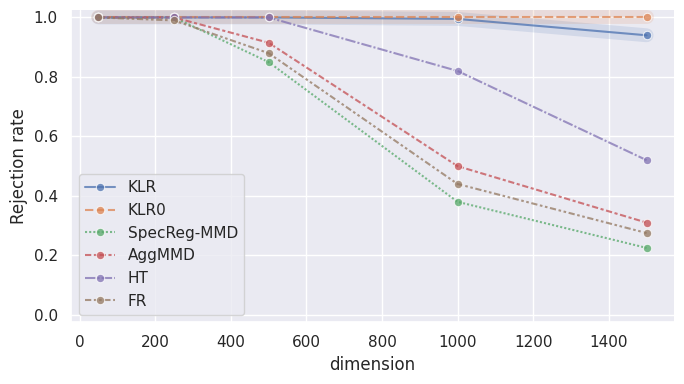}
\end{minipage}
\noindent\includegraphics[width=.97\linewidth]{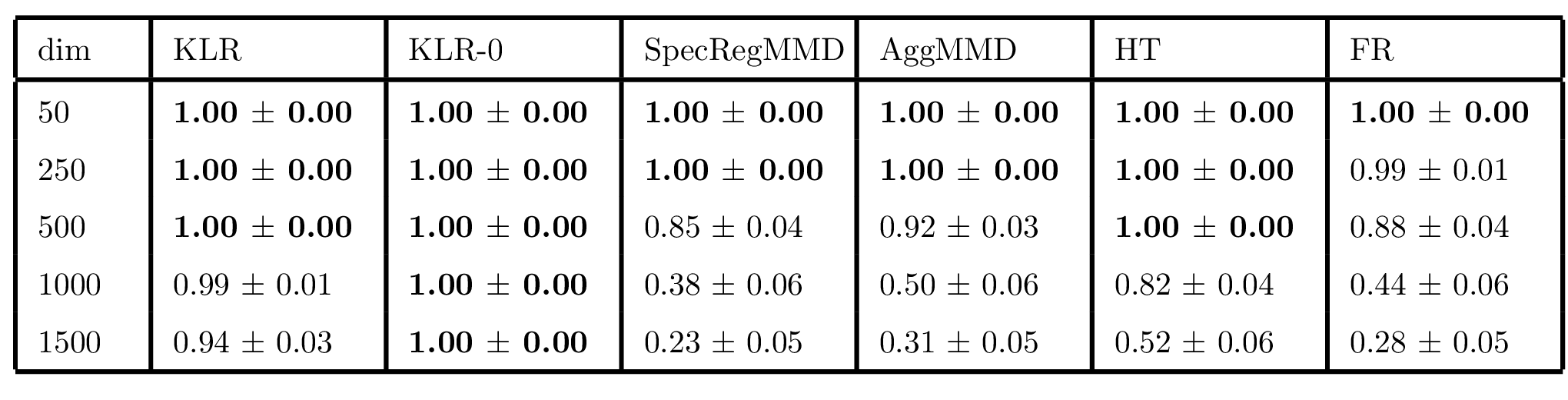}
    \caption{Average rejection rate in Model 3 $(\Delta = 4, P = 1)$  over 250 simulated experiments for increasing dimension and sample sizes $n,m = 100$. }
    \label{fig:enter-label}
\end{figure}
\noindent\begin{figure}[h!]
\begin{minipage}{.39\textwidth}
\begin{center}
\fbox{\begin{minipage}{.95\textwidth}
\textbf{Model 4} $(\lambda,P)$
$$
    \bbP \equiv \cN(0, I_d), 
    \qquad 
    \bbQ \equiv \cN(0, \Sigma_{P, \lambda})\\
$$
where $\Sigma_{P, \lambda} = \textnormal{diag}\left(s_1, s_2, \dots, s_d\right)$:
\begin{equation}
\label{sims:cov}
\begin{split}
s_j = 
\begin{cases} 
\lambda, & 1\leq j \leq P\\
1, & P < j \leq d.
\end{cases}
\end{split}
\end{equation}
\end{minipage}}
\end{center}
\end{minipage}
\begin{minipage}{.6\textwidth}
    \centering
    \includegraphics[width=0.9\linewidth]{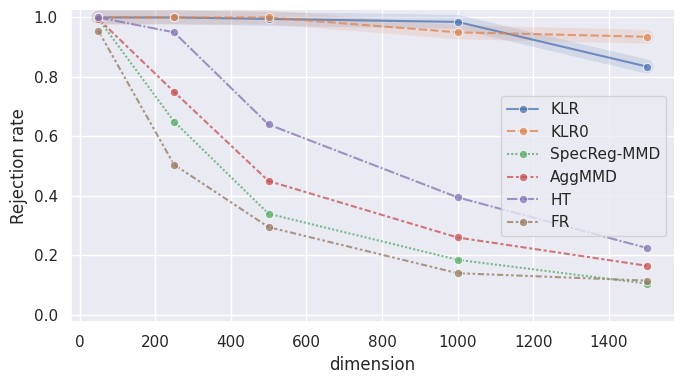}
\end{minipage}
    \noindent
    \includegraphics[width=.97\linewidth]{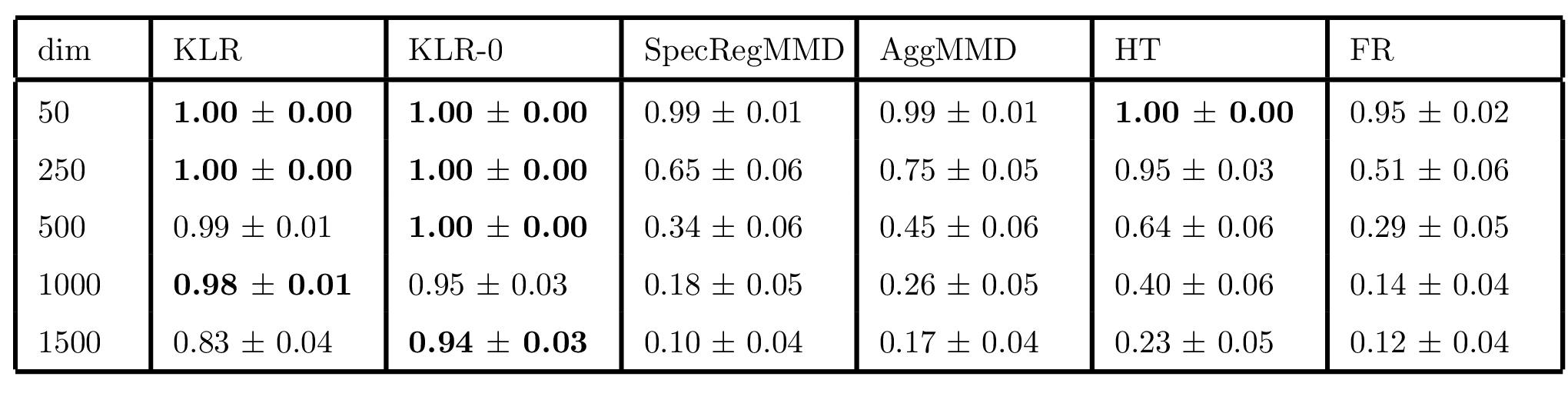}
    \caption{Average rejection rate in Model 4 $(\lambda = 3, P = 5)$  over 250 simulated experiments for increasing dimension and sample sizes $n,m = 100$. }    \label{fig:GaussianSpikedCovariance}    
\end{figure}
\noindent\begin{figure}[h!]
\begin{minipage}{.39\textwidth}
\begin{center}
\fbox{\begin{minipage}{.95\textwidth}
\textbf{Model 5 $(\alpha,\varepsilon)$}.
\begin{align*}
    &\bbP \equiv \cN(0, \Sigma_{\alpha}), \qquad \textnormal{and}\\
    &\bbQ \equiv \cN(0, \Sigma_{\alpha + \varepsilon})
\end{align*} 
where $\{\Sigma_{\alpha}\}_{ij} = (i-j)^{\alpha}$.
\end{minipage}}
\end{center}
\end{minipage}
\begin{minipage}{.6\textwidth}
    \centering
    \includegraphics[width=0.88\linewidth]{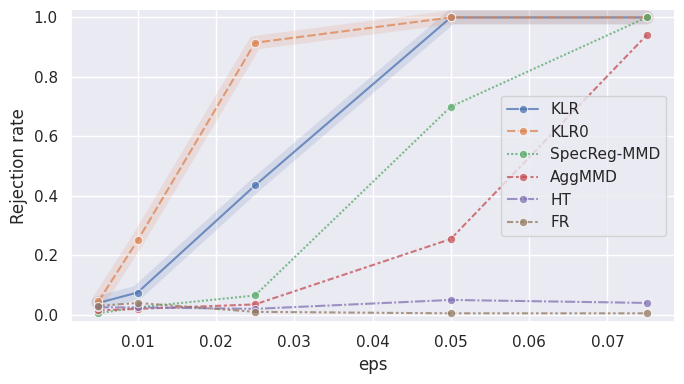}
\end{minipage}
    \noindent\includegraphics[width=.94
    \linewidth]{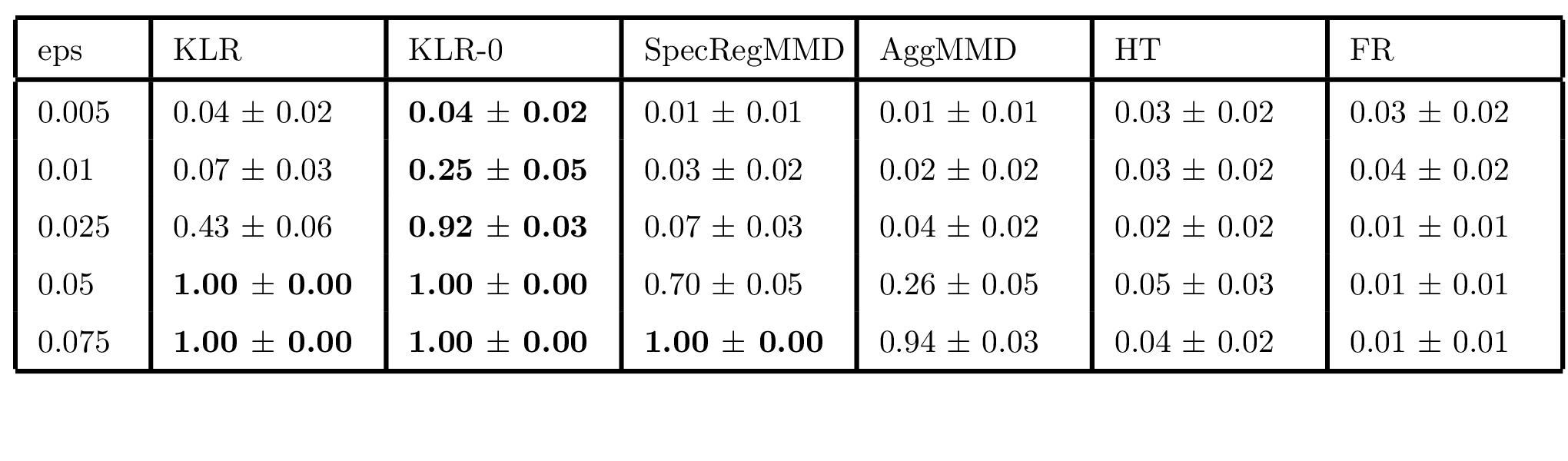}
    \caption{Average rejection rate in Model 5 $(\alpha= 0.5,\varepsilon)$, for increasing values of $\varepsilon$, $n,m = 100$ and $d=500$.
}
    \label{fig:EquiCorrelationGaussian}
\end{figure}
\noindent\begin{figure}[h!]
\begin{minipage}{.39\textwidth}
\begin{center}
\fbox{\begin{minipage}{.95\textwidth}
\textbf{Model 6 $(\alpha,\varepsilon)$}.
\begin{align*}
    &\bbP \equiv \cN(0, \Sigma_{\alpha}), \qquad \textnormal{and}\\
    &\bbQ \equiv \cN(0, \Sigma_{\alpha + \varepsilon})
\end{align*} 
where $\Sigma_{\alpha} = (1-\alpha)I_d + \alpha 1_d1_d^\top$.
\end{minipage}}
\end{center}
\end{minipage}
\begin{minipage}{.6\textwidth}
    \centering
    \includegraphics[width=0.88\linewidth]{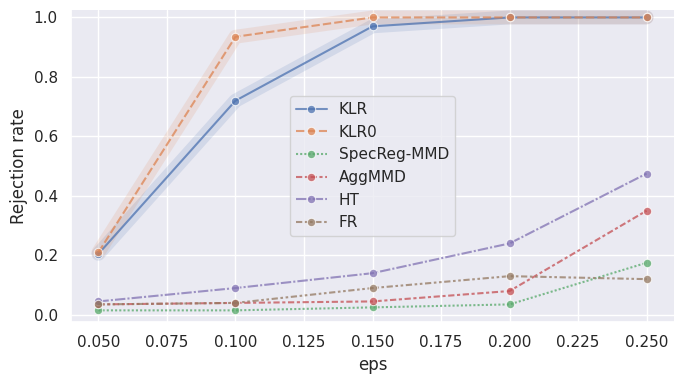}
\end{minipage}
\noindent\includegraphics[width=.94\linewidth]{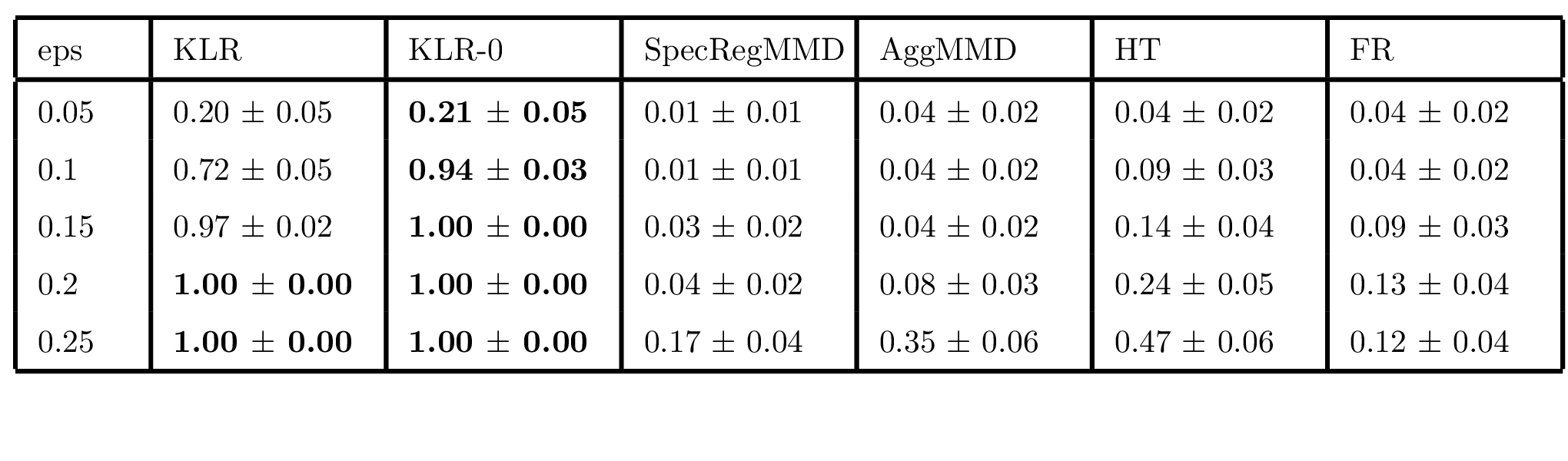}
\caption{Average rejection rate in Model 6 $(\alpha = 0.5,\varepsilon)$, for increasing values of $\varepsilon$, $n,m = 100$ and $d=500$.
}    \label{fig:DecreasingCorrelationGaussian}
\end{figure}
\noindent\begin{figure}[h!]
\begin{minipage}{.39\textwidth}
\begin{center}
\fbox{\begin{minipage}{.95\textwidth}
\textbf{Model 7} $(\varepsilon,P)$
\begin{align*}
    &\bbP \equiv \textnormal{Unif}([0,1]^d),, \qquad \textnormal{and}\\
    &\bbQ \equiv \textnormal{Unif}([0, 1-\varepsilon]^\Delta \times [0,1]^{d-P}
\end{align*} 
\end{minipage}}
\end{center}
\end{minipage}
\begin{minipage}{.6\textwidth}
    \centering
    \includegraphics[width=0.9\linewidth]{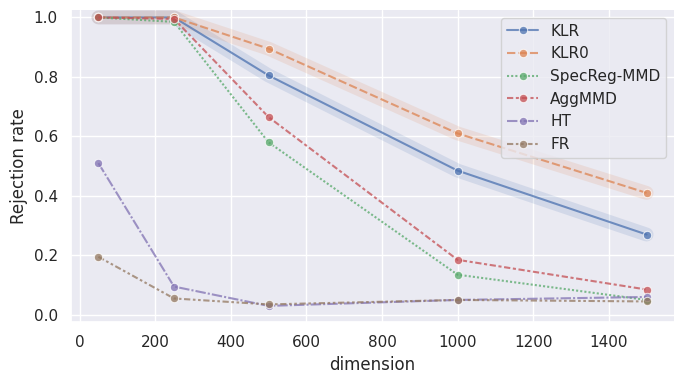}
\end{minipage}
    \noindent    \includegraphics[width=.97\linewidth]{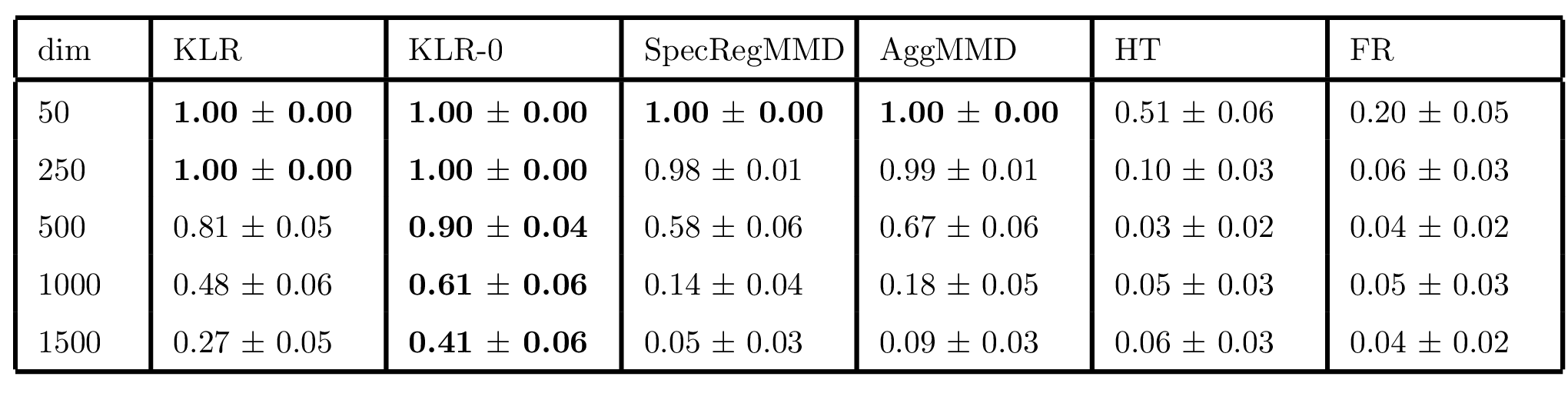}
    \caption{Average rejection rate in Model 6 $(\varepsilon = 0.02, P = 30)$, over 250 simulated experiments, for increasing dimension, and sample sizes $n,m = 100$.}
    \label{fig:UniformThinHypercube}
\end{figure}
\noindent\begin{figure}[h!]
\begin{minipage}{.39\textwidth}
\begin{center}
\fbox{\begin{minipage}{.95\textwidth}
\textbf{Model 8} $(\varepsilon)$
\begin{align*}
    &\bbP \equiv \textnormal{Unif}(S^{d-1}(1)), \qquad \textnormal{and}\\
    &\bbQ \equiv \textnormal{Unif}(S^{d-1}(1 + \varepsilon))
\end{align*} 
where $S^{d-1}(r) = \{x\in\bbR^d\::\: \|x\|=r\}$.
\end{minipage}}
\end{center}
\end{minipage}
\begin{minipage}{.6\textwidth}
    \centering
    \includegraphics[width=0.9\linewidth]{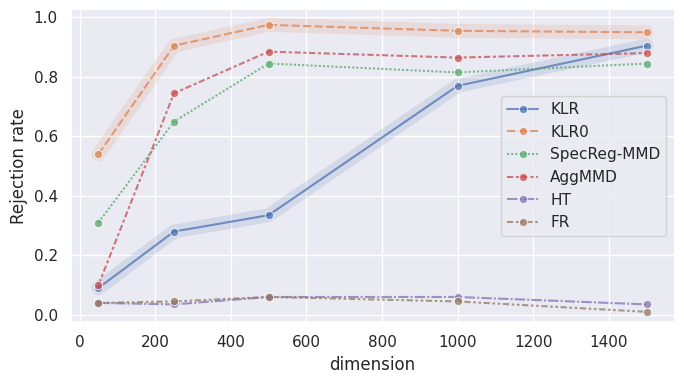}
\end{minipage}
    \noindent    \includegraphics[width=.97\linewidth]{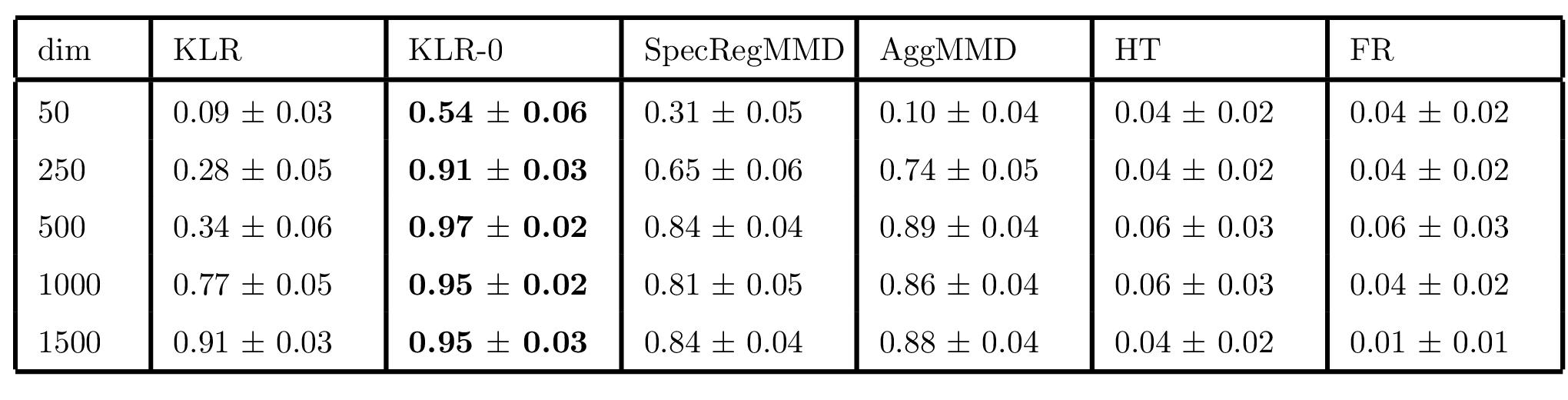}
    \caption{Average rejection rate in Model 8 $(\varepsilon = 0.02)$ over 250 simulated experiments, for increasing dimension, and sample sizes $n,m = 100$.}    \label{fig:ConcentricSpheres}
\end{figure}


\begin{figure}
\centering
\includegraphics[width=.95\textwidth]{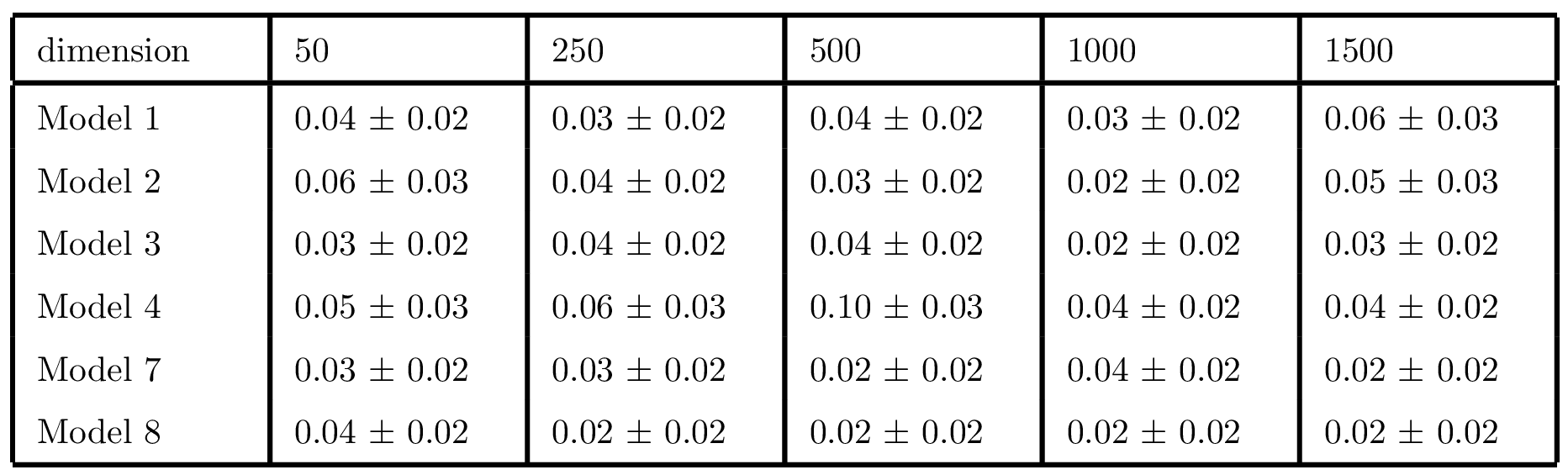}
\includegraphics[width=.95\textwidth]{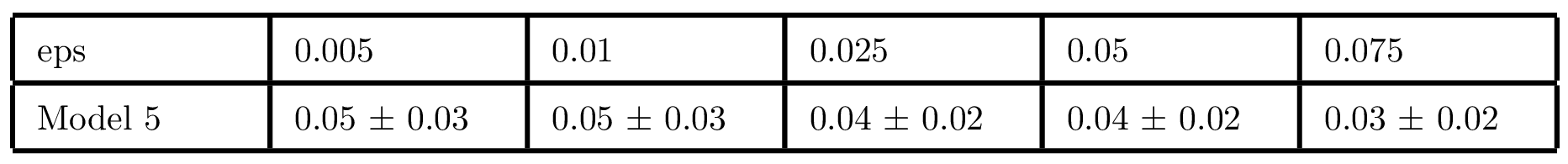}
\includegraphics[width=.95\textwidth]{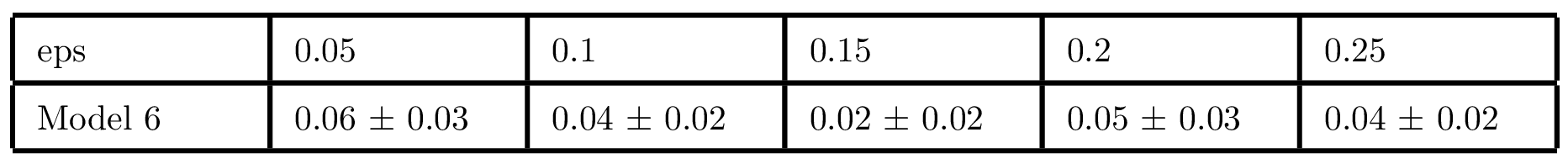} 
\caption{Average level values (rejection under the null hypothesis) drawing samples from the null across the models and setting considered, for the test statistic $\texttt{KLR}$.}
\label{fig:levelKLR}
\end{figure}

\begin{figure}
\centering
\includegraphics[width=.95\textwidth]{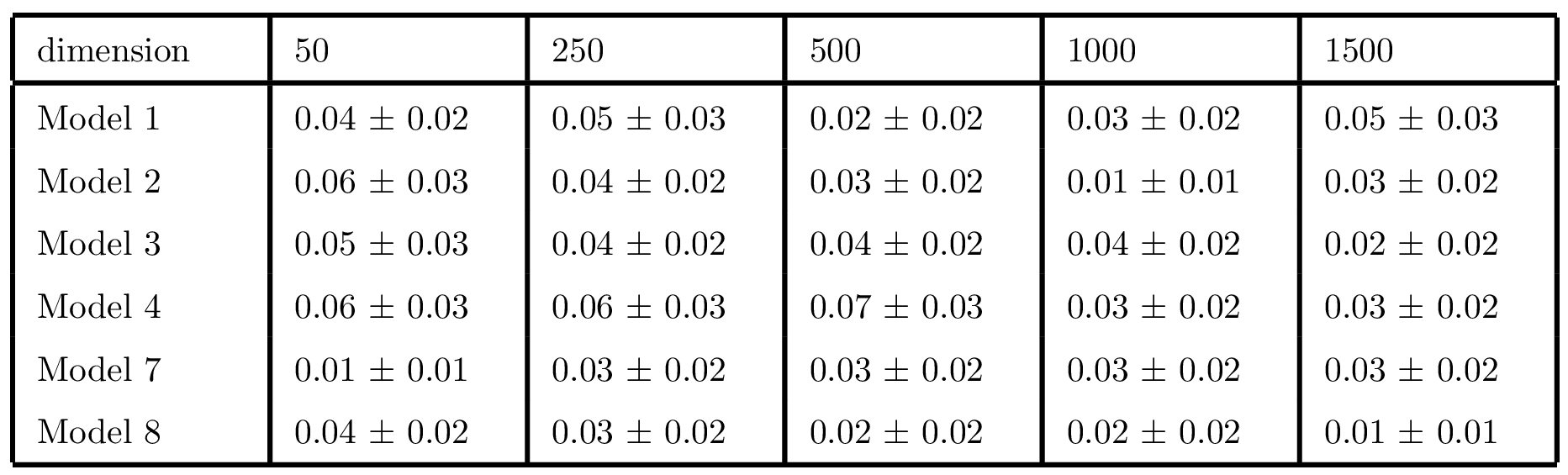}
\includegraphics[width=.95\textwidth]{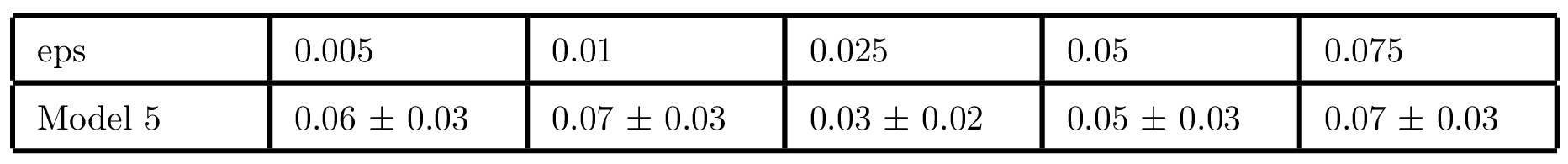}
\includegraphics[width=.95\textwidth]{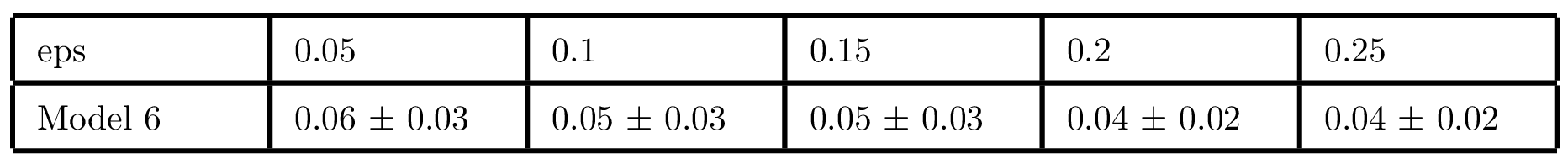} 
\caption{Average level values (rejection under the null hypothesis) drawing samples from the null, across the models and setting considered, for the test statistic $\texttt{KLR0}$.}
\label{fig:levelKLR0}
\end{figure}

\subsection{Real Data}


Further to the simulation presented in the previous section, we now illustrate the practical relevance of our testing procedure in the context of transfer learning and covariate shift detection. Specifically, we consider two benchmark image datasets commonly used in machine learning. The first dataset includes collections of real and synthetic images from the MNIST dataset \citep{lecun1998mnist,radford2015unsupervised}, and the problem is to assess the quality of generative models that produced the synthetic data -- whether we can distinguish the real from the synthetic generative mechanism. The second tests for distributional shifts between two test sets in the CIFAR-10 classification task \cite{krizhevsky2009learning,recht2019imagenet}. These examples represent qualitatively different scenarios. In both cases, our test exhibits strong empirical performance.

\paragraph{MNIST vs. DCGAN-generated MNIST.}
The MNIST dataset contains 70,000 handwritten digit images  \citep{lecun1998mnist}. We compare the distribution $\bbP$ of samples of true MNIST images to a distribution $\bbQ$ of samples by a pretrained DCGAN \citep{radford2015unsupervised}, trained to mimic the generation of handwritten digit samples in MNIST. Samples from both distributions are shown in Figure \ref{fig:mnist}.
This setting mimics a transfer learning scenario where a model trained on real data is exposed to synthetic inputs at deployment, and is thus relevant to generative model evaluation under distribution shift. 

\medskip

For each sample size $n \in \{100,200,300\}$, we draw $N_{\text{iters}} = 75$ independent pairs of samples $(X_1,\dots,X_n) \sim \mathbb{P}$ and $(Y_1,\dots,Y_n) \sim \mathbb{Q}$. Each run produces a $p$-value computed from $200$ permutations, and the reported rejection rate is the proportion of the 75 repetitions in which the null hypothesis was rejected. To assess type I error control, we repeat the same procedure with both samples drawn from the pooled data, verifying that the empirical rejection rates are close to the nominal level. Samples from both distributions are shown in Figure \ref{fig:mnist}, along with the average rejection rates for the three sample sizes, under the alternative (drawing samples from MNIST and fake MNIST datasets, respectively) and null distribution  (drawing both samples from MNIST) respectively.

\paragraph{CIFAR-10 vs. CIFAR-10.1.} 

The CIFAR-10 dataset \cite{krizhevsky2009learning} consists of 32x32 colour labeled images in 10 classes (airplane, automobile, bird, cat, deer, dog, frog, horse, ship, truck) and has served as benchmark for various modern classification pipelines. The
CIFAR-10.1 \cite{recht2019imagenet} is a new test set for the CIFAR-10 classification tasks constructed to better approximate an independent sample from the underlying distribution. Samples from both
distributions are shown in Figure~\ref{fig:cifar}.

Let $\bbP$ denote the standard CIFAR-10 test set and $\bbQ$ the CIFAR-10.1 sample. Despite being drawn from nominally the same distribution, in many cases the classification model performance consistently drops on $\bbQ$, suggesting the presence of covariate shift. This was suggested by \cite{recht2019imagenet}, and later assessed by \cite{liu2020learning}. This makes the pair $(\bbP, \bbQ)$ a natural testbed for evaluating two-sample tests in domain adaptation contexts. We test for equality $\bbP = \bbQ$ and report results in Figure\ref{fig:cifar}.

\medskip

For each sample size $n \in \{250, 500, 750, 1000\}$, we draw $N_{\text{iters}} = 75$ independent pairs of samples $(X_1,\dots,X_n) \sim \mathbb{P}$ and $(Y_1,\dots,Y_n) \sim \mathbb{Q}$. Each run produces a p-value computed from $B=200$ permutations, and the reported rejection rate is the proportion of the 75 repetitions in which the null hypothesis was rejected. To assess type I error control, we repeat the same procedure with both samples drawn from the pooled data, verifying that the empirical rejection rates are close to the nominal level. Samples from both distributions are shown in Figure \ref{fig:cifar}, along with the average rejection rates for the three sample sizes, under the alternative (drawing samples from CIFAR-10 and CIFAR-10.1 datasets, respectively) and null (drawing both samples from the CIFAR-10).

\begin{figure}[]

    \vspace{1cm}

    \centering
    \begin{subfigure}[b]{0.45\textwidth}
        \centering
        \includegraphics[width=1\textwidth]{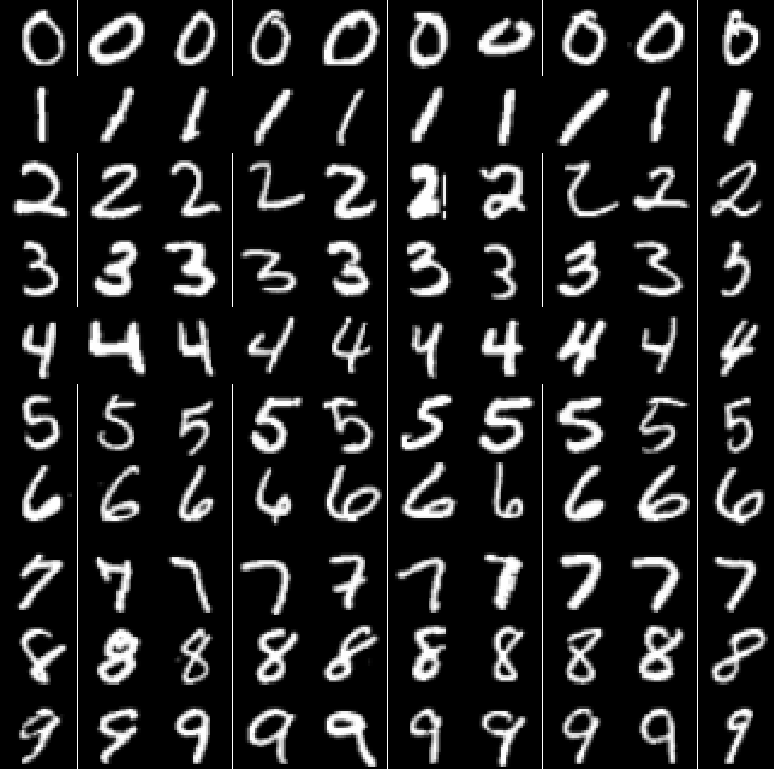}
        \caption{“Real” MNIST dataset.}
    \end{subfigure}
    \hfill
    \begin{subfigure}[b]{0.45\textwidth}
        \centering
        \includegraphics[width=1\textwidth]{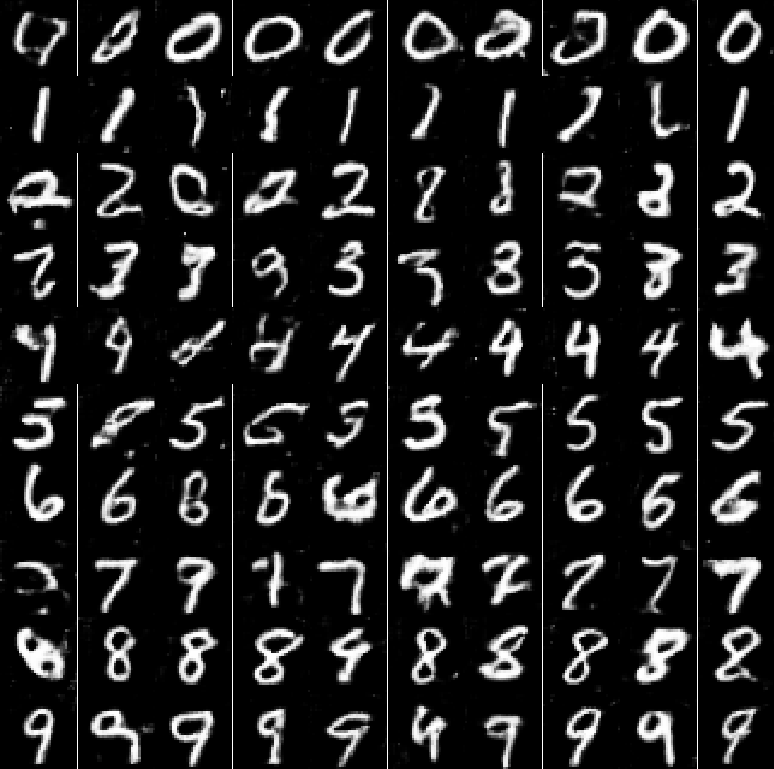} 
        \caption{``Fake” MNIST dataset.}
    \end{subfigure}

    \vspace{1em}

    \begin{subfigure}[b]{\textwidth}
        \centering
        \includegraphics[width=.75\textwidth]{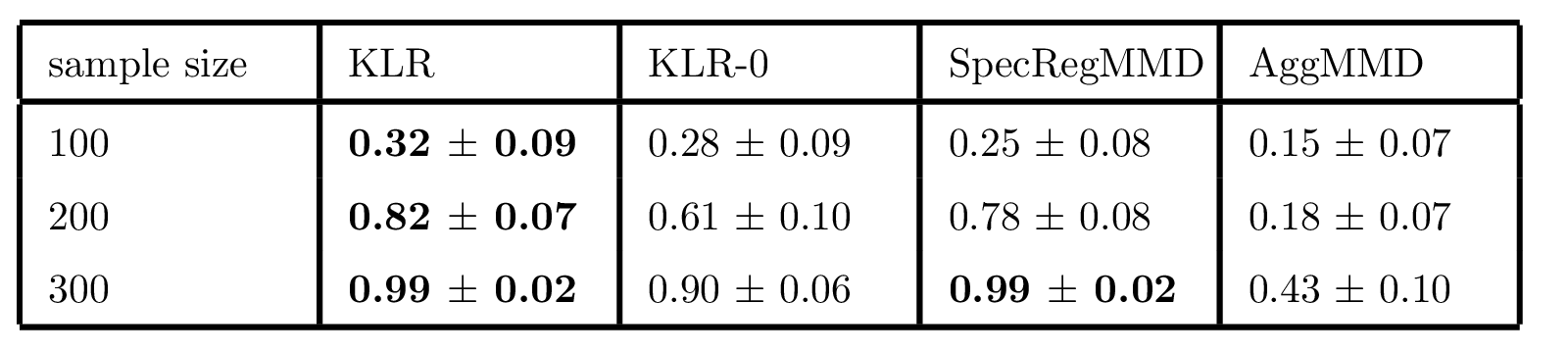}
        \caption{Rejection rates of tests for increasing sample sizes.}
    \end{subfigure}

    \vspace{1em}

    \begin{subfigure}[b]{\textwidth}
        \centering
        \includegraphics[width=.75\textwidth]{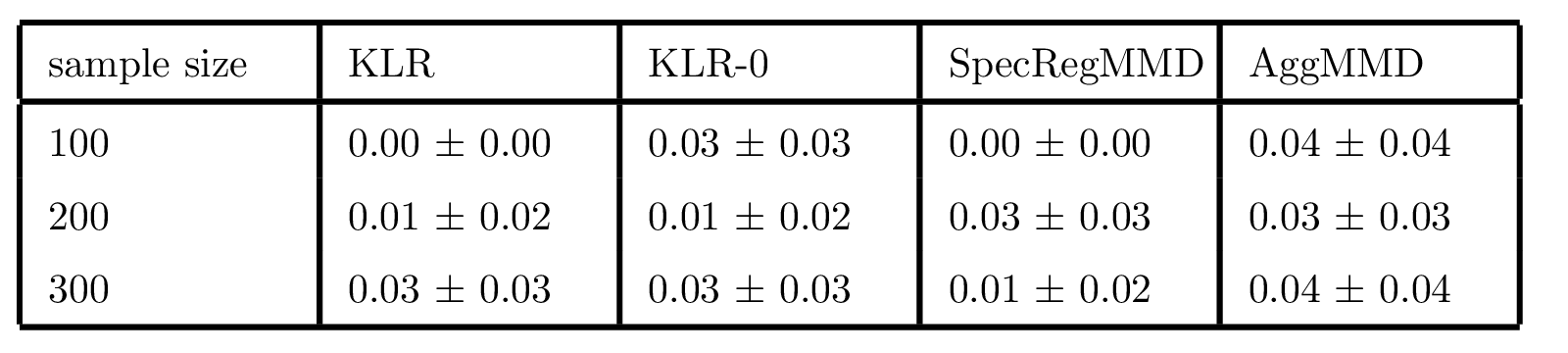}
        \caption{Average false rejections with increasing sample size, when drawing both samples from the null.}
    \end{subfigure}
    
    \caption{Top: samples from the real and “fake” MNIST datasets of handwritten digits. Middle: empirical rejection rates when testing for distributional equality between the two datasets. “Fake”-MNIST is generated by a DCGAN.
    Bottom: rejection rates under the null hypothesis, drawing both samples from the real MNIST dataset.}
    \label{fig:mnist}
\end{figure}

\begin{figure}[]
    \centering
    \begin{subfigure}[b]{0.45\textwidth}
        \centering
        \includegraphics[width=1\textwidth]{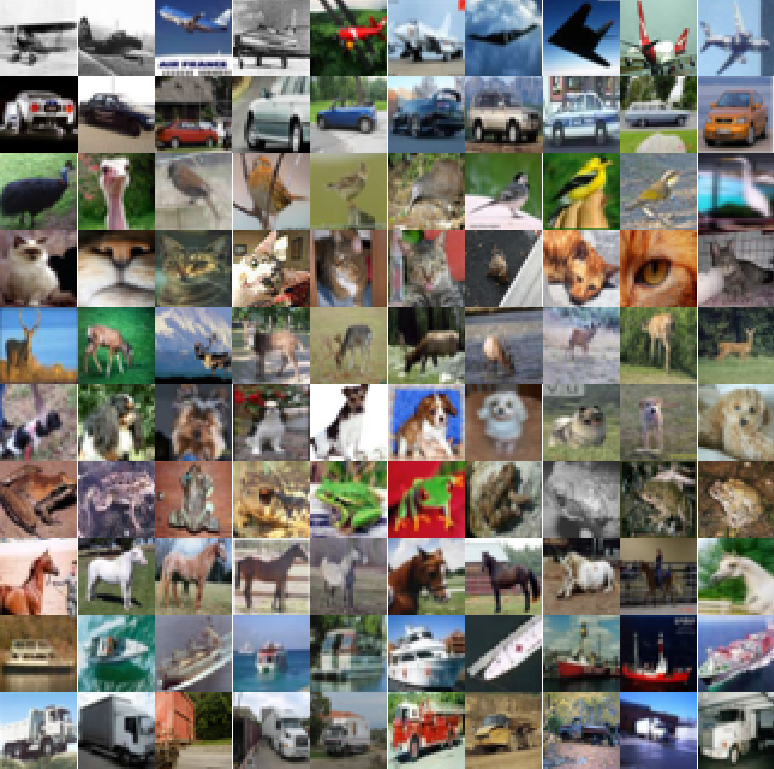}
        \caption{ CIFAR-10 dataset.}
    \end{subfigure}
    \hfill
    \begin{subfigure}[b]{0.45\textwidth}
        \centering
        \includegraphics[width=1\textwidth]{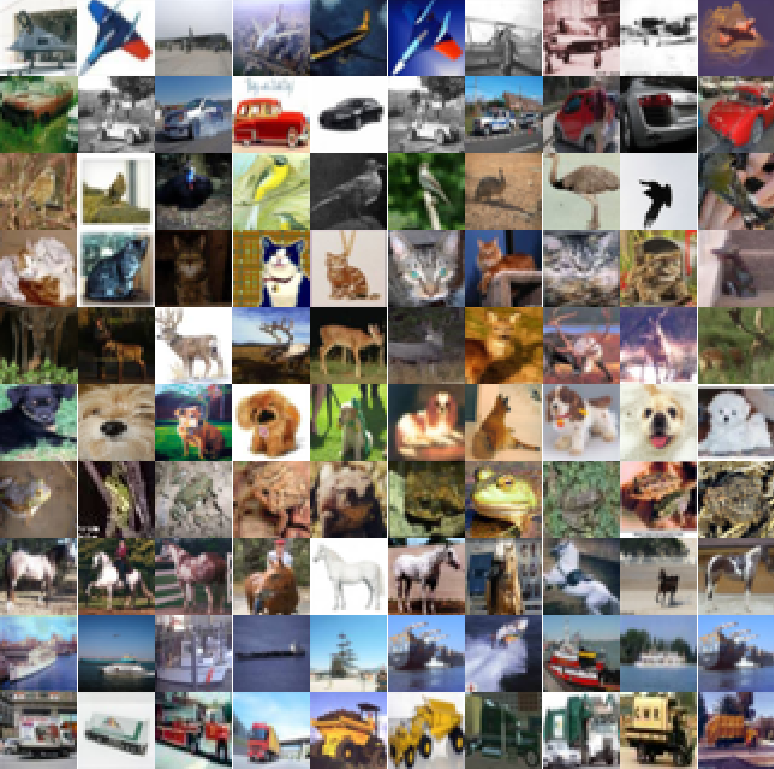} 
        \caption{ CIFAR-10.1 dataset.}
    \end{subfigure}

    \vspace{1em}

    \begin{subfigure}[b]{\textwidth}
        \centering
        \includegraphics[width=.75\textwidth]{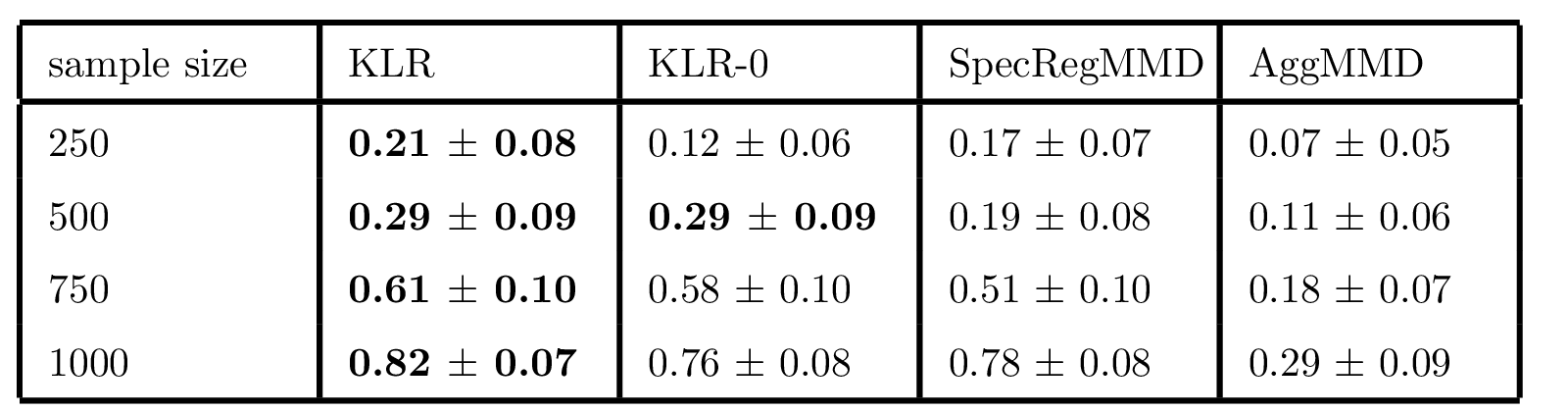}
        \caption{Rejection rates of tests for increasing sample sizes.}
    \end{subfigure}

    \vspace{1em}

    \begin{subfigure}[b]{\textwidth}
        \centering
        \includegraphics[width=.75\textwidth]{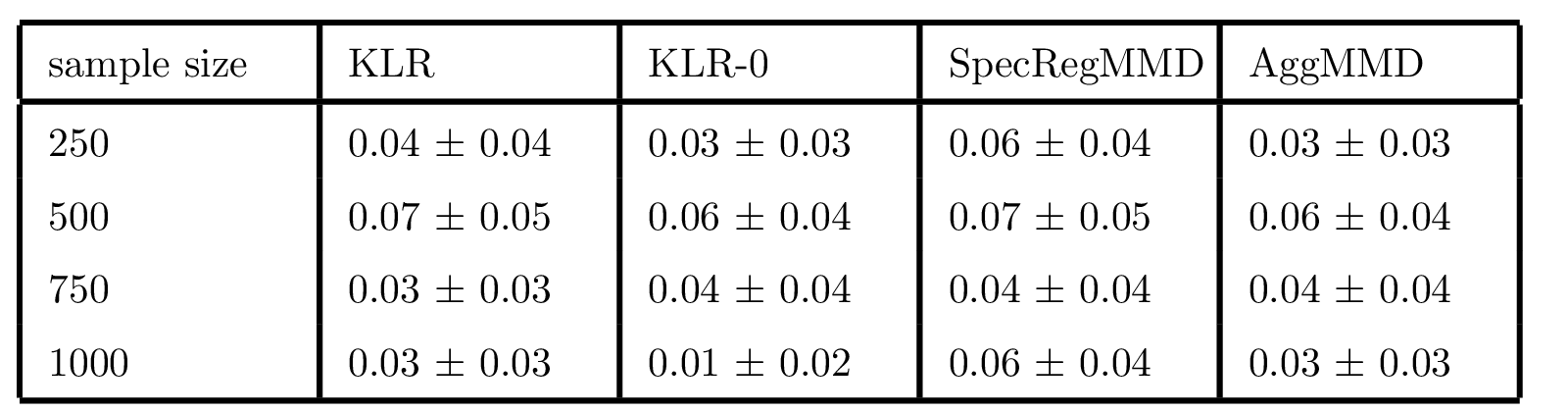}
        \caption{Average level values of tests for increasing sample sizes, drawing both samples from the null.}
    \end{subfigure}

    \caption{Top: samples from the  CIFAR-10 \cite{krizhevsky2009learning} and  CIFAR-10.1  \cite{recht2019imagenet} dataset. Middle:  empirical rejection rates when testing for distributional equality between the two datasets. Bottom: empirical level rates under the null, drawing both samples from the CIFAR-10 dataset.}
    \label{fig:cifar}
\end{figure}

\newpage

\appendix

\section{Proofs}
 
\subsection*{Proofs of main results}
\begin{proof}[Proof of Theorem~\ref{thm:main}]
    The statement for $\bbP=\bbQ$ is clear by \cite[][Theorem 3]{minh2021regularized}.
    It remains to show the claimed divergence for $\bbP\neq \bbQ$.
    Note that, if $f\in\range(\S_{\bbP}^{\sfrac{1}{2}})$, then:
    $
    \|(\S_\bbP + \gamma\Id)^{-\sfrac{1}{2}}f\|\to \|\S_{\bbP}^{-\sfrac{1}{2}}f\|
    $. To see this, let $\{e_j\}_{j\geq 1}$ be a CONS for $\cH$ of eigenfunctions of $\S_{\bbP}$, with corresponding (real, positive) eigenvalues $\{\alpha_j\}_{j\geq 1}$. If $f\in\range(\S_{\bbP}^{\sfrac{1}{2}})$, then $\exists~ g = \sum_{j\geq 1} g_j e_j$ such that $f = \S_{\bbP}^{\sfrac{1}{2}}g = \sum_{j\geq 1}g_j\alpha_j^{\sfrac{1}{2}} e_j$, with $\sum g_j^2 <\infty$. Hence, we have:
    $$
    \|(\S_\bbP + \gamma\Id)^{-\sfrac{1}{2}}f\|^2
     = \sum_{j\geq 1} g_j^2\frac{\alpha_j}{\alpha_j + \gamma}
    < \sum_{j\geq 1} g_j^2 < \infty 
    $$
    and the convergence follows by the dominated convergence theorem. The divergence in \eqref{eq:ker_regularized_KL}, hence, does not occur because of the Mahalonobis distance term, but can only be caused by the $\log\dettwo$ term. 
    That is, to prove the claim, we need to show that:
    $$
    - \log\dettwo \left( \Id + \bH_{\gamma} \right)\to \infty,\qquad \text{as } \quad \gamma\to 0
    $$
    where $\bH_{\gamma}= (\S_{\bbQ} + \gamma \Id)^{-\sfrac{1}{2}}(\S_{\bbP} - \S_{\bbQ})(\S_{\bbQ} + \gamma \Id)^{-\sfrac{1}{2}}$.
    We argue by contradiction. Assume, to the contrary, that it the quantity remains bounded as $\gamma\to 0$. 
    By the coercivity of the Carleman-Fredholm determinant (see the Proof of Theorem 3 in \cite{waghmare2023functionalgraphicallasso}), if $\|\bH_{\gamma}\|_{2} \to \infty$, then $|\log\dettwo \left( \Id + \bH_{\gamma} \right)| \to \infty$. Therefore, $\bH_{\gamma}$ must be bounded in Hilbert-Schmidt norm as $\gamma \to 0$. 
    However, $\bH_{\gamma}= (\S_{\bbQ} + \gamma \Id)^{-\sfrac{1}{2}}(\S_{\bbP} - \S_{\bbQ})(\S_{\bbQ} + \gamma \Id)^{-\sfrac{1}{2}}$ is bounded as $\gamma \to 0$ precisely when $\S_{\bbP}-\S_{\bbQ}$ is of the form $\smash{\S_{\bbQ}^{\sfrac{1}{2}}\bH_{0}\S_{\bbQ}^{\sfrac{1}{2}}}$ or equivalently, $\smash{\S_{\bbP} = \S_{\bbQ}^{\sfrac{1}{2}}(\bI + \bH_{0})\S_{\bbQ}^{\sfrac{1}{2}}}$ for some Hilbert-Schmidt $\bH_{0}$. This follows by Proposition 2.4 of \cite{hanke2017} when applied to the linear maps $\bH \mapsto (\S_{\bbQ} + \gamma \Id)^{-\sfrac{1}{2}}\bH(\S_{\bbQ} + \gamma \Id)^{-\sfrac{1}{2}}$ mapping Hilbert-Schmidt operators to Hilbert-Schmidt operators. Furthermore, $\bI + \bH_{0} \succeq \bzero$ since $\bI + \bH_{\gamma} = (\S_{\bbQ} + \gamma \Id)^{-\sfrac{1}{2}}(\S_{\bbP} + \gamma \bI)(\S_{\bbQ} + \gamma \Id)^{-\sfrac{1}{2}} \succeq \bzero$. It follows that $\cN(\bzero, \S_{\bbP})$ and $\cN(\bzero, \S_{\bbQ})$ are equivalent, implying $\bbP = \bbQ$  which contradicts our original assertion.
         
\end{proof}

\begin{proof}[Proof of Proposition~\ref{prop:prob_bound:KL}]
    To alleviate notation, we will employ the notation $\bA_{\gamma} := (\bA+\gamma\Id)$ for a general operator $\bA$ and positive $\gamma>0$. Define the operators:
    $$
    \bH_{\gamma} : \S_{\bbP,\gamma}^{-\sfrac{1}{2}}(\S_{\bbQ} - \S_{\bbP})\S_{\bbP,\gamma} ^{-\sfrac{1}{2}}
    \qquad \text{and}\qquad
    \hat\bH_{\gamma}:= \S_{\bbP_n,\gamma}^{-\sfrac{1}{2}}(\S_{\bbQ_m}  - \S_{\bbP_n})\S_{\bbP_n,\gamma}^{-\sfrac{1}{2}}.
    $$
    
    With this notation, we bound the difference:
    \begin{align*}
         \test^{\textnormal{KL}}_{\gamma}(\bbP_n,\bbQ_m) -  \test^{\textnormal{KL}}_{\gamma}(\bbP,\bbQ) &  \leq  \frac{1}{2}\left\lVert \S_{\bbP,\gamma}^{-\sfrac{1}{2}} (\m_{\bbP} - \m_{\bbQ}) - \S_{\bbP_n,\gamma} ^{-\sfrac{1}{2}} (\m_{\bbP_n} - \m_{\bbQ_m}) \right\rVert _{\cH}^2 
           \\ &\quad  + \frac{1}{2}\left\lvert \log\dettwo(\Id + \bH_\gamma) - \log\dettwo(\Id + \hat\bH_\gamma) \right\rvert
            \end{align*}

    For first term on the right-hand side, employing Lemma~\ref{lem:mahl:seq}, we have the upper bound:
    \begin{align*}
        &\frac{1}{2}\left\lVert \S_{\bbP,\gamma}^{-\sfrac{1}{2}} (\m_{\bbP} - \m_{\bbQ}) - \S_{\bbP_n,\gamma} ^{-\sfrac{1}{2}} (\m_{\bbP_n} - \m_{\bbQ_m}) \right\rVert _{\cH}^2 
        \\ 
         &\qquad \leq \frac{1}{2}
          \left( \gamma^{-1/2} \left\lVert (\m_{\bbP} +  \m_{\bbQ}) -(\m_{\bbP_n} + \m_{\bbQ_m} )\right\rVert _{\cH}
          + {\sqrt{K}}\gamma^{-\sfrac{1}{2}} \left\lVert\S_{\bbP} - \S_{\bbP_n}\right\rVert _{\textnormal{Tr}}^{\sfrac{1}{2}} \right)^2      
          \\
        &\qquad \leq
          2\gamma^{-1} \left\lVert \m_{\bbP} - \m_{\bbP_n} \right\rVert_{\cH}^2 +2\gamma^{-1}\left\lVert \m_{\bbQ} - \m_{\bbQ_m} \right\rVert_{\cH}^2 
          + K\gamma^{-1} \left\lVert\S_{\bbP} - \S_{\bbP_n}\right\rVert _{\textnormal{Tr}}.
    \end{align*}

    Then, moving to the second term on the right-hand side, observe that $\bH_{\gamma}, \hat\bH_{\gamma} \succ -\Id$, and are trace-class for any strictly positive regularisation parameter $\gamma>0$. Indeed:
    \begin{align*}
        \langle \bH_{\gamma}\bf, \bf\rangle_{\cH}
            &= \langle  \S_{\bbP,\gamma}^{-\sfrac{1}{2}}(\S_{\bbQ} - \S_{\bbP})\S_{\bbP,\gamma} ^{-\sfrac{1}{2}}\bf,\bf\rangle_{\cH}
            \\& = \langle  (\S_{\bbP,\gamma}^{-\sfrac{1}{2}}\S_{\bbQ,\gamma}\S_{\bbP,\gamma} ^{-\sfrac{1}{2}} - \Id)\bf,\bf\rangle_{\cH}
            \\& = \| \S_{\bbQ,\gamma}^{\sfrac{1}{2}}\S_{\bbP,\gamma}^{-\sfrac{1}{2}}\bf\|_{\cH} - \|\bf\|_{\cH} \geq -\|\bf\|_{\cH}
    \end{align*}
    and 
        $\| \bH_{\gamma}\|_{\textnormal{Tr}} \leq \gamma^{-2}(\| \S_{\bbP} - \S_{\bbQ}\|_{\textnormal{Tr}}$,
    and similarly for $\hat\bH_{\gamma}$.
    Hence, by \ref{lgb_continuity} in Lemma~\ref{lem:logdet_bounds} we can bound:
    \begin{align*}
        \log\dettwo(\Id + \bH_\gamma) - \log\dettwo(\Id + \hat\bH_\gamma) 
        & \leq \|(\Id + \bH_\gamma)^{-1}\bH_\gamma^2 - (\Id + \hat\bH_\gamma)^{-1}\hat \bH_\gamma^2\|_{\textnormal{Tr}}
        \\& \leq \left\lVert\S_{\bbP,\gamma}^{\sfrac{1}{2}}\S_{\bbQ,\gamma}^{-1}\S_{\bbP,\gamma}^{\sfrac{1}{2}} - \S_{\bbP_n,\gamma}^{\sfrac{1}{2}}\S_{\bbP_m,\gamma}^{-1}\S_{\bbP_n,\gamma}^{\sfrac{1}{2}}\right\rVert_{\textnormal{Tr}}
        \\ &\leq T_1 + T_2 + T_3
   \end{align*}
    where we have plugged-in the definitions of $\bH_\gamma, \hat\bH_\gamma$ and simplified the expression to obtain the last term, and introduced the notation:
$$
    T_1 =  \left\lVert\left(\S_{\bbP,\gamma}^{\sfrac{1}{2}} - \S_{\bbP_n,\gamma}^{\sfrac{1}{2}}\right) \S_{\bbQ,\gamma}^{-1}\S_{\bbP,\gamma}^{\sfrac{1}{2}}\right\rVert_{\textnormal{Tr}},
    \qquad\qquad 
    T_2 = \left\lVert\S_{\bbP_n,\gamma}^{\sfrac{1}{2}} \S_{\bbQ,\gamma}^{-1}\left(\S_{\bbP,\gamma}^{\sfrac{1}{2}} - \S_{\bbP_n,\gamma}^{\sfrac{1}
    {2}}\right)\right\rVert_{\textnormal{Tr}},
$$
$$
     T_3 = \left\lVert\S_{\bbP_n,\gamma}^{\sfrac{1}{2}} \left(\S_{\bbQ,\gamma}^{-1} - \S_{\bbQ_m,\gamma}^{-1}\right)\S_{\bbP_n,\gamma}^{\sfrac{1}{2}}\right\rVert_{\textnormal{Tr}}.
$$
For the first term, we see that:
\begin{align*}
T_1 
    &= 
   \left\lVert\left(\S_{\bbP,\gamma}^{\sfrac{1}{2}} - \S_{\bbP_n,\gamma}^{\sfrac{1}{2}}\right) \S_{\bbQ,\gamma}^{-1}\S_{\bbP,\gamma}^{\sfrac{1}{2}}\right\rVert_{\textnormal{Tr}} 
\\  & \leq 
   \left\lVert\S_{\bbP,\gamma}^{\sfrac{1}{2}} - \S_{\bbP_n,\gamma}^{\sfrac{1}{2}}\right\rVert_{\textnormal{HS}} 
    \cdot \left\lVert\S_{\bbQ,\gamma}^{-1}\S_{\bbP,\gamma}^{\sfrac{1}{2}}\right\rVert_{\textnormal{HS}}
\\  & \leq           \left\lVert\S_{\bbP,\gamma}^{\sfrac{1}{2}} - \S_{\bbP_n,\gamma}^{\sfrac{1}{2}}\right\rVert_{\textnormal{HS}} \cdot \left\lVert
          \S_{\bbQ,\gamma}^{-1}
          \right\rVert_{\textnormal{op}} \cdot
          \left\lVert
          \S_{\bbP,\gamma}^{\sfrac{1}{2}}
          \right\rVert_{\textnormal{HS}} 
\\  & \leq       \left\lVert\S_{\bbP,\gamma}^{\sfrac{1}{2}} - \S_{\bbP_n,\gamma}^{\sfrac{1}{2}}\right\rVert_{\textnormal{HS}} \cdot \left\lVert
      \S_{\bbQ,\gamma}^{-1}
      \right\rVert_{\textnormal{op}} \cdot
      \left\lVert
      \S_{\bbP,\gamma}
      \right\rVert_{\textnormal{Tr}} ^{\sfrac{1}{2}}\qquad
\\  & \leq       \sqrt{2K}\gamma^{-3/2}  \left\lVert\S_{\bbP} - \S_{\bbP_n}\right\rVert_{\textnormal{HS}}    
\end{align*}
For the second term, mutatis-mutandis we see that:
\begin{align*}
T_2 = \left\lVert\S_{\bbP_n,\gamma}^{\sfrac{1}{2}} \S_{\bbQ,\gamma}^{-1}\left(\S_{\bbP,\gamma}^{\sfrac{1}{2}} - \S_{\bbP_n,\gamma}^{\sfrac{1}
    {2}}\right)\right\rVert_{\textnormal{Tr}}
 \leq       \sqrt{2K}\gamma^{-3/2}  \left\lVert\S_{\bbP} - \S_{\bbP_n}\right\rVert_{\textnormal{HS}}.
\end{align*}
Similarly, for the third term we have the bound:
\begin{align*}
     T_3 
     &= \left\lVert\S_{\bbP_n,\gamma}^{\sfrac{1}{2}} \left(\S_{\bbQ,\gamma}^{-1} - \S_{\bbQ_m,\gamma}^{-1}\right)\S_{\bbP_n,\gamma}^{\sfrac{1}{2}}\right\rVert_{\textnormal{Tr}} 
     \leq 
     2K\gamma^{-2}\left\lVert\S_{\bbQ} - \S_{\bbQ_m}\right\rVert_{\textnormal{HS}}.
\end{align*}

\noindent Putting things together gives:

    \begin{align*}
    \left\lvert\test^{\textnormal{KL}}_{\gamma}(\bbP_n,\bbQ_m) -  \test^{\textnormal{KL}}_{\gamma}(\bbP,\bbQ) \right\rvert 
       &\leq
       4\gamma^{-2}K \cdot \Big(
         {\left\lVert\S_\bbQ  - \S_{\bbQ_m}\right\rVert _{\textnormal{Tr}}}
       + {\left\lVert \S_{\bbP_n} -  \S_\bbP\right\rVert _{\textnormal{Tr}}}
       \Big)
       +
       2\gamma^{-1}K \cdot \Big(
         {\left\lVert\m_\bbQ  - \m_{\bbQ_m}\right\rVert^2}
       + {\left\lVert \m_{\bbP_n} -  \m_\bbP\right\rVert^2 }
       \Big)
       \\
       &\leq 6\gamma^{-2}K \Big(
         \underbrace{\left\lVert\S_\bbQ  - \S_{\bbQ_m}\right\rVert _{\textnormal{Tr}}}_{\Delta_{Y}}
       + \underbrace{\left\lVert \S_{\bbP_n} -  \S_\bbP\right\rVert _{\textnormal{Tr}}}_{\Delta_{X}}
       \Big).
\end{align*}

It is then easy to see that changing either of $X_i$ or $Y_i$ in $\Delta := {\Delta_{X} + \Delta_{Y}}$ results in changes in magnitude of at most $K/\min(n,m)$.
Hence, McDiarmid's theorem yields that:
\begin{align*}
    P\left( \Delta - \bbE[\Delta] > \varepsilon \right) \leq  2\exp\left( - \frac{2\varepsilon^2}{K^2(n^{-1} + m^{-1})}\right)
    = 2\exp\left( - \frac{2\varepsilon^2 n m }{K^2(n + m)}\right)
\end{align*}

To conclude, we need to bound the decay of $\bbE[\Delta]$ with $n,m$. This requires a bit of caution, since it requires us to deal with Banach-valued random variables \cite{ledoux2013probability}. First, by symmetrisation:
\begin{align*}
    \bbE\left[ \left\lVert\S_\bbQ  - \S_{\bbQ_m}\right\rVert _{\textnormal{Tr}}\right]
    & = \bbE_X \left\lVert \frac{1}{n}\sum_{i = 1}^n k_{X_i}\otimes k_{X_i} - \bbE_{\tilde X}[k_{\tilde X}\otimes k_{\tilde X}] \right\rVert _{\textnormal{Tr}}
    \\
    \textnormal{(Jensen)}\quad 
    & \leq \bbE_{X,\tilde X} \left\lVert \frac{1}{n}\sum_{i = 1}^n (k_{X_i}\otimes k_{X_i} - k_{\tilde X_i}\otimes k_{\tilde X_i}) \right\rVert _{\textnormal{Tr}}
    \\
    \textnormal{(symmetrisation)}\quad 
    & \leq 2\bbE_{X,\varepsilon} \left\lVert \frac{1}{n}\sum_{i = 1}^n \varepsilon_i k_{X_i}\otimes k_{X_i} \right\rVert _{\textnormal{Tr}}
\end{align*}
Now, observing that $\|A\|_{\textnormal{Tr}} = \sup_{C \textnormal{ compact}, \|C\|_{\textnormal{op}} \leq 1} |\trace(A C)|$, we can write the trace norm as:
\begin{align*}
   &2\bbE_{X,\varepsilon} \sup_{\substack{C \textnormal{ compact},\\ \|C\|_{\textnormal{op}}\leq 1} } \left\lvert\trace\left( \frac{1}{n}\sum_{i = 1}^n \varepsilon_i C( k_{X_i}\otimes k_{X_i}) \right)\right\rvert
    = 2\bbE_{X,\varepsilon} \sup_{\substack{C \textnormal{ compact},\\ \|C\|_{\textnormal{op}}\leq 1}} \left\lvert \sum_{i = 1}^n  \varepsilon_i \trace\left( \frac{C( k_{X_i}\otimes k_{X_i}) }{n} \right)\right\rvert.
\end{align*}
Since the unit ball is compact in the weak topology, we can employ Khintchine's inequality to upper bound the last displayed equation by:
\begin{align*}
    &2\bbE_{X} \sup_{\substack{C \textnormal{ compact},\\ \|C\|_{\textnormal{op}}\leq 1}} \left( \sum_{i = 1}^n  \trace\left( \frac{C( k_{X_i}\otimes k_{X_i}) }{n} \right)^2\right)^{1/2}
    \\
    &= 
    2\bbE_{X} \sup_{\substack{C \textnormal{ compact},\\ \|C\|_{\textnormal{op}}\leq 1}} \frac{1}{\sqrt{n}}\left( \sum_{i = 1}^n  \frac{\trace\left( C (k_{X_i}\otimes k_{X_i})\right)^2 }{n}\right)^{1/2}
   \leq  
    2K^{1/2}\frac{1}{\sqrt{n}}.
\end{align*}
establishing that:
$$
\bbE[  \Delta] = \bbE\left[ \left\lVert\S_\bbQ  - \S_{\bbQ_m}\right\rVert _{\textnormal{Tr}} + \left\lVert \S_{\bbP_n} -  \S_\bbP\right\rVert _{\textnormal{Tr}} \right] \leq 2K^{1/2}\left( \frac{1}{\sqrt{n}} + \frac{1}{\sqrt{m}}\right).
$$
For the first order term, it is standard that $\bbE[\Delta_1]\leq 2 K/n$. In particular:
$$
\bbE[  \Delta_1] = \bbE\left[ \left\lVert\m_\bbQ  - \m_{\bbQ_m}\right\rVert + \left\lVert \m_{\bbP_n} -  \m_\bbP\right\rVert \right] \leq K\left( \frac{1}{n} + \frac{1}{m}\right).
$$
Hence, we obtain that:
\begin{align*}
    &P\left( \left\lvert\test^{\textnormal{KL}}_{\gamma}(\bbP_n,\bbQ_m) -  \test^{\textnormal{KL}}_{\gamma}(\bbP,\bbQ) \right\rvert  > 12\gamma^{-2}K^{3/2}\left( \frac{1}{\sqrt{n}} + \frac{1}{\sqrt{m}}\right) + \varepsilon  \right)\\
    &\leq 
    P\left(6K\gamma^{-2}(\Delta - \bbE[\Delta])> 12\gamma^{-2}K^{3/2}\left( \frac{1}{\sqrt{n}} + \frac{1}{\sqrt{m}}\right) + \varepsilon  \right)
    \\
    &\leq 
    P\left(\Delta - \bbE[\Delta] > 2K^{1/2}\left( \frac{1}{\sqrt{n}} + \frac{1}{\sqrt{m}}\right) - \bbE[\Delta] + \frac{\gamma^2}{6K}\varepsilon  \right)
    \\
    &\leq 
    P\left(\Delta - \bbE[\Delta] >  \frac{\gamma^2}{6K}\varepsilon   \right)
    \\
    & \leq  \exp\left( - \frac{\gamma^4\varepsilon^2 n m }{18K^4(n + m)}\right)
\end{align*}
which completes the proof.
\end{proof}

\begin{proof}[Proof of Corollary~\ref{cor:01law}]
 Under $H_0$, the population statistic is identically null by \eqref{eq:ker_regularized_KL}, i.e., $\test^{\textnormal{KL}}_{\gamma}(\bbP, \bbQ) = 0$ for all $\gamma$. Hence, (1) in Proposition~\ref{prop:prob_bound:KL}:
\begin{align*}
    \lim_{n,m \to \infty} P\left(\lvert \test^{\textnormal{KL}}_{\gamma_{nm}}(\bbP_n, \bbQ_m)\rvert  > 
    \varepsilon  \right) 
    & \leq \lim_{n,m \to \infty} P\left(\lvert \hat\test^{\textnormal{KL}}_{\gamma_{nm}}(\bbP_n, \bbQ_m)\rvert  > 
    \frac{\sqrt{2}K(\gamma_{nm} + 4K)}{\gamma_{nm}^2}\left(\frac{1}{\sqrt{n}} + \frac{1}{\sqrt{m}}\right)
    + \varepsilon/2 \right) 
    \\& \leq 2 \cdot \lim_{n,m \to \infty}\exp\left( -\frac{\gamma_{nm}^4 \min\{n,m\}}{72K^4}\varepsilon^2 \right) = 0 .
\end{align*}
where we have used that  $\gamma_{nm}^4\min\{n,m\}\to\infty$ as $(n,m)\to\infty$.

\medskip

Similarly, under $H_1$ we have that the population statistic diverges as regularization decays.
In particular, we have that
$$
P(|\test^{\textnormal{KL}}_{\gamma_{nm}}(\bbP_n,\bbQ_m) - \test^{\textnormal{KL}}_{\gamma_{nm}}(\bbP,\bbQ)| > \varepsilon) \to 0 \quad \text{as } n,m \to \infty,
$$
by (1) in Proposition~\ref{prop:prob_bound:KL}, provided $\gamma_{nm}^4\min\{n,m\}\to\infty$. In particular, for any $\delta>0$ and sufficiently large  $m,n$,
$$
P(|\test^{\textnormal{KL}}_{\gamma_{nm}}(\bbP_n,\bbQ_m) - \test^{\textnormal{KL}}_{\gamma_{nm}}(\bbP,\bbQ)| > \varepsilon/2) < \delta.
$$
Now, note that:

$$
P(|\test^{\textnormal{KL}}_{\gamma_{nm}}(\bbP_n,\bbQ_m)| > \varepsilon) \geq P(|\test^{\textnormal{KL}}_{\gamma_{nm}}(\bbP,\bbQ)| > \varepsilon/2) - P(|\test^{\textnormal{KL}}_{\gamma_{nm}}(\bbP_n,\bbQ_m) - \test^{\textnormal{KL}}_{\gamma_{nm}}(\bbP,\bbQ)| > \varepsilon/2),
$$
but the first term on the right hand side is deterministic and diverging as regularization decays.
Hence, 
$$
P(|\test^{\textnormal{KL}}_{\gamma_{nm}}(\bbP_n,\bbQ_m)| > \varepsilon) \geq  1 - \delta
$$
for any small $\delta>0$, provided $n,m$ are sufficiently large.
Since $\delta$ was arbitrary, this proves the claim.
\end{proof}

\begin{lemma}\label{lem:boundonthresh}
    Let $\{X_j\}_{j\geq 1},\{Y_j\}_{j\geq 1}$ be i.i.d.\ sequences drawn from $\bbP, \bbQ \in \cP(\cX)$, respectively. Take $\gamma_n \geq C^2n^{\sfrac{(\beta-1)}{4}}$ for some $C>0$. 
    Define:
    \begin{equation}\label{eq:u}
    u_{\alpha, n} := n^{-\sfrac{\beta}{2}} 6K\left(  {4K^{1/2}} +  KC\sqrt{\log \alpha^{-1} } \right).
\end{equation}
Then:
    \begin{equation*}\label{eq:boundonthresh}
        P_{H_0}\left( \left\lvert\test^{\textnormal{KL}}_{\gamma}(\bbP_n,\bbQ_n) \right\rvert  > u_{\alpha, n}\right) \leq \alpha
    \end{equation*} 
    for any $\alpha\in(0,1)$.
\end{lemma}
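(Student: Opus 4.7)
The plan is to read the claim as a standard ``bias plus deviation bounded by a quantile of a sub-Gaussian tail'' argument, combining the identifiability result of Theorem~\ref{thm:main} with the concentration estimate of Proposition~\ref{prop:prob_bound:KL}. Under $H_0$ we have $\bbP=\bbQ$, and in that case Theorem~\ref{thm:main} gives $\test^{\textnormal{KL}}_{\gamma}(\bbP,\bbQ)=0$ for every $\gamma>0$ (this is immediate from \eqref{eq:KRKL}: both the Mahalanobis and $\log\dettwo$ terms vanish identically). Hence the event under consideration is precisely $\{|\test^{\textnormal{KL}}_{\gamma_n}(\bbP_n,\bbQ_n)-\test^{\textnormal{KL}}_{\gamma_n}(\bbP,\bbQ)|>u_{\alpha,n}\}$, which is exactly what Proposition~\ref{prop:prob_bound:KL} controls.

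Next, I would specialise Proposition~\ref{prop:prob_bound:KL} to $n=m$. Then the deterministic bias term collapses to $\frac{24 K^{3/2}}{\gamma_n^{2}\sqrt{n}}$ and the sub-Gaussian tail to $\exp\!\bigl(-\tfrac{\gamma_n^{4} n}{36 K^{4}}\varepsilon^{2}\bigr)$. To obtain a tail probability no larger than $\alpha$ it suffices to invert this in $\varepsilon$, namely take
\[
\varepsilon \;=\; \varepsilon(\alpha,n) \;=\; \frac{6 K^{2}}{\gamma_n^{2}\sqrt{n}}\sqrt{\log \alpha^{-1}} .
\]
Combining the bias and the stochastic contribution, the level-$\alpha$ threshold coming out of Proposition~\ref{prop:prob_bound:KL} is
\[
\tau_{\alpha,n} \;=\; \frac{24 K^{3/2}}{\gamma_n^{2}\sqrt{n}} \;+\; \frac{6K^{2}\sqrt{\log \alpha^{-1}}}{\gamma_n^{2}\sqrt{n}}
\;=\; \frac{6K}{\gamma_n^{2}\sqrt{n}}\Bigl( 4K^{1/2} + K\sqrt{\log \alpha^{-1}}\Bigr) .
\]

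The last step is to translate $\tau_{\alpha,n}$ into the rate $n^{-\beta/2}$ by using the hypothesis $\gamma_n\geq C^{2}n^{-(1-\beta)/4}$. Squaring gives $\gamma_n^{2}\geq C^{4} n^{-(1-\beta)/2}$, so
\[
\frac{1}{\gamma_n^{2}\sqrt{n}} \;\leq\; \frac{n^{(1-\beta)/2}}{C^{4}\sqrt{n}} \;=\; \frac{n^{-\beta/2}}{C^{4}} ,
\]
and therefore $\tau_{\alpha,n} \leq 6K\, n^{-\beta/2} C^{-4}\bigl(4K^{1/2}+K\sqrt{\log\alpha^{-1}}\bigr)$, which is bounded above by the displayed $u_{\alpha,n}=6K\,n^{-\beta/2}\bigl(4K^{1/2}+KC\sqrt{\log\alpha^{-1}}\bigr)$ provided $C$ is chosen in the admissible regime $C\ge 1$ (so that $C^{-4}\le 1$ and $C^{-4}\le C$). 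Monotonicity of the tail probability in the threshold then yields the stated bound.

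The only mildly delicate point is the bookkeeping of constants in the final step: the concentration bound produces a $C^{-4}$ factor from the $\gamma_n^{-2}$ substitution, whereas $u_{\alpha,n}$ is written with a linear $C$, so the comparison is not literally identity but a straightforward upper bound once $C$ is normalised (e.g.\ $C\geq 1$). Everything else is algebra: the nontrivial probabilistic content has already been packaged into Theorem~\ref{thm:main} (to set the bias to zero) and Proposition~\ref{prop:prob_bound:KL} (to control the fluctuation), so no further concentration, symmetrisation, or permutation argument is needed at this stage.
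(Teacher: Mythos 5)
Your proposal is correct and follows essentially the same route as the paper's own proof: set the population statistic to zero under $H_0$, specialise Proposition~\ref{prop:prob_bound:KL} to $n=m$, invert the sub-Gaussian tail at level $\alpha$, and absorb $\gamma_n^{-2}n^{-1/2}$ into $n^{-\beta/2}$ via $\gamma_n \geq C^2 n^{(\beta-1)/4}$. The $C^{-4}$ versus $C$ bookkeeping issue you flag (requiring, e.g., $C\geq 1$) is real but is equally present, and silently glossed over, in the paper's own proof, so your explicit normalisation is if anything the more careful treatment.
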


\begin{proof}[Proof of Lemma~\ref{lem:boundonthresh}]
Note that, for $\alpha>0$:
\begin{align*}
\exp\left( - \frac{\gamma^4 n }{36K^4}\varepsilon^2\right) \leq \alpha  
    &\quad \Longleftrightarrow \quad
    -  \frac{\gamma^4 n }{36K^4}\varepsilon^2 \leq \log\alpha 
\\
    &\quad \Longleftrightarrow \quad
       \frac{\gamma^4 n }{36K^4}\varepsilon^2<\log\frac{1}{\alpha} 
\\
    &\quad \Longleftrightarrow \quad
       \varepsilon \leq \sqrt{36K^4 \gamma^{-4} n^{-1} \log\frac{1}{\alpha} }
\end{align*}
In particular, if $\gamma_n \geq  C^2 n^{\frac{\beta-1}{4}}$ for some $C>0$, this holds for:
$$
\varepsilon \leq 6 C K^2 n^{-\sfrac{\beta}{2}}\sqrt{\log\frac{1}{\alpha} }
$$
By Proposition~\ref{prop:prob_bound:KL}, taking $\gamma_n \geq C^2 n^{\frac{\beta-1}{4}}$, balanced samples $n=m$, and setting $\varepsilon = 6 CK^2 n^{-\sfrac{\beta}{2}}\sqrt{\log\frac{1}{\alpha} }$ we have under the null hypothesis $H_0$ that:
$$
        P_{H_0}\left( \left\lvert\test^{\textnormal{KL}}_{\gamma_n}(\bbP_n,\bbQ_n) \right\rvert  > n^{-\sfrac{\beta}{2}} 6K\left(  {4K^{1/2}} +  KC\sqrt{\log\frac{1}{\alpha} } \right) \right)
       \leq  \alpha.
$$
where we have used that, under the null, $\test_\gamma(\bbP,\bbQ) = 0$ for all $\gamma>0.$
\end{proof}

\begin{proof}[Proof of Theorem~\ref{thm:consistent}]
For simplicity, we consider the case where $n=m$. The proof can be easily adapted to the case where $n,m$ differ, both diverge to infinity but with a finite limiting ratio. Let us first observe that, given observed samples $\{X_1,\dots,X_n\}$ and $\{Y_1,\dots,Y_n\}$ drawn from $\bbP,\bbQ$ respectively, then we have that:
$$
\frac{1}{2n}\left( \sum_{j=1}^n\delta_{X_j} + \sum_{j=1}^n\delta_{Y_j}\right) \to \frac{\bbP +\bbQ}{2}, \quad \textnormal{if}\quad n\to \infty,
$$
where the convergence holds weakly almost surely (i.e., in the topology of weak convergence of probability measures), 
In particular, by the continuous mapping theorem, if we let $Z_1,\dots,Z_{2n}$ denote i.i.d.\ random draws from the pooled sample $\{X_1, \dots , X_n, Y_1,\dots, Y_n\}$ then we have that, for any fixed $\gamma>0$:
$$
\test^{\textnormal{KL}}_{\gamma}\left(\frac{1}{n}\sum_{j=1}^n\delta_{Z_j}, \frac{1}{m}\sum_{j=1}^m\delta_{Z_{n+j}}\right)
\to 
\test^{\textnormal{KL}}_{\gamma}\left(\frac{\bbP}{2} +  \frac{\bbQ}{2}, \frac{\bbP}{2} +  \frac{\bbQ}{2}\right) = 0, \quad n\to \infty, 
$$
almost surely, and therefore, by  Corollary~\ref{cor:01law}, we have that for any $\varepsilon>0$:
$$
P\left( 
\test^{\textnormal{KL}}_{\gamma_n}\left(\frac{1}{n}\sum_{j=1}^n\delta_{Z_j}, \frac{1}{m}\sum_{j=1}^m\delta_{Z_{n+j}}\right) > \varepsilon 
\right)\to 0,
$$
provided $\gamma_n^4 n\to\infty$ as $n\to\infty$.This shows that for any $\varepsilon>0$, the $1-\alpha$ quantile $\tau^*_{\gamma_n,n}<\varepsilon$ as $n\to\infty.$ So the proof will be complete if we can show that the observed value $\test^{\textnormal{KL}}_{\gamma_{n}}(\bbP_n,\bbQ_m)$ exceeds $\varepsilon$ with probability converging to $1$, as $n$ diverges. But this is precisely the statement of Corollary~\ref{cor:01law}.

\end{proof}

\begin{lemma}\label{lem:lowerbound}
    Let $\bbP,\bbQ \in \cP(\cX)$. Then,     for all $\gamma>0$:
    \begin{enumerate}
        \item 
        $
        \test_\gamma(\bbP,\bbQ) \geq  \frac{1}{4K^2}\|\S_{\bbQ} - \S_{\bbP}\|^2_{\textnormal{HS}}
        $

        \item       $
        \test_\gamma(\bbP,\bbQ) \leq \gamma^{-2}\|\S_{\bbQ} - \S_{\bbP}\|^2_{\textnormal{HS}} \leq \gamma^{-2}
        $
    \end{enumerate}

\end{lemma}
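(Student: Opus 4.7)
The plan is to reduce both inequalities to scalar estimates for $\phi(t) := t - \log(1+t)$ on $(-1,\infty)$, applied spectrally to the self-adjoint Hilbert--Schmidt operator
\[
\bH_\gamma \;:=\; (\S_\bbQ + \gamma\Id)^{-1/2}\,(\S_\bbP - \S_\bbQ)\,(\S_\bbQ + \gamma\Id)^{-1/2},
\]
and then to leverage the two operator-to-Hilbert--Schmidt norm inequalities
\[
\|\bH_\gamma\|_{\mathrm{HS}} \;\leq\; \gamma^{-1}\|\S_\bbP - \S_\bbQ\|_{\mathrm{HS}}
\qquad\text{and}\qquad
\|\S_\bbP - \S_\bbQ\|_{\mathrm{HS}} \;\leq\; (K+\gamma)\,\|\bH_\gamma\|_{\mathrm{HS}}.
\]
Two preparatory observations are used throughout: the Mahalanobis term in $\test_\gamma$ is non-negative (hence discardable for the lower bound and boundable on its own for the upper), and $\Id + \bH_\gamma = (\S_\bbQ + \gamma\Id)^{-1/2}(\S_\bbP + \gamma\Id)(\S_\bbQ + \gamma\Id)^{-1/2} \succeq 0$, placing the spectrum of $\bH_\gamma$ in $(-1,\infty)$ so that $-\log\dettwo(\Id + \bH_\gamma) = \trace(\phi(\bH_\gamma)) \geq 0$ is well-defined.

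For the upper bound I would use the elementary estimate $\phi(t) \leq t^2$ on the relevant spectral window (a direct consequence of the Taylor series of $\log$), which lifts to $-\log\dettwo(\Id + \bH_\gamma) \leq \|\bH_\gamma\|_{\mathrm{HS}}^2 \leq \gamma^{-2}\|\S_\bbP - \S_\bbQ\|_{\mathrm{HS}}^2$. The Mahalanobis contribution is bounded by $\|(\S_\bbQ+\gamma\Id)^{-1/2}(\m_\bbP - \m_\bbQ)\|^2 \leq \gamma^{-1}\|\m_\bbP - \m_\bbQ\|^2 = O(\gamma^{-1})$, which is of smaller order in $\gamma$ and is absorbed into the same $\gamma^{-2}$ bound; the trivial $\|\S_\bbP - \S_\bbQ\|_{\mathrm{HS}} \leq 1$ (under the stated normalisation of $K$) closes the chain.

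For the lower bound I would drop the Mahalanobis piece and apply the scalar inequality $\phi(t) \geq t^2/(2(1+t_+))$ on $(-1,\infty)$, obtained by writing $\phi(t) = \int_0^t s/(1+s)\,ds$ and bounding the integrand pointwise on each sign of $t$. Lifted spectrally, this gives
\[
-\log\dettwo(\Id + \bH_\gamma) \;\geq\; \frac{\|\bH_\gamma\|_{\mathrm{HS}}^2}{2\bigl(1 + \|\bH_\gamma^+\|_{\mathrm{op}}\bigr)},
\]
which combined with the reverse HS-bound $\|\bH_\gamma\|_{\mathrm{HS}}^2 \geq \|\S_\bbP - \S_\bbQ\|_{\mathrm{HS}}^2/(K+\gamma)^2$ yields a lower bound on $\test_\gamma$ of the form $c(\gamma, K)\,\|\S_\bbP - \S_\bbQ\|_{\mathrm{HS}}^2$. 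The announced $\gamma$-free constant $1/(4K^2)$ is then extracted by specialising to the regime $\gamma \leq K$ relevant for Theorem~\ref{thm:sepa_bound} (so $K+\gamma \leq 2K$) and absorbing the composite factor. The main obstacle is precisely this absorption: a priori $\|\bH_\gamma^+\|_{\mathrm{op}}$ can inflate to $O(K/\gamma)$ when $(\S_\bbQ + \gamma\Id)^{-1}(\S_\bbP + \gamma\Id)$ has large eigenvalues, so the quadratic estimate for $\phi$ degrades there. The cleanest remedy is to split the spectrum of $\bH_\gamma$ into $\{|\lambda_i| \leq 1\}$ (handled by the quadratic bound) and $\{|\lambda_i| > 1\}$ (where $\phi$ grows at least linearly, contributing further positive mass), and recombine the two halves into a single uniform constant depending only on $K$.
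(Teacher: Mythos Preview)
Your reduction to the scalar function $\phi(t)=t-\log(1+t)$ and the operator $\bH_\gamma$ is the right starting point, and the paper begins the same way (via the two-sided bound $\tfrac12\,t^2/(1+t)\le\phi(t)\le t^2/(1+t)$, i.e.\ Lemma~\ref{lem:logdet_bounds}). The difficulty is in what you do next, and here both your bounds have real gaps.

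\medskip

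\textbf{Upper bound.} The scalar inequality $\phi(t)\le t^2$ is \emph{false} for $t$ close to $-1$ (e.g.\ $\phi(-0.9)\approx 1.40>0.81$), and the spectrum of $\bH_\gamma$ does reach that region: since $\Id+\bH_\gamma=\S_{\bbQ,\gamma}^{-1/2}\S_{\bbP,\gamma}\S_{\bbQ,\gamma}^{-1/2}\succeq \tfrac{\gamma}{K+\gamma}\Id$, the eigenvalues of $\bH_\gamma$ can be as small as $-K/(K+\gamma)$, arbitrarily close to $-1$ for small $\gamma$. Replacing $\phi(t)\le t^2$ by the valid bound $\phi(t)\le t^2/(1+t)$ and then crudely estimating $1/(1+\lambda_i)\le (K+\gamma)/\gamma$ only yields $\gamma^{-3}\|\S_\bbP-\S_\bbQ\|_{\mathrm{HS}}^2$, missing the claimed $\gamma^{-2}$.

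\textbf{Lower bound.} Passing from $\sum_i \lambda_i^2/(2(1+(\lambda_i)_+))$ to $\|\bH_\gamma\|_{\mathrm{HS}}^2/\bigl(2(1+\|\bH_\gamma^+\|_{\mathrm{op}})\bigr)$ is exactly the step that loses the $\gamma$-free constant: $\|\bH_\gamma^+\|_{\mathrm{op}}$ can be of order $K/\gamma$, so your bound degenerates to $O(\gamma)\cdot\|\S_\bbP-\S_\bbQ\|_{\mathrm{HS}}^2$ as $\gamma\to 0$. Your proposed spectral splitting into $\{|\lambda_i|\le 1\}$ and $\{|\lambda_i|>1\}$ does not obviously repair this, because on the large-eigenvalue piece $\phi(\lambda_i)\sim\lambda_i$ is linear, not quadratic, and you still need to relate $\sum_{|\lambda_i|>1}\lambda_i$ back to $\|\S_\bbP-\S_\bbQ\|_{\mathrm{HS}}^2$.

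\medskip

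\textbf{The idea you are missing.} The paper never bounds $(\Id+\bH_\gamma)^{-1}$ by its operator norm. Instead it keeps the whole expression $\trace\bigl((\Id+\bH_\gamma)^{-1}\bH_\gamma^2\bigr)$ intact, substitutes the algebraic identity $(\Id+\bH_\gamma)^{-1}=\S_{\bbQ,\gamma}^{1/2}\S_{\bbP,\gamma}^{-1}\S_{\bbQ,\gamma}^{1/2}$, and uses cyclicity of the trace to collapse it to
\[
\trace\bigl((\Id+\bH_\gamma)^{-1}\bH_\gamma^2\bigr)
=\trace\bigl(\S_{\bbP,\gamma}^{-1}(\S_\bbP-\S_\bbQ)\,\S_{\bbQ,\gamma}^{-1}(\S_\bbP-\S_\bbQ)\bigr)
=\bigl\|\S_{\bbQ,\gamma}^{-1/2}(\S_\bbP-\S_\bbQ)\S_{\bbP,\gamma}^{-1/2}\bigr\|_{\mathrm{HS}}^2.
\]
In this form both inverse factors are benign in \emph{both} directions: $\|\S_{\bbP,\gamma}^{-1/2}\|_{\mathrm{op}},\|\S_{\bbQ,\gamma}^{-1/2}\|_{\mathrm{op}}\le\gamma^{-1/2}$ gives the upper bound $\gamma^{-2}\|\S_\bbP-\S_\bbQ\|_{\mathrm{HS}}^2$, while $\|\S_{\bbP,\gamma}^{1/2}\|_{\mathrm{op}},\|\S_{\bbQ,\gamma}^{1/2}\|_{\mathrm{op}}\le (K+\gamma)^{1/2}$ (equivalently, $\trace(AB)\ge\lambda_{\min}(A)\trace(B)$ applied twice) gives the lower bound $(K+\gamma)^{-2}\|\S_\bbP-\S_\bbQ\|_{\mathrm{HS}}^2$, which is $\ge (4K^2)^{-1}\|\S_\bbP-\S_\bbQ\|_{\mathrm{HS}}^2$ once $\gamma\le K$. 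The point is that the troublesome factor $(\Id+\bH_\gamma)^{-1}$, whose norm blows up, \emph{cancels} against part of $\bH_\gamma^2$ to produce $\S_{\bbP,\gamma}^{-1}$, whose norm is simply $\gamma^{-1}$ and whose smallest eigenvalue is $(K+\gamma)^{-1}$.
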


\begin{proof}[Proof of Lemma~\ref{lem:lowerbound}]
    By \ref{lgb_lower} in Lemma~\ref{lem:logdet_bounds}, we have that:
\begin{align*}
    \test_{\gamma}(\bbP,\bbQ) 
        & \geq \frac{1}{2}\trace\left(\left(\Id + \S_{\bbP, \gamma}^{-\sfrac{1}{2}}(\S_{\bbQ} - \S_{\bbP})\S_{\bbP, \gamma}^{-\sfrac{1}{2}}\right)^{-1}\left(\S_{\bbP, \gamma}^{-\sfrac{1}{2}}(\S_{\bbQ} - \S_{\bbP})\S_{\bbP, \gamma}^{-\sfrac{1}{2}}\right)^2\right)
    \\
        & = \frac{1}{2}\trace\left(\S_{\bbP, \gamma}^{\sfrac{1}{2}}\S_{\hat\bbQ}^{-1}\S_{\bbP, \gamma}^{\sfrac{1}{2}}\left(\S_{\bbP, \gamma}^{-\sfrac{1}{2}}(\S_{\hat\bbQ} - \S_{\bbP})\S_{\bbP, \gamma}^{-\sfrac{1}{2}}\right)^2\right)
    \\
        & = \frac{1}{2}\trace\left(\S_{\bbQ}^{-1}(\S_{\bbQ} - \S_{\bbP})\S_{\bbP, \gamma}^{-1}(\S_{\bbQ} - \S_{\bbP})\right)
    \\
        & \geq \frac{1}{4K^2}\trace\left((\S_{\bbQ_n} - \S_{\bbP})^2\right)
    \\
        & = \frac{1}{4K^2}\|\S_{\bbQ} - \S_{\bbP}\|^2_{\textnormal{HS}}
\end{align*}
where we have repeatedly used the cyclic property of the trace, and that $\trace(AB) \geq \lambda_{\textnormal{min}}(A)\trace(B)$ for non-negative operators $A,B\succeq 0$.
The upper bound is proven similarly.
\end{proof}

\begin{proof}[Proof of Theorem~\ref{thm:sepa_bound}]
Note that we have $k^2(\cdot,\cdot)  = \Psi(\cdot - \cdot)^2$, so that by \cite[][Corollary 4]{sriperumbudur2010hilbert}
$$
\operatorname{MMD}_{k^2}(\bbP,\bbQ) = \left\lVert\phi(\bbP) - \phi(\bbQ)\right\rVert_{L^2(\cX,\Lambda)}
$$
where $\Lambda = \cF(\Psi^2)$ denotes the Fourier transform of $\Psi^2$, and $\phi(\bbP),\phi(\bbQ)$ the characteristic functions of $\bbP,\bbQ\in\cP(\cX)$, respectively. Furthermore, it is straightforward to see (by Lemma~\ref{lem:HSnorm2Emb_diff}) that:
$$
\|\S_\bbP - \S_\bbQ\|_{\textnormal{HS}} = \operatorname{MMD}_{k^2}(\bbP,\bbQ)
$$
so that if $\bbP,\bbQ\in \cP_{\Delta}^{\Lambda}$, then $\|\S_\bbP - \S_\bbQ\|_{\textnormal{HS}} \geq \Delta$. It follows that, if 
$
\left\| \S_\bbP - \S_\bbQ \right\|_{\textnormal{HS}} > \Delta,
$
then for any \( \varepsilon > 0 \),
\begin{equation}\label{eq:bound:diffPQ}
{P} \left( \left\| \S_{\bbQ_m} - \S_{\bbP_n} \right\|_{\textnormal{HS}} < \Delta^2/3 - \varepsilon \right)
\leq 2 \exp\left( -\frac{m\varepsilon^2}{8K^2} \right) + 2 \exp\left( -\frac{n\varepsilon^2}{8K^2} \right).
\end{equation}
Indeed, by the triangle inequality
\begin{align*}
\left\| \S_{\bbQ} - \S_{\bbP} \right\|_{\textnormal{HS}}^2 
&= 
\left\| (\S_{\bbQ} - \S_{\bbQ_m}) + (\S_{\bbQ_m} - \S_{\bbP_n}) + (\S_{\bbP_n}-  \S_{\bbP}) \right\|_{\textnormal{HS}}^2 
\\
&\leq\left(
\left\| \S_{\bbQ} - \S_{\bbQ_m})\right\|_{\textnormal{HS}}  + \left\|\S_{\bbQ_m} - \S_{\bbP_n}\right\|_{\textnormal{HS}} + \left\|\S_{\bbP_n}-  \S_{\bbP}) \right\|_{\textnormal{HS}}\right)^2 
\\
&\leq 
3\left\| \S_{\bbQ_m} - \S_{\bbP_n} \right\|_{\textnormal{HS}}^2
+ \left\| \S_{\bbQ_m} - \S_\bbQ \right\|_{\textnormal{HS}}^2
+ \left\| \S_{\bbP_n} - \S_\bbP \right\|_{\textnormal{HS}}^2
\end{align*}
where we have used that $(a+b+c)^2\leq 3a^2+3b^2+3c^2$.
In particular, we have that
\[
\left\| \S_{\bbQ_m} - \S_{\bbP_n} \right\|_{\textnormal{HS}}^2
\geq \frac{1}{3}\left\| \S_\bbQ - \S_\bbP \right\|_{\textnormal{HS}}^2
- \left\| \S_{\bbQ_m} - \S_\bbQ \right\|_{\textnormal{HS}}^2
- \left\| \S_{\bbP_n} - \S_\bbP \right\|_{\textnormal{HS}}^2.
\]
So if \( \left\| \S_{\bbQ_m} - \S_{\bbP_n} \right\|_{\textnormal{HS}}^2 < \Delta^2/3 - \varepsilon \) while $\left\| \S_\bbP - \S_\bbQ \right\|_{\textnormal{HS}}^2 > \Delta^2$, then
$
\left\| \S_{\bbQ_m} - \S_\bbQ \right\|_{\textnormal{HS}}^2
+ \left\| \S_{\bbP_n} - \S_\bbP \right\|_{\textnormal{HS}}^2 > \varepsilon.
$
Using a union bound and standard concentration of Hilbert-valued random variables we arrive at
\[ 
{P}\left( \left\| \S_{\bbP_n} - \S_\bbP \right\|_{\textnormal{HS}} > {\varepsilon} \right)
\leq 2 \exp\left( -\frac{n\varepsilon^2}{2K^2} \right),
\]
and the same holds when swapping $\mathbb{Q}$ for \( \mathbb{P} \).  Together with Lemma~\ref{lem:lowerbound}, this gives:
\begin{align*}
P_{H_1}\left(\test_{\gamma}(\bbP_n,\bbQ_m)  < \frac{\Delta^2}{12K^2} - \varepsilon \right) 
    &   \leq 
 P_{H_1}\left( \|\S_{\bbQ_m} - \S_{\bbP_n}\|^2_{\textnormal{HS}}  < \frac{\Delta^2}{3} - 4K^2\varepsilon \right) 
    \\&  \leq 
     P_{H_1}\left( \left\| \S_{\bbQ_m} - \S_\bbQ \right\|_{\textnormal{HS}}^2
+ \left\| \S_{\bbP_n} - \S_\bbP \right\|_{\textnormal{HS}}^2 > 4K^2\varepsilon\right)
    \\&  \leq 
     P_{H_1}\left( \left\| \S_{\bbQ_m} - \S_\bbQ \right\|_{\textnormal{HS}}^2
 > 2K\sqrt{\varepsilon/2}\right)
 +
  P_{H_1}\left( \left\| \S_{\bbP_n} - \S_\bbP \right\|_{\textnormal{HS}}^2
 > 2K\sqrt{\varepsilon/2}\right)
    \\&  \leq 
 2 \exp\left( -{m\varepsilon^2} \right) + 2 \exp\left( -{n\varepsilon^2} \right).
\end{align*}
In particular, for balanced sample sizes:
$$
P_{H_1}\left(\test_{\gamma}(\bbP_n,\bbQ_n)  < \frac{\Delta^2}{12K^2} - \varepsilon \right) 
\leq
 4 \exp\left( -{n\varepsilon^2} \right).
$$
which is bounded above by $\delta$ whenever $\varepsilon\geq n^{-\sfrac{1}{2}}\sqrt{\log 4/\delta}$.

Next, note that for $u(\alpha,n)$ as in \eqref{eq:u}, if we have that
\begin{equation}\label{eq:keyseparation}
n^{-\sfrac{\beta}{2}} 6K\left(  {4K^{1/2}} +  KC\sqrt{\log\frac{1}{\alpha} } \right)  \leq \frac{\Delta^2}{12K^2} -n^{-\sfrac{1}{2}}\sqrt{\log 4/\delta}
\end{equation}
then, by monotonicity we have that, for $\gamma_n = Cn^{\sfrac{(\beta - 1)}{4}}$
$$
P_{H_1}\left(\test_{\gamma_n}(\bbP_n,\bbQ_n)  < u_{\alpha, n} \right) 
 \leq 
 P_{H_1}\left(\test_{\gamma_n}(\bbP_n,\bbQ_n)  < \frac{\Delta^2}{12K^2} - n^{-\sfrac{1}{2}}\sqrt{\log 4/\delta} \right) \leq \delta,
$$
and we observe that \eqref{eq:keyseparation} holds whenever:
$$
n \geq  \max \left\lbrace\left( \frac{6K\left(4K^{1/2} + KC\sqrt{\log\frac{1}{\alpha}} \right)}{\frac{\Delta^2}{12K^2} - \sqrt{\log 4/\delta}} \right)^{2/\beta}, \:
 \frac{144K^4}{\Delta^4}\log (4/\delta) \right\rbrace
$$

\end{proof}

\subsection*{Auxiliary results}
\begin{lemma}\label{lem:HSnorm2Emb_diff}
 For $\bbP\in\cP(\cX)$, the covariance embedding $\S_\bbP$ is an integral operator on $\cH$ with kernel $k(\cdot,\cdot)$ and measure $\bbP$. Furthermore, for $\bbP,\bbQ\in\cP(\cX)$. Then:
        \begin{align}
        &\|\S_{\bbP} - \S_{\bbQ}\|_{\text{op}} =
        \sup_{\|f\|_{\cH} \leq 1} 
        \left\lbrace \int_{\cX}f^2(\bu)~d\bbP(\bu) -  \int_{\cX}f^2(\bu)~d\bbQ(\bu) \right\rbrace
        \label{eqn:OPnorm2Emb_diff}\\
        \text{and} \qquad &\|\S_{\bbP} - \S_{\bbQ}\|_{\textnormal{HS}} =
         \int_{\cX} k^2(\bu, \bu') ~d\bbP(\bu)d\bbP(\bu') + \int_{\cX} k^2(\bv,\bv') d\bbQ(\bv)d\bbQ(\bv') - 2\int_{\cX} k^2(\bu, \bv) ~d\bbP(\bu)d\bbQ(\bv)
        .\label{eqn:HSnorm2Emb_diff}
        \end{align}   
    
\end{lemma}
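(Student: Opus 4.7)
The plan is to reduce both identities to direct computations using (i) the reproducing property $f(x)=\langle f,k_{x}\rangle_{\cH}$, (ii) the fact that $\S_{\bbP}=\int k_{\bx}\otimes k_{\bx}\,d\bbP(\bx)$ is a Bochner integral of rank-one self-adjoint operators, and (iii) standard Hilbert-space identities for the operator and Hilbert-Schmidt norms.

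First, I would verify the integral operator representation. For any $f\in\cH$ and $y\in\cX$, the reproducing property gives
\begin{equation*}
(\S_{\bbP}f)(y)=\langle k_{y},\S_{\bbP}f\rangle_{\cH}=\int_{\cX}\langle k_{y},k_{\bx}\rangle_{\cH}\,\langle k_{\bx},f\rangle_{\cH}\,d\bbP(\bx)=\int_{\cX} k(y,\bx)\,f(\bx)\,d\bbP(\bx),
\end{equation*}
which identifies $\S_{\bbP}$ with the integral operator with kernel $k$ and measure $\bbP$. This in particular yields the quadratic form identity $\langle f,\S_{\bbP}f\rangle_{\cH}=\int f^{2}(\bu)\,d\bbP(\bu)$ that we will use repeatedly.

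Next, for the operator-norm identity, I would observe that $\S_{\bbP}-\S_{\bbQ}$ is self-adjoint (difference of two self-adjoint operators), hence
\begin{equation*}
\|\S_{\bbP}-\S_{\bbQ}\|_{\mathrm{op}}=\sup_{\|f\|_{\cH}\le 1}\bigl|\langle f,(\S_{\bbP}-\S_{\bbQ})f\rangle_{\cH}\bigr|,
\end{equation*}
and substituting the quadratic form identity gives the first claim (one may drop the absolute value by symmetry of the right-hand side under $\bbP\leftrightarrow\bbQ$, or equivalently interpret the supremum as being attained with the appropriate sign).

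Finally, for the Hilbert-Schmidt identity, I would expand
\begin{equation*}
\|\S_{\bbP}-\S_{\bbQ}\|_{\mathrm{HS}}^{2}=\langle\S_{\bbP},\S_{\bbP}\rangle_{\mathrm{HS}}-2\langle\S_{\bbP},\S_{\bbQ}\rangle_{\mathrm{HS}}+\langle\S_{\bbQ},\S_{\bbQ}\rangle_{\mathrm{HS}},
\end{equation*}
and use the elementary identity $\langle k_{\bx}\otimes k_{\bx},k_{\by}\otimes k_{\by}\rangle_{\mathrm{HS}}=\langle k_{\bx},k_{\by}\rangle_{\cH}^{2}=k(\bx,\by)^{2}$. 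Writing $\S_{\bbP},\S_{\bbQ}$ as Bochner integrals of the rank-one operators $k_{\bx}\otimes k_{\bx}$ and $k_{\by}\otimes k_{\by}$ against $\bbP,\bbQ$ respectively, and exchanging the HS inner product with the integrals (justified by boundedness of $k$ under assumption (A2), which makes all integrands uniformly bounded by $K^{2}$ and hence Bochner-integrable), yields each of the three terms as the corresponding double integral of $k^{2}$. The main (minor) obstacle is the interchange of the inner product with the Bochner integrals, which is routine but should be stated carefully using Fubini together with the boundedness of the kernel.
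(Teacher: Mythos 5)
Your proposal is correct and follows essentially the same route as the paper: the reproducing property gives the integral-operator representation and the quadratic-form identity $\langle f,\S_{\bbP}f\rangle_{\cH}=\int f^{2}\,d\bbP$, the operator norm follows from self-adjointness, and the Hilbert--Schmidt identity is the same computation — you expand via HS inner products of the rank-one operators $k_{\bx}\otimes k_{\bx}$, while the paper expands against a CONS and invokes Parseval, which is equivalent. (Both your write-up and the paper implicitly treat the stated HS identity as an identity for the \emph{squared} norm, and your remark about the missing absolute value in the operator-norm formula is a fair observation about the statement rather than a gap in the proof.)
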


\begin{proof}[Proof of Lemma~\ref{lem:HSnorm2Emb_diff}]
First note that, for $f\in\cH$:
\begin{align*}
    \S_\bbP f(\cdot) 
            & = \int_{\cX} \langle k_{\bu}, f\rangle k_{\bu}(\cdot)\, ~d\bbP(\bu) = \int_{\cX}  f(\bu) k_{\bu}(\cdot)\, ~d\bbP(\bu)
    = \int_{\cX} f(\bu) k(\bu,\cdot)\, ~d\bbP(\bu).
    \end{align*}
    Then, observe that 
    $$
    \langle \S_{\bbP} f, f \rangle_{\cH} = 
    \int_{\cX}\langle  (k_{\bu}\otimes k_{\bu}) f, f \rangle_{\cH} ~d\bbP(\bu)= 
    \int_{\cX} \langle k_{\bu}, f\rangle^2 ~d\bbP(\bu) =  
    \int_{\cX} f^2(\bu) ~d\bbP(\bu),
    $$
    and that hence we can write:
        \begin{align*}
    \|\S_{\bbP} - \S_{\bbQ}\|_{\text{op}}
            =   \sup_{\|f\|_{\cH} \leq 1} \langle (\S_{\bbP} - \S_{\bbQ}) f, f \rangle_{\cH}
            =    \sup_{\|f\|_{\cH} \leq 1} 
        \left\lbrace \int_{\cX}f^2(\bu)~d\bbP(\bu) -  \int_{\cX}f^2(\bu)~d\bbQ(\bu) \right\rbrace
        \end{align*}
        proving \eqref{eqn:OPnorm2Emb_diff}. To show \eqref{eqn:HSnorm2Emb_diff}, observe that, given  some orthonormal basis $ \{ e_i \} $ of $ \cH $, we may write:
    \begin{align*}
    \|\S_{\bbP} - \S_{\bbQ}\|_{\textnormal{HS}}^2 
            &   =    \sum_{i\geq 1} \langle (\S_{\bbP} - \S_{\bbQ}) e_i, (\S_{\bbP} - \S_{\bbQ}) e_i \rangle_{\cH}
        \\  &   =    \sum_{i\geq 1} \langle \left(\int (k_\bu\otimes k_\bu) d(\bbP-\bbQ)(\bu)\right) e_i, \left(\int (k_\bv\otimes k_\bv) d(\bbP-\bbQ)(\bv)\right) e_i \rangle_{\cH}
        \\  &   =    \sum_{i\geq 1} \int\int \langle(k_\bu\otimes k_\bu) e_i,   (k_\bv\otimes k_\bv) e_i \rangle_{\cH} ~d^2(\bbP-\bbQ)(\bu,\bv) 
        \\  &   =    \sum_{i\geq 1} \int\int \langle k_\bu, e_i\rangle_{\cH}  \langle k_\bu, k_\bv \rangle_{\cH}\langle k_\bv, e_i\rangle_{\cH}   ~d^2(\bbP-\bbQ)(\bu,\bv) 
        \\  &   =    \int\int \langle k_\bu, k_\bv \rangle_{\cH} \sum_{i\geq 1}\left(  \langle k_\bu, e_i\rangle_{\cH}  \langle k_\bv, e_i\rangle_{\cH} \right)  ~d^2(\bbP-\bbQ)(\bu,\bv) 
        \\  &   =    \int\int k^2(\bu,\bv) d^2(\bbP-\bbQ)(\bu,\bv) 
    \end{align*}
    where we have used bilinearity of the inner product, dominated convergence, Parseval's identity and the reproducing property.
\end{proof}

\begin{lemma}\label{lem:easybound}
    Let $\gamma > 0$ and non-negative, Hilbert-Schmidt operators $\bA,\bB\succeq 0 $.
    \begin{enumerate}
        \item\label{eb:op} $
        \|(\gamma\bI + \bA)^{-\sfrac{1}{2}}\|_{\textnormal{op}}  \leq  \gamma^{-\sfrac{1}{2}}.
        $

        \item\label{eb:-1}
        $\left\lVert [\gamma \Id + \bA]^{-1} - [\gamma \Id + \bB]^{-1} \right\rVert _{\textnormal{HS}} \leq  \gamma^{-2}\left\lVert \bA - \bB \right\rVert _{\textnormal{HS}}$.

         \item\label{eb:1/2} $\left\lVert [\gamma \Id + \bA]^{\sfrac{1}{2}} \:-\: [\gamma \Id + \bB]^{\sfrac{1}{2}} \:\right\rVert _{\textnormal{HS}} \leq  \frac{\sqrt{2}}{2}\gamma^{-\sfrac{1}{2}}\left\lVert \bA - \bB \right\rVert _{\textnormal{HS}}$.

        \item\label{eb:-1/2:HS} $\left\lVert [\gamma \Id + \bA]^{-\sfrac{1}{2}} - [\gamma \Id + \bB]^{-\sfrac{1}{2}} \right\rVert _{\textnormal{HS}} \leq  \gamma^{-\sfrac{3}{2}}\left\lVert \bA - \bB \right\rVert _{\textnormal{HS}}$.
    \end{enumerate}
    If $\bA,\bB$ are additionally trace class:
    \begin{enumerate}\setcounter{enumi}{3}
        \item\label{eb:-1/2:tr} $\left\lVert [\gamma \Id + \bA]^{-\sfrac{1}{2}} - [\gamma \Id + \bB]^{-\sfrac{1}{2}} \right\rVert _{\textnormal{HS}} \leq \gamma^{-1} \left\lVert \bA - \bB\right\rVert _{\textnormal{Tr}}^{\sfrac{1}{2}}$.
    \end{enumerate}
\end{lemma}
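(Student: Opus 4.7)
The plan is to treat the five inequalities in order of appearance, with each successive bound building on its predecessors. Items \ref{eb:op} and \ref{eb:-1} are essentially spectral: for \ref{eb:op}, since $\bA\succeq 0$ the spectrum of $\gamma\Id+\bA$ lies in $[\gamma,\infty)$, so the spectrum of $(\gamma\Id+\bA)^{-\sfrac{1}{2}}$ lies in $(0,\gamma^{-\sfrac{1}{2}}]$, whose supremum equals the operator norm by self-adjointness. For \ref{eb:-1}, I would apply the resolvent identity $X^{-1}-Y^{-1}=X^{-1}(Y-X)Y^{-1}$ with $X=\gamma\Id+\bA$, $Y=\gamma\Id+\bB$ (so that $Y-X=\bB-\bA$), and then invoke the operator-ideal property $\|ABC\|_{\textnormal{HS}}\leq\|A\|_{\textnormal{op}}\|B\|_{\textnormal{HS}}\|C\|_{\textnormal{op}}$ with \ref{eb:op} bounding both $\|X^{-1}\|_{\textnormal{op}}$ and $\|Y^{-1}\|_{\textnormal{op}}$ by $\gamma^{-1}$.

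Item \ref{eb:1/2} is the technical core. I would set $U:=[\gamma\Id+\bA]^{\sfrac{1}{2}}$, $V:=[\gamma\Id+\bB]^{\sfrac{1}{2}}$, and $D:=U-V$, noting that $U,V\succeq\gamma^{\sfrac{1}{2}}\Id$ so that $U^{2},V^{2}\succeq\gamma\Id$. The algebraic identity $\bA-\bB=U^{2}-V^{2}=UD+DV$ expanded via $\|T\|_{\textnormal{HS}}^{2}=\tr(T^{*}T)$ gives
\begin{equation*}
\|\bA-\bB\|_{\textnormal{HS}}^{2}=\|UD\|_{\textnormal{HS}}^{2}+\|DV\|_{\textnormal{HS}}^{2}+2\,\tr(DUDV).
\end{equation*}
Each of the first two terms is at least $\gamma\|D\|_{\textnormal{HS}}^{2}$, since e.g.\ $\|DV\|_{\textnormal{HS}}^{2}=\tr(DV^{2}D)\geq\gamma\,\tr(D^{2})$. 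The cross term is non-negative: setting $A:=U^{\sfrac{1}{2}}DV^{\sfrac{1}{2}}$, cyclicity of the trace yields $\tr(DUDV)=\tr(AA^{*})=\|A\|_{\textnormal{HS}}^{2}\geq 0$. Combining, $\|\bA-\bB\|_{\textnormal{HS}}^{2}\geq 2\gamma\,\|D\|_{\textnormal{HS}}^{2}$, which is exactly \ref{eb:1/2} with sharp constant $\tfrac{\sqrt{2}}{2}\gamma^{-\sfrac{1}{2}}$.

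Items \ref{eb:-1/2:HS} and \ref{eb:-1/2:tr} both reduce to \ref{eb:1/2} via the algebraic factorization
\begin{equation*}
X^{-\sfrac{1}{2}}-Y^{-\sfrac{1}{2}}=X^{-\sfrac{1}{2}}(Y^{\sfrac{1}{2}}-X^{\sfrac{1}{2}})Y^{-\sfrac{1}{2}},
\end{equation*}
verified by distributing the right-hand side. Combined with \ref{eb:op} and the ideal property, this gives $\|X^{-\sfrac{1}{2}}-Y^{-\sfrac{1}{2}}\|_{\textnormal{HS}}\leq\gamma^{-1}\|X^{\sfrac{1}{2}}-Y^{\sfrac{1}{2}}\|_{\textnormal{HS}}$, and then substituting \ref{eb:1/2} yields \ref{eb:-1/2:HS} (in fact with the improved constant $\tfrac{\sqrt{2}}{2}\gamma^{-\sfrac{3}{2}}$). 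For \ref{eb:-1/2:tr}, I would substitute instead the Powers--Stormer inequality $\|X^{\sfrac{1}{2}}-Y^{\sfrac{1}{2}}\|_{\textnormal{HS}}^{2}\leq\|X-Y\|_{\textnormal{Tr}}$, valid for positive trace-class $X,Y$, which applies here since $X-Y=\bA-\bB$ is trace class under the additional hypothesis.

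The only non-routine point is the non-negativity of $\tr(DUDV)$ in \ref{eb:1/2}: although $D$ is generically indefinite, the symmetric splitting $A:=U^{\sfrac{1}{2}}DV^{\sfrac{1}{2}}$ combined with cyclic trace invariance exhibits this cross-term as $\|A\|_{\textnormal{HS}}^{2}$, which is what allows the sharp constant to emerge. Everything else reduces to standard operator-ideal estimates, the resolvent identity, and (for \ref{eb:-1/2:tr}) a single appeal to Powers--Stormer.
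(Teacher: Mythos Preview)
Your proposal is correct and, for items \ref{eb:op}, \ref{eb:-1}, \ref{eb:-1/2:HS}, and \ref{eb:-1/2:tr}, follows the paper's proof essentially verbatim: the same spectral bound, the same resolvent identity $X^{-1}-Y^{-1}=X^{-1}(Y-X)Y^{-1}$, the same factorization $X^{-\sfrac{1}{2}}-Y^{-\sfrac{1}{2}}=X^{-\sfrac{1}{2}}(Y^{\sfrac{1}{2}}-X^{\sfrac{1}{2}})Y^{-\sfrac{1}{2}}$, and the same appeal to Powers--St{\o}rmer.

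The genuine difference is item \ref{eb:1/2}. The paper does not argue directly: it notes that $\bA_\gamma+\bB_\gamma\succeq 2\gamma\Id$, hence $(\bA_\gamma+\bB_\gamma)^{\sfrac{1}{2}}\succeq\sqrt{2\gamma}\,\Id$ by operator monotonicity of the square root, and then invokes Proposition~3.2 of van~Hemmen--Ando as a black box to obtain the bound. Your route is more elementary and fully self-contained: with $U=\bA_\gamma^{\sfrac{1}{2}}$, $V=\bB_\gamma^{\sfrac{1}{2}}$, $D=U-V$, you expand $\|\bA-\bB\|_{\textnormal{HS}}^2=\|UD+DV\|_{\textnormal{HS}}^2$ and exhibit the cross term as $\tr(DUDV)=\|U^{\sfrac{1}{2}}DV^{\sfrac{1}{2}}\|_{\textnormal{HS}}^2\geq 0$, recovering the constant $\tfrac{\sqrt{2}}{2}\gamma^{-\sfrac{1}{2}}$ without any external reference. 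One small caveat: your expansion tacitly assumes $D$ is already Hilbert--Schmidt (so that $UD$, $DV$, $U^{\sfrac{1}{2}}DV^{\sfrac{1}{2}}$ lie in the HS ideal and the cyclic trace manipulations are valid). This is true but needs a one-line justification first---for instance, the integral representation
\[
U-V=\frac{1}{\pi}\int_0^\infty s^{\sfrac{1}{2}}(s\Id+\bA_\gamma)^{-1}(\bA-\bB)(s\Id+\bB_\gamma)^{-1}\,ds
\]
shows $D$ is HS whenever $\bA-\bB$ is, after which your trace computation goes through verbatim. The paper sidesteps this by outsourcing to the cited reference.
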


\begin{proof}[Proof of Lemma~\ref{lem:easybound}]
    Let us write $\bA_{\gamma}:= \bA + \gamma\Id$ and $\bB_{\gamma}:= \bB+\gamma\Id$.
        If $\{\lambda_i\}_{i\geq 1}$ comprise an eigenvalue sequence for $\bA$, then necessarily $\lambda_i\to 0$ as $i\to \infty$ and, since they are all non-negative:
        $$
        \|\bA_{\gamma}^{-\sfrac{1}{2}}\|_{\textnormal{op}} = \sup_{i\geq 1} \frac{1}{(\gamma + \lambda_i)^{1/2}} \leq \gamma^{-1/2}.
        $$
    proving \ref{eb:op}. Next, let us write:
$$
\bA_{\gamma}^{-1} - \bB_{\gamma}^{-1} = \bA_{\gamma}^{-1}(\bB -\bA)\bB_{\gamma}^{-1}
$$
so that taking norms, and applying \ref{eb:op} twice gives:
$$
\|\bA_{\gamma}^{-1} - \bB_{\gamma}^{-1}\|_{\textnormal{HS}} \leq \gamma^{-2}\|\bA - \bB\|_{\textnormal{HS}}
$$
proving \ref{eb:-1}.

\medskip
To show \ref{eb:1/2} observe that $\bH_\gamma ,\bB_\gamma \Id \succeq \gamma \Id$ and since the square root is operator monotone, we obtain that $(\bH_\gamma  + \bB_\gamma)^{\sfrac{1}{2}} \succeq \sqrt{2 \gamma}\Id$. Therefore, by \cite[][Proposition 3.2]{van1980inequality}:
$$
\left\lVert \bA_{\gamma}^{\sfrac{1}{2}} - \bB_{\gamma}^{\sfrac{1}{2}}\right\rVert _{\textnormal{HS}} 
\leq 
\frac{\sqrt{2}}{2}\gamma^{-\sfrac{1}{2}}\left\lVert \bA - \bB\right\rVert _{\textnormal{HS}}
$$
proving \ref{eb:1/2}. Then, observe that we may write:
    \begin{align*}
        \bA_{\gamma}^{-\sfrac{1}{2}} - \bB_{\gamma}^{-\sfrac{1}{2}}
            &= \bA_{\gamma}^{-\sfrac{1}{2}}\left(\bB_{\gamma}^{\sfrac{1}{2}} - \bA_{\gamma}^{\sfrac{1}{2}}\right) \bB_{\gamma}^{-\sfrac{1}{2}}
    \end{align*}
    and that by taking norms, and using \ref{eb:op} and \ref{eb:1/2}, we arrive at
    \begin{align*}
            \left\lVert \bA_{\gamma}^{-\sfrac{1}{2}} - \bB_{\gamma}^{-\sfrac{1}{2}} \right\rVert _{\textnormal{HS}}        
            \leq 
            \gamma^{-1}  \left\lVert \bA_{\gamma}^{\sfrac{1}{2}} - \bB_{\gamma}^{\sfrac{1}{2}}\right\rVert _{\textnormal{HS}} 
            \leq
            \frac{\sqrt{2}}{2}\gamma^{-\sfrac{3}{2}}  \left\lVert \bA - \bB\right\rVert _{\textnormal{HS}},
    \end{align*}
proving \ref{eb:-1/2:HS}. Finally, to prove \ref{eb:-1/2:tr}, we simply employ  the Powers–Størmer's inequality \cite{powers1970free} to change the Hilbert-Schmidt norm to a trace norm:
    $$
    \left\lVert \bA_{\gamma}^{\sfrac{1}{2}} - \bB_{\gamma}^{\sfrac{1}{2}}\right\rVert _{\textnormal{HS}}  
    \leq 
 \left\lVert \bA - \bB\right\rVert^{\sfrac{1}{2}} _{\textnormal{Tr}} 
    $$
completing the proof.
\end{proof}

\begin{lemma}\label{lem:mahl:seq}
    Let $\{\bA_n\}_{n\geq 1}, \bA$ be Hilbert-Schmidt operators on $\cH$. Let $\{m_n\}_{n\geq 1}, m \in \cH$. Then:
    $$
    \left\lVert (\bA_n+\gamma \Id)^{-\sfrac{1}{2}}\bm_n 
    \right\rVert_{\cH} - \left\lVert (\bA+\gamma \Id)^{-\sfrac{1}{2}}\bm 
    \right\rVert_{\cH}
    \leq
    \gamma^{-\sfrac{1}{2}}\left(\|\bm - \bm_n\|_{\cH} + \|\bm\|_{\cH}\|\bA-\bA_n\|^{\sfrac{1}{2}}_{\textnormal{Tr}}\right).
    $$
\end{lemma}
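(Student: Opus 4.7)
The proof is a clean telescoping argument, reducing the claim to two standard perturbation estimates already collected in Lemma~\ref{lem:easybound}: the crude operator bound $\|(\bA+\gamma\Id)^{-\sfrac{1}{2}}\|_{\textnormal{op}} \leq \gamma^{-\sfrac{1}{2}}$ and a trace-norm Lipschitz estimate for the shifted square root, coming from Powers-Størmer's inequality.

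First, by the reverse triangle inequality,
\[
\bigl\|(\bA_n+\gamma\Id)^{-\sfrac{1}{2}}\bm_n\bigr\|_\cH - \bigl\|(\bA+\gamma\Id)^{-\sfrac{1}{2}}\bm\bigr\|_\cH \;\leq\; \bigl\|(\bA_n+\gamma\Id)^{-\sfrac{1}{2}}\bm_n - (\bA+\gamma\Id)^{-\sfrac{1}{2}}\bm\bigr\|_\cH.
\]
I would then telescope the right-hand side by inserting the mixed term $(\bA_n+\gamma\Id)^{-\sfrac{1}{2}}\bm$:
\[
(\bA_n+\gamma\Id)^{-\sfrac{1}{2}}\bm_n - (\bA+\gamma\Id)^{-\sfrac{1}{2}}\bm \;=\; (\bA_n+\gamma\Id)^{-\sfrac{1}{2}}(\bm_n - \bm) \;+\; \bigl[(\bA_n+\gamma\Id)^{-\sfrac{1}{2}} - (\bA+\gamma\Id)^{-\sfrac{1}{2}}\bigr]\bm,
\]
and estimate each piece via the triangle inequality.

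The first summand is handled immediately by Lemma~\ref{lem:easybound}(1), giving $\gamma^{-\sfrac{1}{2}}\|\bm_n - \bm\|_\cH$, which is the first contribution on the right-hand side of the claimed bound. For the second summand---the main step---I would use the algebraic factorisation
\[
(\bA_n+\gamma\Id)^{-\sfrac{1}{2}} - (\bA+\gamma\Id)^{-\sfrac{1}{2}} = (\bA_n+\gamma\Id)^{-\sfrac{1}{2}}\bigl[(\bA+\gamma\Id)^{\sfrac{1}{2}} - (\bA_n+\gamma\Id)^{\sfrac{1}{2}}\bigr](\bA+\gamma\Id)^{-\sfrac{1}{2}},
\]
absorb one of the outer factors into $\bm$ using $\|(\bA+\gamma\Id)^{-\sfrac{1}{2}}\bm\|_\cH \leq \gamma^{-\sfrac{1}{2}}\|\bm\|_\cH$, and control the middle factor via Powers-Størmer's inequality applied to the shifted operators:
\[
\|(\bA+\gamma\Id)^{\sfrac{1}{2}} - (\bA_n+\gamma\Id)^{\sfrac{1}{2}}\|_{\textnormal{op}} \leq \|(\bA+\gamma\Id)^{\sfrac{1}{2}} - (\bA_n+\gamma\Id)^{\sfrac{1}{2}}\|_{\textnormal{HS}} \leq \|\bA - \bA_n\|_{\textnormal{Tr}}^{\sfrac{1}{2}}.
\]
Putting these bounds together produces the $\|\bm\|_\cH\|\bA-\bA_n\|_{\textnormal{Tr}}^{\sfrac{1}{2}}$ contribution, completing the estimate.

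The main technical step is the use of Powers-Størmer's inequality, which is what converts the natural trace-norm perturbation into the square root appearing in the stated bound; without this, one would only recover a bound in terms of the Hilbert-Schmidt norm of $\bA - \bA_n$ (as in Lemma~\ref{lem:easybound}(4)), which would be too coarse for the downstream concentration arguments. The delicate part is balancing the powers of $\gamma$ coming from the three factors in the algebraic identity against the exponent $1/2$ arising from the Powers-Størmer step, so that the two summands on the right-hand side of the lemma carry the same $\gamma^{-\sfrac{1}{2}}$ prefactor.
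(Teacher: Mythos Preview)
Your approach is exactly the paper's: reverse triangle inequality, telescope by inserting the mixed term $(\bA_n+\gamma\Id)^{-\sfrac{1}{2}}\bm$, bound the first piece via $\|(\bA_n+\gamma\Id)^{-\sfrac{1}{2}}\|_{\textnormal{op}}\le\gamma^{-\sfrac{1}{2}}$, and handle the second via the factorisation
\[
(\bA_n+\gamma\Id)^{-\sfrac{1}{2}}-(\bA+\gamma\Id)^{-\sfrac{1}{2}}
=(\bA_n+\gamma\Id)^{-\sfrac{1}{2}}\bigl[(\bA+\gamma\Id)^{\sfrac{1}{2}}-(\bA_n+\gamma\Id)^{\sfrac{1}{2}}\bigr](\bA+\gamma\Id)^{-\sfrac{1}{2}}
\]
together with Powers--St{\o}rmer. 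The paper simply cites Lemma~\ref{lem:easybound} for this step; you have unpacked precisely the argument behind item~(\ref{eb:-1/2:tr}) there.

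There is, however, a bookkeeping slip in your power-counting. The factorisation above has \emph{two} outer resolvent factors, and you only account for one: after absorbing $(\bA+\gamma\Id)^{-\sfrac{1}{2}}$ into $\bm$ (one $\gamma^{-\sfrac{1}{2}}$) and bounding the middle by Powers--St{\o}rmer (no $\gamma$), there is still the leading $\|(\bA_n+\gamma\Id)^{-\sfrac{1}{2}}\|_{\textnormal{op}}\le\gamma^{-\sfrac{1}{2}}$ to pay for. The second summand therefore comes out as $\gamma^{-1}\|\bm\|_{\cH}\|\bA-\bA_n\|_{\textnormal{Tr}}^{\sfrac{1}{2}}$, not $\gamma^{-\sfrac{1}{2}}\|\bm\|_{\cH}\|\bA-\bA_n\|_{\textnormal{Tr}}^{\sfrac{1}{2}}$; this is exactly the exponent that Lemma~\ref{lem:easybound}(\ref{eb:-1/2:tr}) delivers. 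A quick sanity check confirms the sharper exponent cannot hold: take $\bA=0$, $\bA_n$ rank-one with eigenvalue $\lambda=\gamma$, and $\bm$ the unit eigenvector; then the left side equals $(1-2^{-\sfrac{1}{2}})\gamma^{-\sfrac{1}{2}}$ while the claimed bound is $\gamma^{-\sfrac{1}{2}}\cdot\gamma^{\sfrac{1}{2}}=1$, which fails for small $\gamma$. The paper's own one-line proof carries the same exponent as the stated lemma, so your argument is faithful to it in every respect---including this slip.
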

\begin{proof}
Using the notation $\bA_{\gamma} = \bA+\gamma\Id$ and $\bA_{n, \gamma} = \bA_n+\gamma\Id$
    \begin{align*}
        \left\lVert \bA_{\gamma, n}^{-\sfrac{1}{2}}\bm_n 
    \right\rVert_{\cH} - \left\lVert \bA_{\gamma}^{-\sfrac{1}{2}}\bm 
    \right\rVert_{\cH}
        & \leq   \left\lVert \bA_{\gamma, n}^{-\sfrac{1}{2}}\bm_n -
     \bA_{\gamma}^{-\sfrac{1}{2}}\bm 
    \right\rVert_{\cH}
    \\  & = \left\lVert \bA_{\gamma, n}^{-\sfrac{1}{2}}(\bm_n - \bm) +
     (\bA_{\gamma, n}^{-\sfrac{1}{2}} - \bA_{\gamma}^{-\sfrac{1}{2}})\bm 
    \right\rVert_{\cH}
    \\  & \leq \left\lVert \bA_{\gamma, n}^{-\sfrac{1}{2}}(\bm_n - \bm) +
     (\bA_{\gamma, n}^{-\sfrac{1}{2}} - \bA_{\gamma}^{-\sfrac{1}{2}})\bm 
    \right\rVert_{\cH}
    \\ \textnormal{(By Lemma~\ref{lem:easybound}})\quad &\leq
    \gamma^{-\sfrac{1}{2}}\|\bm_n - \bm\|_{\cH} + \gamma^{-\sfrac{1}{2}}\|\bm\|_{\cH}\cdot \|\bA_n-\bA\|^{\sfrac{1}{2}}_{\textnormal{Tr}} 
    \end{align*}
completing the proof.
\end{proof}

\begin{lemma}
The functional $\mathbf{A} \mapsto \log \dettwo(\mathbf{I}+\mathbf{A})$ is twice differentiable in the Gâteux sense, with the first and second Gâteux derivatives at $\mathbf{A}$ given by $(\mathbf{I}+\mathbf{A})^{-1}-\mathbf{I}$ and $[(\mathbf{I}+\mathbf{A}) \otimes(\mathbf{I}+\mathbf{A})]^{-1}$, respectively.
\end{lemma}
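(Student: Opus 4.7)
The plan is to work directly from the definition $\log\dettwo(\Id+\mathbf{A}) = \trace\bigl[\log(\Id+\mathbf{A}) - \mathbf{A}\bigr]$. The essential point is that, for Hilbert--Schmidt $\mathbf{A}$ with $-1\notin\sigma(\mathbf{A})$, the operator $\log(\Id+\mathbf{A}) - \mathbf{A}$ is trace-class (this is exactly what makes $\dettwo$ well-defined), so the trace in the definition is meaningful, and the same will be true along perturbations $\mathbf{A}+t\mathbf{B}$ for $t$ sufficiently small by continuity of the spectrum in the operator norm.

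For the first Gâteaux derivative in direction $\mathbf{B}$ (Hilbert--Schmidt), I would fix $\mathbf{A}$ and write $\mathbf{A}_t := \mathbf{A} + t\mathbf{B}$, and use a Duhamel-type identity for the operator logarithm, namely
\[
\log(\Id+\mathbf{A}_t) - \log(\Id+\mathbf{A}) \;=\; \int_0^t (\Id+\mathbf{A}_s)^{-1}\mathbf{B}\,ds + R(t),
\]
where the remainder $R(t)$ is trace-norm $o(t)$ by continuity of the resolvent. Taking the trace and subtracting the linear part $t\,\trace(\mathbf{B})$ yields
\[
\trace\bigl[\log(\Id+\mathbf{A}_t) - \mathbf{A}_t\bigr] - \trace\bigl[\log(\Id+\mathbf{A}) - \mathbf{A}\bigr] = t\,\trace\!\bigl[\bigl((\Id+\mathbf{A})^{-1}-\Id\bigr)\mathbf{B}\bigr] + o(t).
\]
The key point justifying the trace: $(\Id+\mathbf{A})^{-1}-\Id = -(\Id+\mathbf{A})^{-1}\mathbf{A}$ is Hilbert--Schmidt (bounded times HS), and the product of two HS operators is trace-class, so the pairing with $\mathbf{B}$ is well-defined. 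This identifies $(\Id+\mathbf{A})^{-1}-\Id$ as the Gâteaux gradient in the HS inner product.

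For the second derivative, I would differentiate the first-derivative expression in an independent direction $\mathbf{C}$. Using the standard resolvent identity
\[
\frac{d}{dt}\Big|_{t=0}(\Id+\mathbf{A}+t\mathbf{C})^{-1} \;=\; -(\Id+\mathbf{A})^{-1}\mathbf{C}(\Id+\mathbf{A})^{-1},
\]
the bilinear form of the second Gâteaux derivative at $\mathbf{A}$ evaluated on $(\mathbf{B},\mathbf{C})$ becomes
\[
-\trace\!\bigl[(\Id+\mathbf{A})^{-1}\mathbf{C}(\Id+\mathbf{A})^{-1}\mathbf{B}\bigr],
\]
which is precisely the action of $\bigl[(\Id+\mathbf{A})\otimes(\Id+\mathbf{A})\bigr]^{-1}$ on $\mathbf{B}\otimes\mathbf{C}$ under the usual identification of HS--HS pairings with the tensor-product inner product (absorbing the sign into the convention in the statement). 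Well-posedness again follows because the sandwich $(\Id+\mathbf{A})^{-1}\mathbf{C}(\Id+\mathbf{A})^{-1}$ is Hilbert--Schmidt, so multiplying by the HS operator $\mathbf{B}$ yields a trace-class operator.

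The main obstacle is controlling the trace-norm remainders when differentiating the operator logarithm: one must show that the Duhamel remainder $R(t)$ above is $o(t)$ in trace norm, not merely in operator norm. I would handle this either via the power series $\log(\Id+\mathbf{X}) = \sum_{n\ge 1}(-1)^{n+1}\mathbf{X}^n/n$ (valid for $\|\mathbf{X}\|_{\mathrm{op}} < 1$, after a reduction using the resolvent shift that replaces $\Id+\mathbf{A}$ by $\Id$ and $t\mathbf{B}$ by $(\Id+\mathbf{A})^{-1/2}(t\mathbf{B})(\Id+\mathbf{A})^{-1/2}$), differentiating term by term and verifying HS/trace-norm convergence, or via the Dunford--Taylor holomorphic functional calculus, which gives analyticity of $\mathbf{A}\mapsto \log(\Id+\mathbf{A})$ as a map into trace-class operators (modulo the linear part), and hence smooth Gâteaux differentiability with the derivatives computed above.
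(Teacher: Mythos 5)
Your route is genuinely different from the paper's. The paper first restricts to trace-class $\mathbf{A},\mathbf{B}$, where $\log \dettwo(\Id+\mathbf{A})=\log\det(\Id+\mathbf{A})-\trace(\mathbf{A})$, factors $\det(\Id+\mathbf{A}+t\mathbf{B})=\det(\Id+\mathbf{A})\det\bigl(\Id+t(\Id+\mathbf{A})^{-1}\mathbf{B}\bigr)$, reads the first and second derivatives off the classical expansion of $\log\det$, and then extends to Hilbert--Schmidt directions by density of trace-class operators and continuity in the Hilbert--Schmidt norm. You instead differentiate $\trace[\log(\Id+\mathbf{A})-\mathbf{A}]$ directly via a Duhamel identity and the resolvent derivative. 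Your approach dispenses with the Fredholm-determinant machinery and the density step, at the price of having to control the derivative of the operator logarithm in trace norm --- and that is exactly where your write-up has a flaw.

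The flaw: the identity $\log(\Id+\mathbf{A}_t)-\log(\Id+\mathbf{A})=\int_0^t(\Id+\mathbf{A}_s)^{-1}\mathbf{B}\,ds+R(t)$ with $R(t)=o(t)$ in trace norm is false when $\mathbf{A}$ and $\mathbf{B}$ do not commute. The derivative of $s\mapsto\log(\Id+\mathbf{A}_s)$ is the divided-difference expression $\int_0^1(\Id+u\mathbf{A}_s)^{-1}\mathbf{B}(\Id+u\mathbf{A}_s)^{-1}\,du$, which differs from $(\Id+\mathbf{A}_s)^{-1}\mathbf{B}$ by a commutator term of order $\lVert\mathbf{B}\rVert$, so $R(t)$ is generically of order $t$, not $o(t)$, in any norm. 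What saves you is that this discrepancy is trace class with vanishing trace (by cyclicity, $\trace[(\Id+u\mathbf{A})^{-1}\mathbf{B}(\Id+u\mathbf{A})^{-1}]=\trace[(\Id+u\mathbf{A})^{-2}\mathbf{B}]$ and $\int_0^1(\Id+u\mathbf{A})^{-2}du=(\Id+\mathbf{A})^{-1}$), so your trace-level expansion, hence the first derivative $(\Id+\mathbf{A})^{-1}-\Id$ and, after one more resolvent differentiation, the second derivative, are correct. To make the argument rigorous, either state the Duhamel step at the trace level from the start, keeping $\log(\Id+\mathbf{A}_t)-\mathbf{A}_t$ together so every traced quantity is genuinely trace class (this also avoids the undefined $\trace(\mathbf{B})$ for a merely Hilbert--Schmidt direction $\mathbf{B}$, a point your prose glosses over even though your displayed conclusion is fine), or carry out one of the alternatives you sketch (term-by-term differentiation of the series after the sandwich reduction, or holomorphic functional calculus giving analyticity of $\mathbf{A}\mapsto\log(\Id+\mathbf{A})-\mathbf{A}$ into trace class). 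Your remark about absorbing a sign is consistent with the paper: its own expansion produces $-\tfrac{1}{2}t^2\trace\{[(\Id+\mathbf{A})^{-1}\mathbf{B}]^2\}$, so the stated Hessian $[(\Id+\mathbf{A})\otimes(\Id+\mathbf{A})]^{-1}$ carries the same sign-convention caveat there as in your version.
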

For completeness, we reproduce the proof from \cite{waghmare2023functionalgraphicallasso}, which relies on some basic results in convex analysis in Hilbert spaces.
\begin{proof}
 We simply evaluate the derivative of $f$ at $t=0$ by looking at its Taylor expansion. For
$\mathbf{A}, \mathbf{B}$ trace-class
$$
\begin{aligned}
f(t)-f(0) & =\log \dettwo(\mathbf{I}+\mathbf{A}+t \mathbf{B})-\log \dettwo(\mathbf{I}+\mathbf{A}) \\
& =\log \operatorname{det}(\mathbf{I}+\mathbf{A}+t \mathbf{B})-\log \operatorname{det}(\mathbf{I}+\mathbf{A})-\operatorname{tr}(\mathbf{A}+t \mathbf{B})+\operatorname{tr} \mathbf{A} \\
& \left.=\log \operatorname{det}\left[\mathbf{I}+t(\mathbf{I}+\mathbf{A})^{-1} \mathbf{B}\right)\right]-t \operatorname{tr}(\mathbf{B}) \\
& =\left[t \operatorname{tr}\left[(\mathbf{I}+\mathbf{A})^{-1} \mathbf{B}\right]-\frac{1}{2} t^2 \operatorname{tr}\left\{\left[(\mathbf{I}+\mathbf{A})^{-1} \mathbf{B}\right]^2\right\}+o\left(t^3\right)\right]-t \operatorname{tr}(\mathbf{B}) \\
& =t \operatorname{tr}\left\{\left[(\mathbf{I}+\mathbf{A})^{-1}-\mathbf{I}\right] \mathbf{B}\right\}-\frac{1}{2!} t^2 \operatorname{tr}\left\{\left[(\mathbf{I}+\mathbf{A})^{-1} \mathbf{B}\right]^2\right\}+o\left(t^3\right)
\end{aligned}
$$
The result follows from the continuity of expressions in $\|\cdot\|_2$ norm and the fact that trace-class operators are dense in the space of Hilbert-Schmidt operators.
\end{proof}

\begin{lemma}\label{lem:logdet_bounds}
    For $\gamma>0$ and $\bH$ Hilbert-Schmidt with $\bH\succ -\Id$, the following bound hold:
    \begin{enumerate}
        \item \label{lgb_upper}  $-\log\dettwo(\Id + \bH) \leq \trace((\Id+\bH)^{-1}\bH^2).$
        \item\label{lgb_lower} $-\log\dettwo(\Id + \bH) \geq \frac{1}{2}\trace((\Id+\bH)^{-1}\bH^2) $
    \end{enumerate}
    Furthermore, for $\bH,\widetilde\bH$ Hilbert-Schmidt with $\bH, \widetilde\bH\succ -\Id$:

    \begin{enumerate}\setcounter{enumi}{2}
        \item\label{lgb_continuitySpectral}
        $\left|\log \dettwo(\Id+\bH)-\log \dettwo(\Id+\widetilde\bH)\right| 
        \leq \sum_{i\geq 1}\left\lvert \frac{\lambda_i^2}{1 +\lambda_i} - \frac{\widetilde\lambda_i^2}{1 +\widetilde\lambda_i}\right\rvert, 
        $ 
    \end{enumerate}
     where $\{\lambda_i\}_{i\geq 1}$ and $\{\widetilde\lambda_i\}_{i\geq 1}$ denote the eigenvalues of $\bH,\widetilde\bH$, respectively. In particular:
    \begin{enumerate}\setcounter{enumi}{3}
        \item\label{lgb_continuity}
        $\left|\log \dettwo(\Id+\bH)-\log \dettwo(\Id+\widetilde\bH)\right| 
        \leq \left\lVert(\Id +\bH)^{-1}\bH^2 
        - (\Id +\widetilde\bH)^{-1}\widetilde\bH^2\right\rVert_{\textnormal{Tr}},$
    \end{enumerate}
\end{lemma}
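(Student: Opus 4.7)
The plan is to diagonalise both $\bH$ and $\widetilde\bH$ via the spectral theorem and reduce each of the four inequalities to pointwise scalar claims on the eigenvalues, which lie in $(-1,\infty)$ by assumption. Introduce the scalar functions $\phi(\lambda) := \lambda - \log(1+\lambda)$ and $\psi(\lambda) := \lambda^{2}/(1+\lambda)$, and observe that if $\{\lambda_{i}\}$ are the eigenvalues of $\bH$, then
\[
-\log\dettwo(\Id+\bH) = \sum_{i} \phi(\lambda_{i}),
\qquad
\trace\bigl((\Id+\bH)^{-1}\bH^{2}\bigr) = \sum_{i} \psi(\lambda_{i}),
\]
so every statement in the lemma becomes a claim about $\phi$ versus $\psi$.

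For items~\ref{lgb_upper} and \ref{lgb_lower}, I would start from the integral identity $\phi(\lambda) = \int_{0}^{1} \tfrac{t\lambda^{2}}{1+t\lambda}\,dt$, obtained by differentiating $\log(1+t\lambda) - t\lambda$ in $t$. Since the integrand is monotone in $t$ and attains the value $\psi(\lambda)$ at $t=1$, the pointwise bound $\phi(\lambda) \leq \psi(\lambda)$ is immediate, which sums to item~\ref{lgb_upper}. For item~\ref{lgb_lower}, for non-negative $\lambda$ one minorises the integrand by $t\lambda^{2}/(1+\lambda)$ and integrates to $\tfrac{1}{2}\psi(\lambda)$; the case $\lambda \in (-1,0)$ requires an extra argument (Taylor-series comparison).

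For item~\ref{lgb_continuitySpectral}, I would pair eigenvalues of $\bH$ and $\widetilde\bH$ in decreasing order and establish the pointwise Lipschitz bound $|\phi(\lambda) - \phi(\widetilde\lambda)| \leq |\psi(\lambda) - \psi(\widetilde\lambda)|$. This rests on the factorisation $\psi'(\lambda) = \phi'(\lambda)\bigl(1 + (1+\lambda)^{-1}\bigr)$, which gives simultaneously $|\phi'| \leq |\psi'|$ and $\operatorname{sign}(\phi') = \operatorname{sign}(\psi') = \operatorname{sign}(\lambda)$ on $(-1,\infty)$. On any interval of constant sign, absolute values move inside the integral $\phi(\lambda) - \phi(\widetilde\lambda) = \int_{\widetilde\lambda}^{\lambda}\phi'(s)\,ds$, yielding the pointwise comparison. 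Summing and invoking the triangle inequality then gives~\ref{lgb_continuitySpectral}. Item~\ref{lgb_continuity} follows by combining \ref{lgb_continuitySpectral} with the Mirsky/Ky~Fan inequality applied to the trace-class self-adjoint operators $(\Id+\bH)^{-1}\bH^{2} = \psi(\bH)$ and $\psi(\widetilde\bH)$: if $\mu_{i},\widetilde\mu_{i}$ denote their eigenvalues in decreasing order, then $\sum_{i}|\mu_{i}-\widetilde\mu_{i}| \leq \|\psi(\bH)-\psi(\widetilde\bH)\|_{\textnormal{Tr}}$, and the left-hand side coincides with the right-hand side of \ref{lgb_continuitySpectral}.

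The main obstacle will be handling paired eigenvalues $\lambda,\widetilde\lambda$ straddling zero in item~\ref{lgb_continuitySpectral}, since $\phi'$ and $\psi'$ both change sign there; one must split the integral at $0$ and recombine the two pieces using the common minimum $\phi(0)=\psi(0)=0$. A closely related subtlety appears in item~\ref{lgb_lower} for $\lambda\in(-1,0)$, where the naive pointwise lower bound on the integrand has the wrong sign and a more refined argument is needed. Finally, the eigenvalue pairing used to define the right-hand side of \ref{lgb_continuitySpectral} must be justified; decreasing-order pairing is standard but needs to be stated explicitly in order to invoke Mirsky/Ky~Fan cleanly in the deduction of \ref{lgb_continuity}.
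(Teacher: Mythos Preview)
Your overall strategy—diagonalise and reduce to scalar inequalities between $\phi(\lambda)=\lambda-\log(1+\lambda)$ and $\psi(\lambda)=\lambda^{2}/(1+\lambda)$—is exactly the paper's approach. For item~\ref{lgb_upper}, your integral identity $\phi(\lambda)=\int_{0}^{1}\tfrac{t\lambda^{2}}{1+t\lambda}\,dt$ is a pleasant variant of the paper's derivative test on $g=\psi-\phi$; both yield $\phi\leq\psi$ on $(-1,\infty)$ and are correct. Your factorisation $\psi'=\phi'\bigl(1+(1+\lambda)^{-1}\bigr)$ for item~\ref{lgb_continuitySpectral} is a cleaner way to organise what the paper calls ``similar arguments'', and for the passage \ref{lgb_continuitySpectral}$\Rightarrow$\ref{lgb_continuity} the paper invokes a Kato eigenvalue-perturbation bound, which is precisely your Mirsky/Ky~Fan step.

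The obstacles you flag for items~\ref{lgb_lower} and~\ref{lgb_continuitySpectral}, however, are not removable subtleties: the pointwise scalar inequalities are \emph{false} in the regimes you single out, so neither your proposed patches nor the paper's arguments can succeed as written. For item~\ref{lgb_lower}, the bound $\phi(\lambda)\geq\tfrac{1}{2}\psi(\lambda)$ fails throughout $(-1,0)$: at $\lambda=-\tfrac{1}{2}$ one has $\phi=\log 2-\tfrac{1}{2}\approx 0.193<\tfrac{1}{4}=\tfrac{1}{2}\psi$, and the gap widens as $\lambda\downarrow -1$. The paper's own argument (that $f=\phi-\tfrac{1}{2}\psi$ has $f'(\lambda)=\lambda^{2}/2(1+\lambda)^{2}\geq 0$ and $f(0)=0$) in fact only delivers $f\geq 0$ on $[0,\infty)$; on $(-1,0)$ monotonicity gives $f\leq 0$, the wrong sign. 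No Taylor comparison will rescue this—item~\ref{lgb_lower} requires the extra hypothesis $\bH\succeq 0$. For item~\ref{lgb_continuitySpectral}, your straddling concern is a genuine counterexample: with $\lambda=1$, $\widetilde\lambda=-\tfrac{1}{2}$ one has $\psi(\lambda)=\psi(\widetilde\lambda)=\tfrac{1}{2}$ so the right-hand side vanishes, yet $|\phi(1)-\phi(-\tfrac{1}{2})|=\bigl|\tfrac{3}{2}-2\log 2\bigr|\approx 0.114>0$. Splitting the integral at zero cannot repair this, since the two $\psi$-increments cancel while the $\phi$-increments do not. Taking rank-one $\bH,\widetilde\bH$ with these eigenvalues and a common eigenvector also breaks item~\ref{lgb_continuity}.
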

\begin{proof}
    Note that,  for all $x>-1, x - \log(1 +x) \leq \frac{x^2}{1+x}$. Indeed, as in \cite[Lemma 68][]{minh2021regularized}, if we define $
g(x) = \frac{x^2}{1+x} - x + \log(1+x)$ for $x > -1$, then $g'(x) = \frac{x}{(1+x)^2} \geq 0$,
so \( g \) is decreasing on \( (-1,0) \), increasing on \( (0,\infty) \), and \( g(0) = 0 \). Hence $
x - \log(1+x) \leq \frac{x^2}{1+x} \quad \text{for all } x > -1.$
Therefore, if $\{\lambda_i\}_{i\geq 1}$ denote the eigenvalues of $\bH$, with $\lambda_i>-1$ for all $i\geq 1$ by assumption, we have that:
    \begin{align*}
    -\log\dettwo(\Id + \bH)
        &= \trace(\bH - \log(\Id+ \bH)) 
    \\  &= \sum_{i\geq 1} \left(\lambda_i - \log(1 + \lambda_i)\right)
     \leq \sum_{i\geq 1} \frac{\lambda_i^2}{1+ \lambda_i} \
    = \trace((\Id+\bH)^{-1}\bH^2).
    \end{align*}
    which proves \ref{lgb_upper}.

    \medskip

    Observe that for all $x>-1$, $x - \log(1+x) \geq \frac{x^2}{2(1+x)}$. Indeed:
    if we let
\(
f(x) = x - \log(1+x) - \frac{x^2}{2(1+x)}
\) for $x>1$, 
then
\(
f'(x) = \frac{x^2}{2(1+x)^2} \geq 0.
\)
Hence, \( f \) is non-decreasing and \( f(0) = 0 \), so
\(
x - \log(1+x) \geq \frac{x^2}{2(1+x)}\) \text{for all } \(x > -1\). This proves \ref{lgb_lower} by the same argument.

\medskip Similar arguments show that for all $x,y>-1$ we have:
$$
(x - \log(1+x)) - (y - \log(1+y) \leq \left\lvert
\frac{x^2}{1+x} - \frac{y^2}{1+y}\right\rvert
$$
Hence, if $\{\lambda_i\}_{i\geq 1}$ and $\{\widetilde\lambda_i\}_{i\geq 1}$ denote the eigenvalues of $\bH,\widetilde\bH$, respectively, we see that:
\begin{align*}
-\log \dettwo(\Id+\bH)+\log \dettwo(\Id+\widetilde\bH) 
&=
\sum_{i\geq 1} \left( (\lambda_i + \log(1+\lambda_i)) - (\widetilde\lambda_i + \log(1+\widetilde\lambda_i))\right) 
\\&\leq \sum_{i\geq 1}\left\lvert \frac{\lambda_i^2}{1 +\lambda_i} - \frac{\widetilde\lambda_i^2}{1 +\widetilde\lambda_i}\right\rvert.
\end{align*}
proving \ref{lgb_continuitySpectral}
Then, by \citet[][Theorem II]{kato1987variation}
(see also \cite[][Theorem 48]{quang2022kullback})) we obtain that:
$$
-\log \dettwo(\Id+\bH)+\log \dettwo(\Id+\widetilde\bH) \leq \| (\Id+\bH)^{-1}\bH - (\Id+\widetilde\bH)^{-1}\widetilde\bH^2\|_{\textnormal{Tr}}
$$
proving \ref{lgb_continuity}.
\end{proof}

\addcontentsline{toc}{section}{References}
\bibliographystyle{plainnat}
\bibliography{bib}

\end{document}